\documentclass{amsart}

%
%

%
%
%
%
\usepackage{xparse}

\usepackage{amssymb,amsfonts,amsmath}

\usepackage[mathscr]{eucal}

\usepackage{amsbsy}

\usepackage{stmaryrd}

\usepackage{bm}

\usepackage{braket}  

\usepackage{scalerel}

\usepackage{mathtools}

\NewDocumentCommand{\Tr}{s}{\IfBooleanTF{#1}{\vphantom{\intercal}}{\intercal}}

\NewDocumentCommand{\InvHelper}{}{\scalebox{0.5}[1.0]{\( - \)}1}
\NewDocumentCommand{\Inv}{s}{\IfBooleanTF{#1}{\vphantom{\InvHelper}}{\InvHelper}}
\NewDocumentCommand{\PinHelper}{}{\dagger}
\NewDocumentCommand{\Pin}{s}{\IfBooleanTF{#1}{\vphantom{\PinHelper}}{\PinHelper}} 

\makeatletter
\def\@eathelper#1#2\end@eath#3{%
	\if\relax\detokenize{#2}\relax%
	#3%
	\else%
	#1{\@eathelper#2\end@eath{#3}}%
	\fi}
\newcommand\ea[2]{%
	\@eathelper#1\relax\end@eath{#2}}
\makeatother

\NewDocumentCommand{\Vc}{O{} m !g t' t"}{%
  \bm{#1{\mathbf{\MakeLowercase{#2}}}}%
  \IfValueT{#3}{_{#3}}%
  \IfBooleanTF{#4}{^{\Tr}}{%
  	\IfBooleanT{#5}{^{\Tr*}}}%
}
\RenewDocumentCommand{\Vc}{O{} m !g t' t"}{%
  \bm{\ea{#1}{\mathbf{\MakeLowercase{#2}}}}%
  \IfValueT{#3}{_{#3}}%
  \IfBooleanTF{#4}{^{\Tr}}{%
    \IfBooleanT{#5}{^{\Tr*}}}%
}

\NewDocumentCommand{\Mx}{O{} m !g t' t"}{
  \bm{#1{\mathbf{\MakeUppercase{#2}}}}%
  \IfValueT{#3}{_{#3}}%
  \IfBooleanTF{#4}{^{\Tr}}{%
    \IfBooleanT{#5}{^{\Tr*}}}%
}
\RenewDocumentCommand{\Mx}{O{} m !g t' t"}{
  \bm{\ea{#1}{\mathbf{\MakeUppercase{#2}}}}%
  \IfValueT{#3}{_{#3}}%
  \IfBooleanTF{#4}{^{\Tr}}{%
    \IfBooleanT{#5}{^{\Tr*}}}%
}

\NewDocumentCommand{\Tn}{O{} m !g}{%
  \boldsymbol{#1{\mathscr{\MakeUppercase{#2}}}}%
  \IfValueT{#3}{_{#3}}%
}
\RenewDocumentCommand{\Tn}{O{} m !g}{%
  \boldsymbol{\ea{#1}{\mathscr{\MakeUppercase{#2}}}}%
  \IfValueT{#3}{_{#3}}%
}

\NewDocumentCommand{\Tm}{O{} m !g t' t"}{
  \bm{#1{\mathbf{\MakeUppercase{#2}}}_{(#3)}}%
  \IfBooleanTF{#4}{^{\Tr}}{%
    \IfBooleanT{#5}{^{\Tr*}}}%
}
\RenewDocumentCommand{\Tm}{O{} m !g t' t"}{
  \bm{\ea{#1}{\mathbf{\MakeUppercase{#2}}}_{(#3)}}%
  \IfBooleanTF{#4}{^{\Tr}}{%
	\IfBooleanT{#5}{^{\Tr*}}}%
}

\NewDocumentCommand{\tttLRexp}{mmmmmmm}{%
  \IfBooleanTF{#1}{%
    \IfBooleanTF{#2}{%
      \IfBooleanTF{#3}{%
        \IfBooleanTF{#4}%
        {\Biggl#5 #7 \Biggr#6}
        {\biggl#5 #7 \biggr#6}}%
      {\Bigl#5 #7 \Bigr#6}}%
    {\bigl#5 #7 \bigr#6}}%
  {#5 #7 #6}%
}
      
\NewDocumentCommand{\prn}{ssssm}{\tttLRexp{#1}{#2}{#3}{#4}{(}{)}{#5}}
\NewDocumentCommand{\Prn}{m}{\left( #1 \right)}

\NewDocumentCommand{\ang}{ssssm}{\tttLRexp{#1}{#2}{#3}{#4}{\langle}{\rangle}{#5}}
\NewDocumentCommand{\Ang}{m}{\left\langle #1 \right\rangle}

\NewDocumentCommand{\crly}{ssssm}{\tttLRexp{#1}{#2}{#3}{#4}{\{}{\}}{#5}}

\NewDocumentCommand{\sqr}{ssssm}{\tttLRexp{#1}{#2}{#3}{#4}{\lbrack}{\rbrack}{#5}}
\NewDocumentCommand{\Sqr}{m}{\left\lbrack #1 \right\rbrack}

\NewDocumentCommand{\ceil}{ssssm}{\tttLRexp{#1}{#2}{#3}{#4}{\lceil}{\rceil}{#5}}

\NewDocumentCommand{\floor}{ssssm}{\tttLRexp{#1}{#2}{#3}{#4}{\lfloor}{\rfloor}{#5}}

\NewDocumentCommand{\dsqr}{ssssm}{\tttLRexp{#1}{#2}{#3}{#4}{\llbracket}{\rrbracket}{#5}}

\NewDocumentCommand{\nrm}{ssssm}{\tttLRexp{#1}{#2}{#3}{#4}{\|}{\|}{#5}}
\NewDocumentCommand{\Nrm}{m}{\left\| #1 \right\|}

\NewDocumentCommand{\abs}{ssssm}{\tttLRexp{#1}{#2}{#3}{#4}{\vert}{\vert}{#5}}

\NewDocumentCommand{\had}{}{\ew}

\NewDocumentCommand{\ttm}{O{k}}{\mathop{\times_{\mspace{-2mu}#1}}}

\NewDocumentCommand{\ttv}{O{k}}{\mathbin{\bar{\smash{\times}\vphantom{\ast}}_{\mspace{-2mu}#1}}}

\DeclareMathOperator{\diag}{diag}

\DeclareMathOperator{\trace}{trace}

\DeclareMathOperator{\Prob}{Prob}

\DeclareDocumentCommand{\bigO}{}{\mathcal{O}}

\NewDocumentCommand{\qtext}{m}{\quad\text{#1}\quad}

\NewDocumentCommand{\Real}{}{\mathbb{R}}

\NewDocumentCommand{\Natural}{}{\mathbb{N}}

\NewDocumentCommand{\dsub}{m m t_ m}{#1_{#2_{#4}}}

\NewDocumentCommand{\tttMlist}{m m m m m m m}{%
  #1_{1} #2 %
  \IfBooleanF{#5}{#1_{2} #2} %
  \IfBooleanTF{#6}{ %
    #1_{3} %
    \IfBooleanT{#7}{#2 #1_{4} } %
  }{ %
    \IfBooleanTF{#3}{\cdots}{\dots} #2 #1_{#4} %
  }%
}

\NewDocumentCommand{\tttRMlist}{m m m m m m m}{%
  \IfBooleanTF{#6}%
  {\IfBooleanT{#7}{#1_{4} #2} #1_{3} #2 #1_{2} #2 #1_{1}}%
  {#1_{#4} \IfBooleanF{#5}{#2 #1_{#4-1}} #2 \cdots  #2 #1_{1}}%
}

\NewDocumentCommand{\tttSlist}{m m m m m m}{%
  #1_{1} %
  #2 \IfBooleanTF{#3}{\cdots}{\dots}
  #2 #1_{#5-1} %
  \IfValueT{#6}{#2 #6} %
  #2 #1_{#5+1} %
  #2 \IfBooleanTF{#3}{\cdots}{\dots}
  #2 #1_{#4} %
}

\NewDocumentCommand{\tttRSlist}{m m m m m m}{%
  #1_{#4} %
  #2 \IfBooleanTF{#3}{\cdots}{\dots}
  #2 #1_{#5+1} %
  \IfValueT{#6}{#2 #6} %
  #2 #1_{#5-1} %
  #2 \IfBooleanTF{#3}{\cdots}{\dots}
  #2 #1_{1} %
}

\NewDocumentCommand{\miwc}{s s t! O{i} O{d}}{%
  \tttMlist{#4}{,}{\BooleanFalse}{#5}{#3}{#1}{#2}%
}

\NewDocumentCommand{\siwc}{O{k} O{i} O{d} g}{%
  \tttSlist{#2}{,}{\BooleanFalse}{#3}{#1}{#4}%
}

\NewDocumentCommand{\minc}{s s t! O{i} O{d}}{%
  \tttMlist{#4}{}{\BooleanTrue}{#5}{#3}{#1}{#2}%
}

\NewDocumentCommand{\sinc}{O{k} O{i} O{d} g}{%
  \tttSlist{#2}{}{\BooleanTrue}{#3}{#1}{#4}%
}

%
%
%
%

%
%
%

\usepackage{xcolor}

\usepackage{tikz}
\usetikzlibrary{calc}
\usetikzlibrary{positioning}
\usepackage{pgfplots}
\usepackage{pgfplotstable}
\usetikzlibrary{pgfplots.colorbrewer}
\pgfplotsset{compat=1.17}

\usepackage{graphicx}

\usepackage[font=footnotesize,justification=Centering,singlelinecheck=false,margin=0pt]{subfig}

\usepackage{enumitem}

\usepackage{doi}
\usepackage[round,colon,authoryear%
]{natbib}

\usepackage{newfloat}

\usepackage{hyperref}
\usepackage[hyperpageref]{backref}
\definecolor{hrefcol}{HTML}{045a8d}
\hypersetup{
  colorlinks=true,
  allcolors=hrefcol,
}

\usepackage{algorithm, algpseudocode}

\usepackage[capitalize,nameinlink]{cleveref}

\crefformat{equation}{\textup{#2(#1)#3}}
\crefrangeformat{equation}{\textup{#3(#1)#4--#5(#2)#6}}
\crefmultiformat{equation}{\textup{#2(#1)#3}}{ and \textup{#2(#1)#3}}
{, \textup{#2(#1)#3}}{, and \textup{#2(#1)#3}}
\crefrangemultiformat{equation}{\textup{#3(#1)#4--#5(#2)#6}}%
{ and \textup{#3(#1)#4--#5(#2)#6}}{, \textup{#3(#1)#4--#5(#2)#6}}{, and \textup{#3(#1)#4--#5(#2)#6}}

\Crefformat{equation}{#2Equation~\textup{(#1)}#3}
\Crefrangeformat{equation}{Equations~\textup{#3(#1)#4--#5(#2)#6}}
\Crefmultiformat{equation}{Equations~\textup{#2(#1)#3}}{ and \textup{#2(#1)#3}}
{, \textup{#2(#1)#3}}{, and \textup{#2(#1)#3}}
\Crefrangemultiformat{equation}{Equations~\textup{#3(#1)#4--#5(#2)#6}}%
{ and \textup{#3(#1)#4--#5(#2)#6}}{, \textup{#3(#1)#4--#5(#2)#6}}{, and \textup{#3(#1)#4--#5(#2)#6}}

\crefdefaultlabelformat{#2\textup{#1}#3}

\makeatletter
\newcounter{algorithmicH}%
\let\oldalgorithmic\algorithmic
\renewcommand{\algorithmic}{%
  \stepcounter{algorithmicH}%
  \oldalgorithmic}%
\renewcommand{\theHALG@line}{ALG@line.\thealgorithmicH.\arabic{ALG@line}}
\makeatother

\newtheorem{theorem}{Theorem}[section]
\newtheorem*{nnthm}{Theorem} %
\newtheorem{lemma}[theorem]{Lemma}
\newtheorem{keyprop}[theorem]{Proposition}
\newtheorem{proposition}[theorem]{Proposition}
\newtheorem{corollary}[theorem]{Corollary}

\theoremstyle{definition}
\newtheorem{definition}[theorem]{Definition}

\theoremstyle{remark}
\newtheorem{example}[theorem]{Example}
\usepackage{tcolorbox}
\tcbuselibrary{breakable,skins,raster}
\definecolor{thmlight}{HTML}{b2df8a}
\definecolor{thmdark}{HTML}{33a02c}
\tcbset{
  colback=thmlight!50,
  colframe=thmdark,
  enhanced jigsaw,
}

\tcolorboxenvironment{theorem}{shrink tight,extrude by=2mm,before skip=4mm,after skip=4mm}
\tcolorboxenvironment{nnthm}{shrink tight,extrude by=2mm,before skip=4mm,after skip=4mm}
\tcolorboxenvironment{keyprop}{shrink tight,extrude by=2mm,before skip=4mm,after skip=4mm}

\newtcolorbox{algtcolorbox}[2][]{
	enhanced,
	colback=blue!10!white, %
	colframe=blue!50!black,
	shrink tight,
	extrude by=3mm,
	title    = {#2},
	colbacktitle=blue!10!white,
	before title={\vspace{1mm}}
	#1,
}

\DeclareFloatingEnvironment[
listname={List of Algorithms},
name=Algorithm,
placement=t,
within=none]{myalg}

\captionsetup[myalg]{
	singlelinecheck = false,
	font 		    = {bf},
	margin          = 3mm,
	position 	    = top,
	skip            = .5mm,
}

\crefname{myalg}{Algorithm}{Algorithms}

\newenvironment{myalgo}[2]
{\def\algormiddle{
		\vspace{-8pt}
		\caption{#2}\label{#1}
		\vspace{-8pt}}%
	\begin{myalg}
		\begin{algorithmic}}
		{\end{algorithmic}
		\algormiddle	
\end{myalg}}

\renewenvironment{myalgo}[2]{%
  \begin{algorithm}
    \caption{#2}\label{#1}\footnotesize
    \begin{algorithmic}[1]
}{
    \end{algorithmic}
  \end{algorithm}
}

\renewenvironment{myalgo}[2]
{\begin{myalg}\footnotesize\begin{algtcolorbox}{\caption{#2}\label{#1}}
\begin{algorithmic}[1]}
{\end{algorithmic}\end{algtcolorbox}\end{myalg}}

\usepackage{fixme}
\fxsetup{
  status=draft,
  nomargin,
  inline,
  theme=color,
  silent,
}
\fxsetup{marginface=\linespread{1}\footnotesize}
\FXRegisterAuthor{tk}{tke}{{\textbf{TK}}}
\FXRegisterAuthor{jk}{jke}{{\textbf{JK}}}
\FXRegisterAuthor{jp}{jpe}{{\textbf{JP}}}%

%
%
%

%
%
%

%
\DeclareMathOperator{\opvec}{vec}

\DeclareMathOperator{\sym}{sym}

\DeclareMathOperator*{\avg}{avg}

\newcommand{\E}{{\mathbb{E}}}

\NewDocumentCommand{\sop}{O{d} m}{#2^{\otimes #1}}

\newcommand{\R}{{\mathbb{R}}}

\NewDocumentCommand{\W}{}{\Tn{W}}

\NewDocumentCommand{\U}{}{\Tn{U}}

\NewDocumentCommand{\V}{}{\Tn{V}}

\NewDocumentCommand{\M}{O{d}}{\Tn{M}{#1}}
\RenewDocumentCommand{\M}{O{d} O{}}{\Tn[#2]{M}^{(#1)}}

\NewDocumentCommand{\tM}{O{d}}{\Tn[\tilde]{M}{#1}}
\RenewDocumentCommand{\tM}{O{d}}{\Tn[\tilde]{M}^{(#1)}}

\NewDocumentCommand{\Mest}{O{d}}{\Tn[\widehat]{M}{#1}}
\RenewDocumentCommand{\Mest}{O{d}}{\Tn[\widehat]{M}\vphantom{\M}^{(#1)}}

\NewDocumentCommand{\T}{O{d}}{\Tn{T}{#1}}
\RenewDocumentCommand{\T}{O{d} o}%
{\IfValueTF{#2}{\Tn[#2]{T}^{(#1)}}{\Tn{T}^{(#1)}}}

\NewDocumentCommand{\Test}{O{d}}{\Tn[\widehat]{T}{#1}}
\RenewDocumentCommand{\Test}{O{d}}{{\Tn[\widehat]{T}}\vphantom{\T}^{(#1)}}

\NewDocumentCommand{\Testaug}{O{d}}%
{\Tn[\widehat\bar]{T}\vphantom{\T}^{(#1)}}

\DeclareDocumentCommand{\u}{o}{%
	\IfValueTF{#1}{\Vc[#1]{u}}{\Vc{u}}}

\DeclareDocumentCommand{\v}{o}{%
	\IfValueTF{#1}{\Vc[#1]{v}}{\Vc{v}}}

\NewDocumentCommand{\x}{o}{%
	\IfValueTF{#1}{\Vc[#1]{x}}{\Vc{x}}}

\DeclareDocumentCommand{\a}{}{\Vc{a}}%

\NewDocumentCommand{\gmdp}{m}{\alpha^{(#1)}}

\NewDocumentCommand{\mmdp}{O{j} m}{\beta_{#1}^{(#2)}}

\NewDocumentCommand{\umdp}{O{i} m}{\beta_{#1}^{(#2)}}

\NewDocumentCommand{\N}{}{\mathcal{N}}

\NewDocumentCommand{\m}{o}{%
\IfValueTF{#1}{\Vc[#1]{\mu}}{\Vc{\mu}}}
\NewDocumentCommand{\tm}{}{\Vc[\tilde]{\mu}}

\NewDocumentCommand{\C}{o}{%
\IfValueTF{#1}{\Mx[#1]{\Sigma}}{\Mx{\Sigma}}}

\NewDocumentCommand{\tC}{}{\Mx[\tilde]{\Sigma}}

\NewDocumentCommand{\Csqrt}{}{\Mx{H}}
\NewDocumentCommand{\tCsqrt}{}{\Mx[\tilde]{H}}

\NewDocumentCommand{\RV}{t' t" o}{Z\IfBooleanT{#1}{^{\Tr}}\IfBooleanT{#2}{^{\Tr*}}\IfValueT{#3}{_{#3}}}
\NewDocumentCommand{\tRV}{t' t" o}{\tilde{Z}\IfBooleanT{#1}{^{\Tr}}\IfBooleanT{#2}{^{\Tr*}}\IfValueT{#3}{_{#3}}}

\NewDocumentCommand{\maug}{}{\Vc[\bar]{\mu}}

\NewDocumentCommand{\Caug}{}{\Mx[\bar]{\Sigma}}

\NewDocumentCommand{\ARV}{s}{\IfBooleanT{#1}{\bar}Y}

\NewDocumentCommand{\Id}{}{\Mx{I}}

\NewDocumentCommand{\FD}{mm}{\nabla_{\!#2}\, #1}

\NewDocumentCommand{\ew}{}{\mathbin{\ast}} %
\NewDocumentCommand{\ewpow}{m}{\Sqr{#1}} %

\makeatletter
\let\ex\expandafter
\newcommand{\newcommandstring}[2]{%
	\expandafter\newcommand\csname #1\endcsname{#2}}
\newcounter{@lettercounter}
\def\letter{\alph{@lettercounter}}
\def\Letter{\Alph{@lettercounter}}
\newcommand{\lettercommand}[2]{%
\def\defhelper##1{%
\newcommandstring{#1}{%
\setcounter{@lettercounter}{##1} #2}}
\setcounter{@lettercounter}{0}
\loop
\stepcounter{@lettercounter}
\edef\@curletterval{\the@lettercounter}
\ex\defhelper\ex{\@curletterval}
\ifnum \value{@lettercounter}<26
\repeat}
\makeatother

\lettercommand{c\Letter}{\mathcal{\Letter}}

\lettercommand{bb\Letter}{\mathbb{\Letter}}

\lettercommand{bf\letter}{\mathbf{\letter}}

\lettercommand{bf\Letter}{\mathbf{\Letter}}

\lettercommand{sf\letter}{\mathsf{\letter}}
\lettercommand{sf\Letter}{\mathsf{\Letter}}

\newcommand{\alignindent}[1]{\hspace{#1}&\hspace{-#1}}

\NewDocumentCommand{\dblfact}{ m !o !O{#2-1}}
{\IfNoValueTF{#2}%
	{\frac{(2#1)!}{#1!2^{#1}}}%
	{\frac{(#2)!}{(#1)!2^{#1}}}}

\newcommand\numberthis{\stepcounter{equation}\tag{\theequation}}

\NewDocumentCommand{\bigger}{sss}{%
\IfBooleanTF{#1}{%
	\IfBooleanTF{#2}{%
		\IfBooleanTF{#3}%
		{\Biggl}
		{\biggl}}%
	{\Bigl}}%
{\bigl}}

\NewDocumentCommand{\PsiFn}{O{d}mmmm}%
{\Psi^{(#1)}\prn*{{#2}, \!{#3} \,, {#4}, \!{#5}}}

\NewDocumentCommand{\fGMM}{O{d}}
{F^{(#1)}}
\NewDocumentCommand{\fdeb}{O{d}}
{\varGamma^{(#1)}}

\algrenewcommand\algorithmicrequire{\textbf{Input:}}     
\algrenewcommand\algorithmicensure{\textbf{Output:}}

\NewDocumentCommand{\tB}{e{^}}
{\IfValueTF{#1}{\Mx[\tilde]{B}\vphantom{\Mx{B}}^{#1}}{\Mx[\tilde]{B}}}

\NewDocumentCommand{\sqrenum}{ssss t! m m}
{\edef\helpersqrenum{\IfBooleanT{#1}{*}\IfBooleanT{#2}{*}\IfBooleanT{#3}{*}\IfBooleanT{#4}{*}}
\expandafter\sqr\helpersqrenum{\tttMlist{#6}{,}{\BooleanFalse}{#7}{#5}{\BooleanFalse}{\BooleanFalse}}}
	
\NewDocumentCommand{\Sqrenum}{t! m m}
{\Sqr{\tttMlist{#2}{,\,}{\BooleanFalse}{#3}{#1}{\BooleanFalse}{\BooleanFalse}}}

%
%
%
%

%
%
%

%
%
%
\usepackage{silence}
\WarningsOff[latexfont]
\WarningFilter{latex}{Marginpar}

\hfuzz=30pt

\hbadness=99999 

\makeatletter
\def\@textbottom{\vskip \z@ \@plus 100pt}
\let\@texttop\relax
\makeatother

%
%
%
%

\begin{document}

\title[Tensor Moments of Gaussian Mixture Models]%
{Tensor Moments of Gaussian Mixture Models:\\
  Theory and Applications}

\author[J.~M.~Pereira]{Jo\~{a}o M. Pereira}
 \address{Oden Institute, University of Texas at Austin}
\email{joao.pereira@utexas.edu}

\author[J.~Kileel]{Joe Kileel}
 \address{Department of Mathematics and Oden Institute, University of Texas at Austin}
 \email{jkileel@math.utexas.edu}

\author[T.~G.~Kolda]{Tamara G. Kolda}
\address{MathSci.ai, Dublin, California}
\email{tammy.kolda@mathsci.ai}

\subjclass[2020]{Primary
  62H30%
  ; Secondary
  15A69%
}
\keywords{Gaussian Mixture Model, Symmetric Tensors, Higher-Order Moments}
\date{\today}

\begin{abstract}
  Gaussian mixture models (GMMs) are fundamental
  tools in statistical and data sciences.  
  We study the moments of multivariate Gaussians and GMMs.
  The $d$-th moment of an $n$-dimensional random variable
  is a symmetric $d$-way tensor of size $n^d$,
  so working with moments naively is assumed to be prohibitively expensive
  for $d>2$ and larger values of $n$.  
  In this work, we develop theory and numerical methods for
  \emph{implicit computations} with moment tensors of GMMs,
  reducing the computational and storage costs to $\bigO(n^2)$ and $\bigO(n^3)$,
  respectively, for general covariance matrices, and to $\cO(n)$ and $\cO(n)$, respectively, for diagonal ones.
  We derive concise analytic expressions for the moments in terms of symmetrized tensor products, relying on the correspondence between symmetric tensors and homogeneous polynomials, and combinatorial identities involving Bell polynomials.
  The primary application of this theory is to estimating GMM parameters (means and covariances)
  from a set of observations,
 when formulated as a moment-matching optimization problem.
  If there is a known and common covariance matrix, we also show
  it is possible
  to debias the data observations, in which case the problem of estimating the unknown
  means reduces to symmetric CP tensor decomposition.
  Numerical results validate and illustrate the numerical efficiency of our approaches.
  This work potentially opens the door to the competitiveness of the method of moments as compared to expectation maximization methods for parameter estimation of GMMs.
\end{abstract}

\maketitle

\section{Introduction}
\label{sec:introduction}

The Gaussian mixture model (GMM)
is a fundamental
tool in statistical and data sciences.
The Gaussian distribution, also known as the normal distribution,
is the limiting distribution of the average of any sequence of  independent random variables (under mild conditions).
A finite convex combination (i.e., a mixture) of Gaussian distributions is a GMM.
Utilization of GMMs is ubiquitous in density approximation,
clustering, and anomaly detection, finding application
in domains such as image processing,
biomedicine, financial forecasting, text analytics,  process monitoring,
and much more.

In this work, we consider the characterization of the
moments of multivariate GMMs, with the primary aim
of determining the parameters of a GMM by
matching sample and model moments.
In contrast to the often-used expectation maximization (EM)
method which does maximimum likelihood estimation \citep{HaTiFr09,Mu12,XuJo96},
the method of moments may have superior theoretical properties \citep{LiBa93,HsKa13,GeHuKa15,BaDiJiKa20,KhMaMo21,Kane_2021}.

The main difficulty with moments is that 
a $d$th-order moment can be prohibitive to compute and store
since it involves the expectations of many products of the coordinates of a random variable:
the $d$th moment of an $n$-dimensional random variable
is a symmetric $d$-way tensor of size $n^d$.
In this work, we provide a novel explicit formulation for the GMM
moment in terms of symmetrized tensor products.
Using the relationship between symmetric tensors and homogeneous polynomials,
we show that this formulation enables a reduction in the computational
and storage complexity of the method of moments.
We can compute the norm of the difference of moments and gradients with respect
to the parameters without ever forming the moments explicitly.
This makes the method of moments competitive with EM
because the computational and storage complexities
are now of the same order.

There are other implications of these results as well. We can now easily compare
GMM moments to each other or to observed data, opening the door to alternative
solution quality metrics, regardless of the method of solution.
Additionally, we show that it is possible to debias the moments
for any data contaminated with known white noise.
Finally, the formulas for moments of (single) Gaussians
and the techniques in this paper might enable the use of the method of moments for fitting Gaussian Processes.
This might allow
 applications to uncertainty quantification and generative models.

\subsection{Gaussian and Gaussian Mixture Models}
\label{sec:gauss-gauss-mixt}
We say that a random variable $X \in \R^n$ is from a multivariate Gaussian distribution if its probability density function (pdf) is
\begin{displaymath}
  f(X) = \frac{1}{(2\pi)^{n/2} |\C|^{1/2}}
  \exp \Prn{ -\frac{1}{2} (X-\m)^{\Tr} \C^{-1}(X-\m) }
\end{displaymath}
where $\m \in \R^n$ is the mean and
$\C \in \R^{n \times n}$ is the symmetric positive definite covariance matrix.
We denote this as
\begin{equation*}
    X \sim \N(\m,\C)
\end{equation*}
If $\C$ is a diagonal matrix, i.e., $\C = \diag(\sigma_1^2,\dots,\sigma_n^2)$, then we say that the Gaussian is \emph{diagonal} or \emph{axis-aligned}.
If, additionally, $\C = \sigma^2 \Id$, where $\Id$ denotes then $n \times n$ identity matrix, we say that the Gaussian is \emph{spherical} or \emph{isotropic}.

A finite convex combination of multiple Gaussians forms a GMM.
We denote
a random variable $X \in \R^n$ from a mixture of $m$ Gaussian \emph{components} as
\begin{equation}%
  \notag
  X \sim \sum_{j=1}^m \lambda_j \; \N( \m{j}, \C{j} ),
\end{equation}
where 
$\lambda_j \in [0,1]$ is the probability of
drawing from the $j$-th component $\N(\m{j},\C{j})$,
and $\sum_{j=1}^m \lambda_j = 1$.

\subsection{Moments of Gaussians and GMMs}
\label{sec:moments-gauss-gmms}
To explain higher-order moments, we first recall that the tensor product, denoted by $\otimes$, is the
higher-order analogue of the vector outer product.
We use the shorthand
$\sop{\v} = \v \otimes \v \otimes \cdots \otimes \v$ ($d$ times) to denote
the tensor product of an object with itself $d$ times.
For example, if $\Tn{V} = \sop[3]{\v}$, then $\Tn{V}(i,j,k) = v_i v_j v_k$; see \cref{fig:sop}.
\begin{figure}
  \centering
  \includegraphics{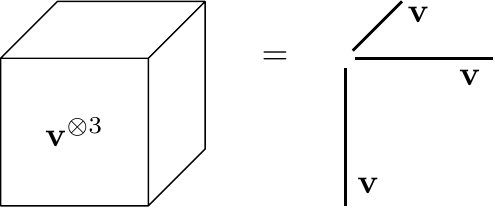}
  \caption{Three-way symmetric outer product.}
  \label{fig:sop}
\end{figure}
The $d$th moment of a random variable $X$, which we denote by $\M$, is a $d$-way symmetric tensor corresponding to the expected value of the $d$-way tensor product of the random variable with itself, i.e., 
\begin{displaymath}
  \M = \E(\sop{X}).
\end{displaymath}
The $d$th moment is useful for understanding higher-order dependencies of the random coordinates; 
for example, $\M[3](i,j,k) = \E(X_i X_j X_k)$.
The first moment is the mean.
If the data is centered, the second moment is the covariance.

A first result, previewed below, is an explicit formulation
for the $d$th-order moment of a GMM in terms of the model parameters.
The (single) Gaussian result corresponds to $m=1$.  

\vspace{0.5em}

\begin{nnthm}[{Preview of \cref{thm:gaussian_moments,prop:GMMbiasedmoments}}]
  If $X \sim \sum_{j=1}^m \lambda_j \; \N( \m{j}, \C{j} )$, then
  its $d$th moment $\M = \E(\sop{X})$ for $d\geq2$ is given by
  \begin{equation}
    \label{eq:Md_definition}
    \M = \sum_{j=1}^m \sum_{k=0}^{\floor{d/2}} 
    \lambda_j \, C_{d,k} \;
    \sym\prn**{ \sop[(d-2k)]{\m{j}} \otimes \sop[k]{\C{j}}}.
  \end{equation}
  with $C_{d,k}= \binom{d}{2k} \dblfact{k}$.
\end{nnthm}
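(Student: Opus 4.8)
The plan is to reduce the tensor identity to a one-dimensional Gaussian moment computation through the symmetric-tensor/homogeneous-polynomial dictionary. I first reduce to a single component. Conditioning on the latent component index $J$ with $\Prob(J=j)=\lambda_j$ and $X \mid (J=j) \sim \N(\m{j},\C{j})$, the tower property and linearity of expectation give $\E(\sop{X}) = \sum_{j=1}^{m}\lambda_j\,\E\bigl(\sop{X}\mid J=j\bigr)$. Since the claimed right-hand side is itself the $\lambda_j$-weighted sum of the per-component expressions, it suffices to prove the formula for a single Gaussian $X\sim\N(\m,\C)$ and then superpose over $j$ (this matches the split into a single-Gaussian theorem and a GMM proposition).

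Next I use that a symmetric $d$-tensor $\Tn{A}$ is determined by its associated degree-$d$ form $\x \mapsto \Ang{\Tn{A},\sop{\x}}$: working over $\R$, this assignment is a linear isomorphism onto the homogeneous degree-$d$ polynomials (by polarization), so two symmetric tensors coincide if and only if they agree after contraction against $\sop{\x}$ for every $\x\in\R^n$. Both $\M$ and the claimed right-hand side are symmetric, so I only need to match these contractions. On the left, pulling the contraction inside the expectation gives $\Ang{\E(\sop{X}),\sop{\x}} = \E\bigl((\x^{\Tr}X)^d\bigr)$, and $\x^{\Tr}X$ is a scalar Gaussian with mean $\x^{\Tr}\m$ and variance $\x^{\Tr}\C\x$, so this contraction is exactly a univariate raw moment.

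On the right, contracting against the fully symmetric $\sop{\x}$ removes the symmetrizer: because $\sym$ is the averaging projection and $\sop{\x}$ is invariant under coordinate permutations, $\Ang{\sym(\Tn{A}),\sop{\x}} = \Ang{\Tn{A},\sop{\x}}$. Hence each summand contracts to $\Ang{\sop[(d-2k)]{\m}\otimes\sop[k]{\C},\,\sop{\x}} = (\x^{\Tr}\m)^{d-2k}(\x^{\Tr}\C\x)^{k}$, and the whole identity collapses to the classical univariate raw-moment formula for $W=\x^{\Tr}X$, namely $\E(W^{d}) = \sum_{k=0}^{\floor{d/2}}\binom{d}{2k}\dblfact{k}\,(\x^{\Tr}\m)^{d-2k}(\x^{\Tr}\C\x)^{k}$. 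I would establish this scalar identity by expanding the moment generating function $\E(e^{tW}) = \exp\bigl((\x^{\Tr}\m)\,t + \tfrac{1}{2}(\x^{\Tr}\C\x)\,t^{2}\bigr)$ and extracting $d!$ times the coefficient of $t^{d}$; there the factor $\dblfact{k} = (2k-1)!!$ appears as $\frac{d!}{(d-2k)!\,k!\,2^{k}}\big/\binom{d}{2k}$, counting the pairings of the $2k$ slots occupied by the $k$ quadratic factors.

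The step most in need of care is the tensor/polynomial dictionary: I must confirm that agreement of the degree-$d$ forms genuinely forces equality of the symmetric tensors (this uses only that we work over $\R$ and that both sides are symmetric) and that the constant $C_{d,k}=\binom{d}{2k}\dblfact{k}$ is reproduced exactly. A more elementary alternative avoids polynomials entirely: write $X=\m+Y$ with $Y\sim\N(\Vc{0},\C)$ centered, expand $\sop{X}=(\m+Y)^{\otimes d}$, drop the odd-order terms by symmetry of $Y$, and apply Isserlis'/Wick's theorem in the form $\E(\sop[2k]{Y}) = \dblfact{k}\,\sym(\sop[k]{\C})$ to each of the $\binom{d}{2k}$ choices of which $2k$ of the $d$ slots carry the centered factor. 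This route exhibits both combinatorial factors transparently, but it forces one to track how the interleaved $\m$- and $Y$-slots resymmetrize (using idempotence of $\sym$ to absorb the inner symmetrizer); since the polynomial argument sidesteps that bookkeeping, I would take it as the primary proof and keep the Wick expansion as a cross-check.
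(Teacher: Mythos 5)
Your proposal is correct and follows essentially the same route as the paper: the paper likewise reduces the mixture to a single Gaussian by conditioning on a latent component index, contracts both sides against $\sop{\Vc{z}}$ using the symmetric-tensor/homogeneous-polynomial bijection and the fact that $\sym$ is an orthogonal projection, and invokes the univariate raw-moment formula for $\Vc{z}^{\Tr}X \sim \N(\Vc{z}^{\Tr}\m,\,\Vc{z}^{\Tr}\C\Vc{z})$. The only difference is cosmetic --- you derive the scalar identity from the moment generating function and sketch a Wick-expansion cross-check, whereas the paper cites the univariate formula directly --- so there is nothing to fix.
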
 

\vspace{0.5em}

\noindent
Here,  $\sym(\cdot)$ creates a symmetric version of the given tensor; e.g.,
for a matrix $\Mx{A} \in \R^{n \times n}$,
$\sym(\Mx{A}) = \frac{1}{2}(\Mx{A}' + \Mx{A})$.
To the best of our knowledge, this is the most general and succinct expression
of the moment tensor in terms of the parameters of a Gaussian or GMM.
For example, the third moment is
\begin{displaymath}
\M[3] = \sum_{j=1}^m \lambda_j \prn***{
  \sop[3]{\m_j} + 3\sym\prn{\m_j \otimes \C_j}},  
\end{displaymath}
as illustrated in \cref{fig:moment3}.

\begin{figure}
  \centering
  \includegraphics[scale=0.8]{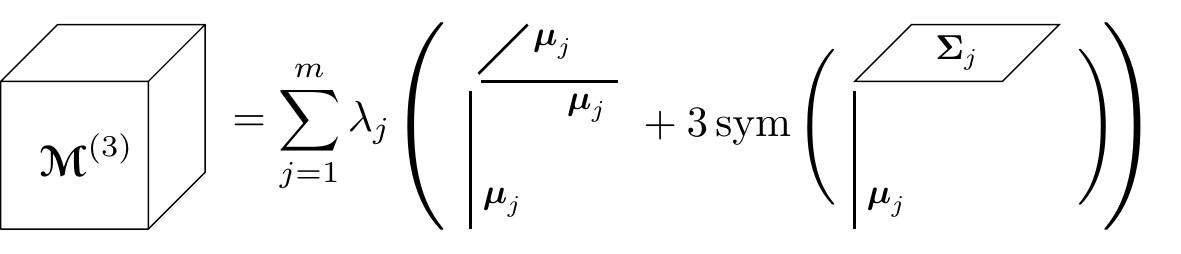}  
  \caption{Third moment of Gaussian mixture model is a three-way symmetric tensor.}
  \label{fig:moment3}
\end{figure}

A major challenge in working with higher-order moments ($d\geq3$) is
that the storage and computational costs are exponential in $d$, i.e., $\cO(n^d)$.
Working with third-order or fourth-order moments can quickly exhaust computational
resources for even moderately-sized variables.
One of our main contributions is extending the above result to calculate the quantities
\begin{equation*}
    \ang{ \M, \sop{\a} }, 
    \quad
    \nabla_{\a} \ang{ \M, \sop{\a} },
    \quad
    \nabla_{\m_j} \ang{ \M, \sop{\a} },
    \qtext{and}
    \nabla_{\C_j} \ang{ \M, \sop{\a} },
\end{equation*}
for an arbitrary vector $\a \in \R^n$,
without forming $\M$,
in time and storage that is quadratic in $n$ and linear in $m$ and $d$.
Additionally, $\nrm{\M}^2$ (which has cross-products)
can be calculated in the same storage and 
in time that is cubic in $n$, quadratic in $m$, and quadratic in $d$.
If the covariance matrices are diagonal, then the dependencies on $n$ are all linear.  
These formulas enable us to compare empirical moment tensors and model moment tensors without forming either moment explicitly.

\subsection{Applications to Parameter Estimation}
\label{sec:appl-param-estim}
The main application of our results are to efficiently recover the parameters of a GMM, i.e., 
$\set{(\lambda_j,\m{j},\C{j})}_{j=1}^m$,
from $p$ independent realizations $\set{\miwc[\x][p]}$
of the random variable
$X\sim \sum_{j=1}^m \lambda_j \; \N( \m{j}, \C{j} )$.
\Cref{fig:gmm_example} illustrates an example three-component  GMM in two dimensions,
showing its probability distribution function (pdf) and 1000 sample realizations ($n=2,m=3,p=1000$).

\begin{figure}
  \centering
  \subfloat[Two-dimensional pdf]{%
    \includegraphics[width=0.45\textwidth]{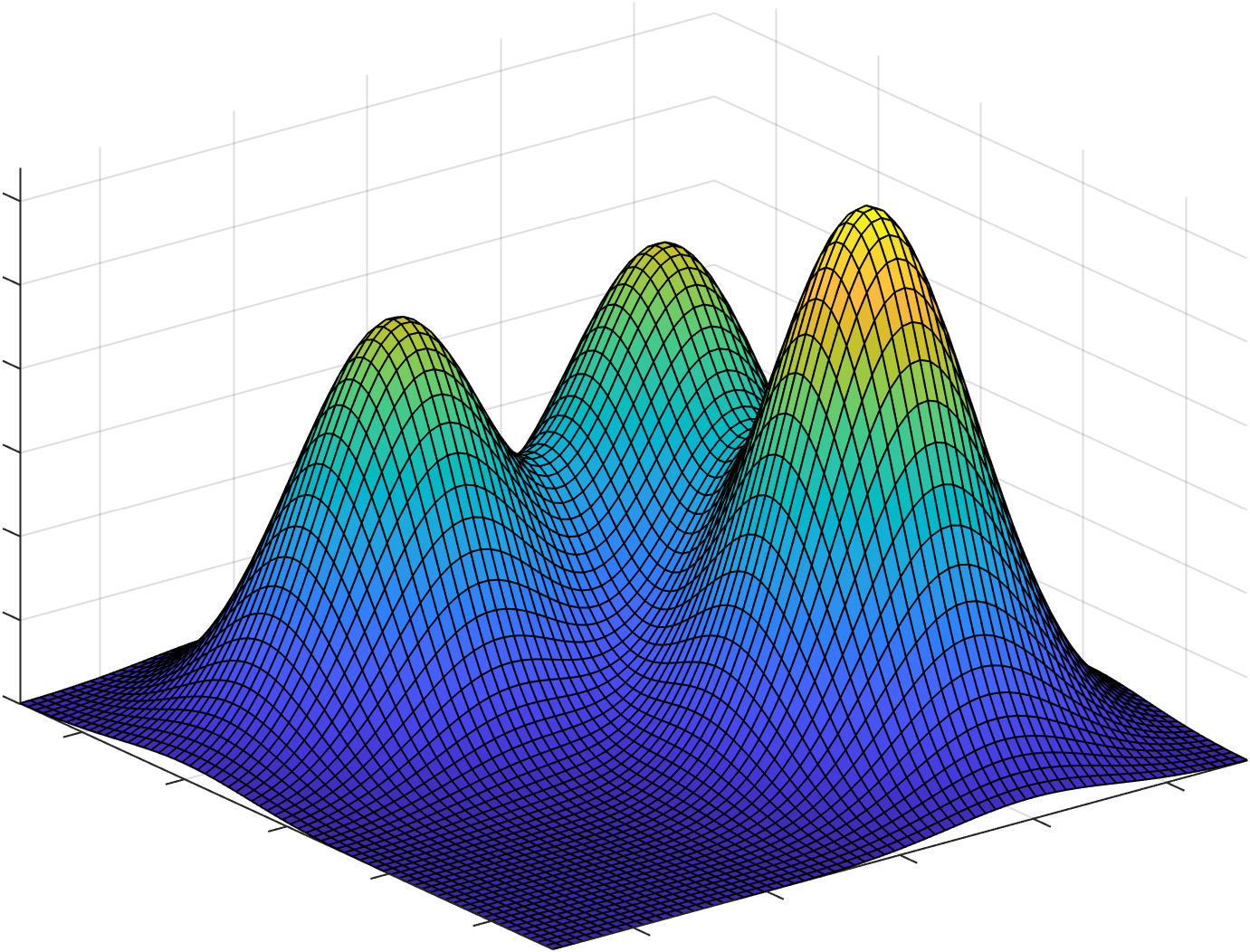}}
  \;\;
  \subfloat[Contour lines of pdf with 1000 sample realizations as black dots]{%
    \includegraphics[width=0.45\textwidth]{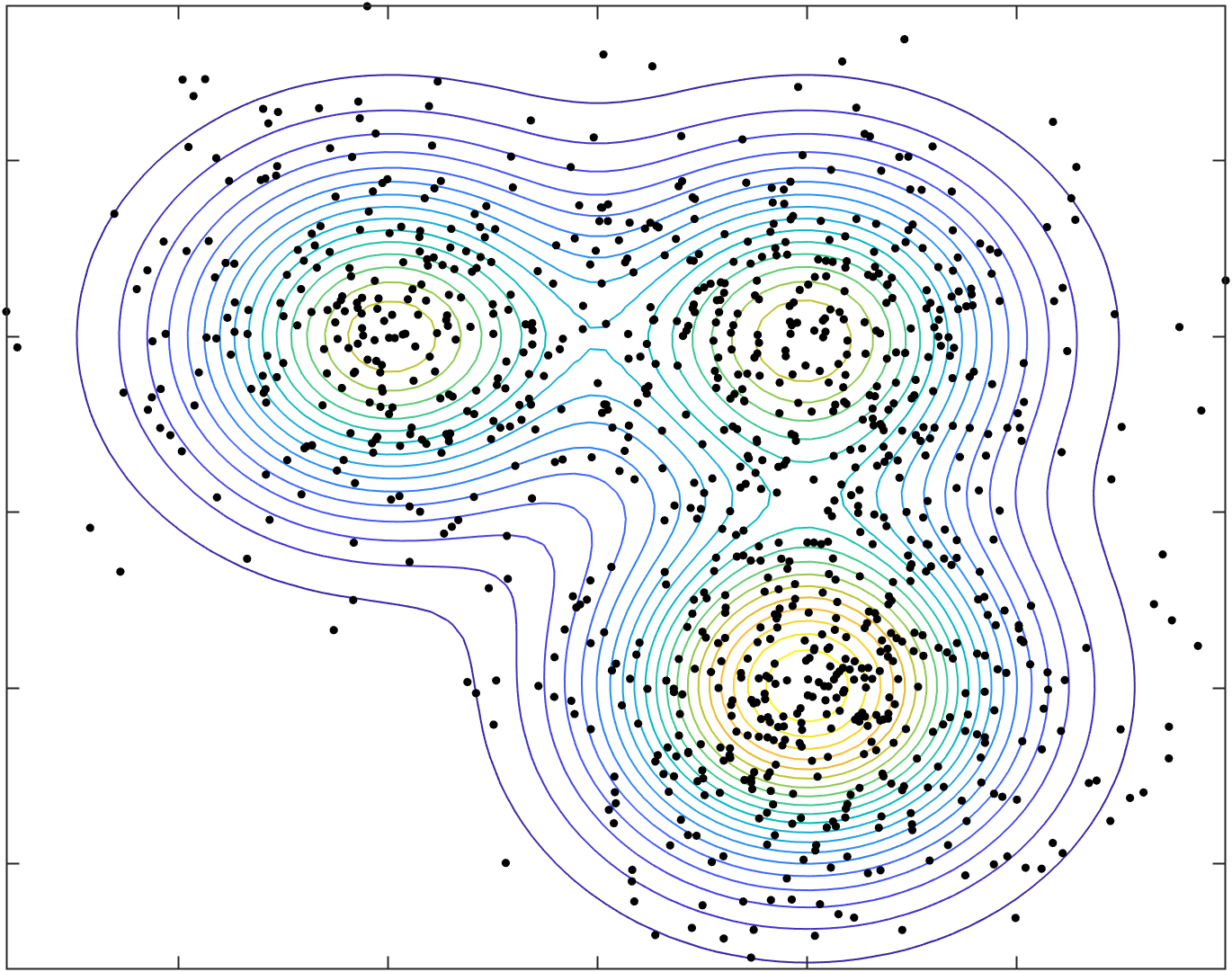}}
  \caption{Example GMM in two dimensions ($n=2$)
    and three components ($m=3$).
    The means are
    $\m_1=\begin{bsmallmatrix} 1 & -1 \end{bsmallmatrix}^{\Tr}$,
    $\m_2=\begin{bsmallmatrix} 1 & 1 \end{bsmallmatrix}^{\Tr}$,
    $\m_3=\begin{bsmallmatrix} -1 & 1 \end{bsmallmatrix}^{\Tr}$,
    and the common covariance is
    $\C=0.4\,\Mx{I}$.
    The convex combination is defined by the weights
    $\Vc{\lambda} =
    \begin{bsmallmatrix}
      0.4 & 0.3 & 0.3
    \end{bsmallmatrix}^{\Tr}$.
  }
  \label{fig:gmm_example}
\end{figure}

\subsubsection{Method of Moments}
\label{sec:moment-matching-via}
Expectation maximization is a standard tool for fitting GMMs but has some limitations.
An alternative is 
the method of moments, which optimizes the model parameters to match the empirical moments.
Consider the problem, for now, in terms of a single moment.
For any value of $d$, 
we can estimate $\M=\E(\sop{X})$ from the realizations, $\set{\miwc![\x][p]}$, i.e.,
\begin{equation}\label{eq:empircal_moment}
  \Mest = \frac{1}{p} \sum_{i=1}^p \sop{\x_i}.
\end{equation}
For example, the third empirical moment is illustrated in \cref{fig:empmoment3}.
\begin{figure}
  \centering
  \includegraphics[scale=0.8]{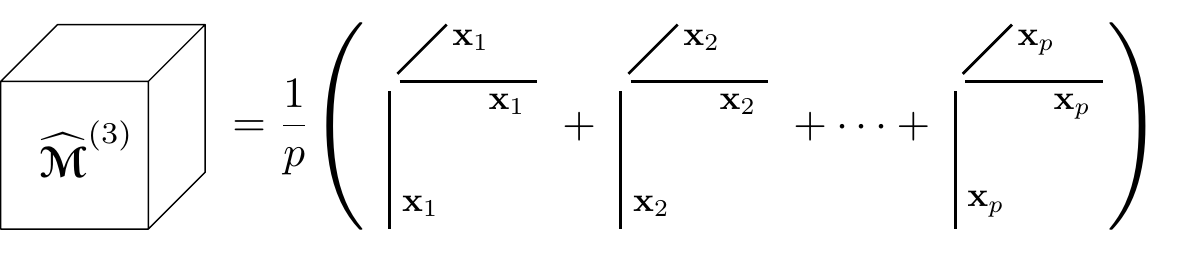}  
  \caption{Third empirical moment of a random variable is a three-way symmetric tensor.}
  \label{fig:empmoment3}
\end{figure}
Then we can cast the GMM parameter identification problem as
an optimization problem of the following form:
\begin{equation}\label{eq:opt-problem}
  \min F^{(d)}(\theta) \equiv \Nrm{\; \M - \Mest \;}^2
  \qtext{with respect to} \theta = \set{ (\lambda_j, \m{j}, \C{j}) }_{j=1}^m.
\end{equation}
where $\M$ has the form defined in \cref{eq:Md_definition}.
We can rewrite the objective function as
\begin{equation}
  \label{eq:opt-problem-expanded}
  F^{(d)}(\theta)
  =   \Nrm{\M}^2
  - \frac2{p} \sum_{i=1}^p \Ang{\M, \sop{\x{i}}}
  + \frac1{p^2}\sum_{i=1}^p \sum_{j=1}^p \Ang{\x{i},\x{j}}^d.
\end{equation}
The first two terms in \eqref{eq:opt-problem-expanded} reduce to  quantities that we can compute efficiently,
including the gradients, using our results discussed above.
The last term does not involve the model parameters and can be ignored for the purposes of optimization.

\subsubsection{Debiasing with Known Common Covariance Matrix}
\label{sec:debiasing}

If a data set of observations is contaminated with Gaussian noise,
i.e., $\N(\bf{0},\C)$ where the covariance $\C$ is known, then we can \emph{debias}
the data observations via the following result.

\vspace{0.5em}

\begin{nnthm}[Preview of \cref{thm:GMMdebiasedmoments}]
  Let $X = Y + Z$ where $Y$ and $Z$ are independent random variables and
  $Z \sim \N( \bf{0}, \C)$. Let
  \begin{equation}
    \T = \E\prn{\sop{Y}}%
  \end{equation}
  be the $d$th moment of $Y$.
  Then, given sample realizations $\set{\miwc![\x][p]}$ of $X$, 
  \begin{equation}\label{eq:Test_intro}
    \Test = \frac{1}{N}
    \sum_{i=1}^p \sum_{k=0}^{\floor{d/2}} C_{d,k} (-1)^k
    \sym \Prn{  \sop[d-2k]{\x{i}} \otimes \sop[k]{\C} },
  \end{equation}
  where $C_{d,k}= \binom{d}{2k} \dblfact{k}$,
  is an unbiased estimator for $\T$ $(\E\prn{\Test} = \T)$.  
\end{nnthm}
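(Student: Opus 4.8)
The plan is to verify $\E(\Test)=\T$ by a direct computation of the expectation of the proposed estimator. Since the samples $\x_1,\dots,\x_p$ are i.i.d.\ copies of $X$ and $\C$ is deterministic, linearity of expectation passes $\E$ through the outer sum, through $\sym$, and through the factors $\sop[k]{\C}$; with the natural normalization $N=p$ this reduces the claim to the single-sample identity
\begin{equation*}
  \sum_{k=0}^{\floor{d/2}} C_{d,k}\,(-1)^k\,\sym\Prn{\M[d-2k]\otimes\sop[k]{\C}} = \T,
  \qquad \M[\ell] := \E\Prn{\sop[\ell]{X}}.
\end{equation*}
Everything then hinges on an algebraic relation between the moments $\M[\ell]$ of $X$ and the target moments $\T[\ell] := \E\Prn{\sop[\ell]{Y}}$.

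First I would establish the forward (biasing) relation expressing $\M[\ell]$ through the $\T[\cdot]$ and $\C$. Expanding $\sop[\ell]{(Y+Z)}$ over its slot-wise choices, factoring each expectation by independence of $Y$ and $Z$, and collecting the $\binom{\ell}{r}$ interleavings of the $r$ slots assigned to $Y$ into one symmetrized term, gives
\begin{equation*}
  \M[\ell] = \sum_{r=0}^{\ell}\binom{\ell}{r}\,\sym\Prn{\T[r]\otimes\E\Prn{\sop[(\ell-r)]{Z}}}.
\end{equation*}
Because $Z$ is a centered Gaussian, the moment formula \eqref{eq:Md_definition} specialized to a single zero-mean component makes $\E(\sop[s]{Z})$ vanish for odd $s$ and equal $C_{2j,j}\,\sym(\sop[j]{\C})$ for $s=2j$. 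Substituting, using the property $\sym(A\otimes\sym(B))=\sym(A\otimes B)$, and the coefficient collapse $\binom{\ell}{2j}C_{2j,j}=C_{\ell,j}$, I obtain the compact relation $\M[\ell]=\sum_{j}C_{\ell,j}\,\sym(\T[\ell-2j]\otimes\sop[j]{\C})$, which is the exact analogue of \eqref{eq:Md_definition} with $\T[\ell-2j]$ replacing $\sop[(\ell-2j)]{\m}$.

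Finally I would substitute this forward relation into the single-sample identity, again flatten the nested symmetrizations, and regroup the resulting double sum by the total power $\ell=k+j$ of $\C$. The coefficient multiplying $\sym(\T[d-2\ell]\otimes\sop[\ell]{\C})$ is then $\sum_{k=0}^{\ell}(-1)^k C_{d,k}\,C_{d-2k,\ell-k}$, so the whole theorem reduces to the purely combinatorial identity
\begin{equation*}
  \sum_{k=0}^{\ell}(-1)^k\,C_{d,k}\,C_{d-2k,\ell-k}
  = \begin{cases} 1, & \ell=0,\\ 0, & \ell\geq 1. \end{cases}
\end{equation*}
I expect this to be the crux of the argument. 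The observation that cracks it is the factorization $C_{d,k}\,C_{d-2k,\ell-k}=C_{d,\ell}\binom{\ell}{k}$, obtained by writing each coefficient as $C_{a,b}=\binom{a}{2b}\dblfact{b}$ and cancelling the common factorials; the sum then becomes $C_{d,\ell}\sum_{k=0}^{\ell}(-1)^k\binom{\ell}{k}=C_{d,\ell}(1-1)^{\ell}$, which equals $1$ for $\ell=0$ and vanishes for $\ell\geq 1$. Only the $\ell=0$ term $\T[d]=\T$ survives, proving $\E(\Test)=\T$. The remaining work is bookkeeping---justifying the placement count $\binom{\ell}{r}$ and the repeated flattening of $\sym$---both of which follow from $\sym$ being the average over the permutations of tensor slots.
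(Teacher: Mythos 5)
Your proposal is correct, and it reaches the same destination as the paper along the same skeleton but in a different register. The paper's proof of \cref{thm:GMMdebiasedmoments} works entirely inside the polynomial ring via the isomorphism $\Phi$ of \cref{lemma:Phi}: it conditions on $Y$, so that $X \mid Y \sim \N(Y,\C)$, and invokes \cref{eq:gaussian-moment-Phi} to obtain $\Phi\sqr{\E\prn{\sop[k]{X}}} = \sum_{r} C_{k,r}\,\Phi\sqr{\T[k-2r]}\,\Phi\sqr{\C}^{r}$, after which all symmetrization bookkeeping is free because polynomial multiplication is commutative. You instead stay at the tensor level: you derive the very same forward (biasing) relation by a multinomial expansion of $\sop[\ell]{(Y+Z)}$, independence of $Y$ and $Z$, and the mean-zero case of \cref{thm:gaussian_moments}, at the price of justifying the interleaving count $\binom{\ell}{r}$ and the flattening rule $\sym\prn{A \otimes \sym(B)} = \sym\prn{A \otimes B}$ --- both of which you correctly reduce to $\sym$ being an average over slot permutations (cf.\ \cref{lemma:sym_orthogonal_projection}; your count also tacitly uses that $\T[r]$ and $\E\prn{\sop[\ell-r]{Z}}$ are themselves symmetric, which holds). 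From that point on the two arguments coincide exactly: your regrouping by the total power of $\C$ and the collapse $\sum_{k=0}^{\ell}(-1)^{k} C_{d,k}\, C_{d-2k,\ell-k} = \delta_{0\ell}$ is precisely the paper's \cref{lem:Cdelta0qsum} (it is invoked in \cref{eq:hatTd_expectation} with $q=\ell$), and your factorization $C_{d,k}\,C_{d-2k,\ell-k} = C_{d,\ell}\binom{\ell}{k}$ followed by the alternating binomial sum is line-for-line how the paper proves that lemma. What your route buys is self-containedness --- no appeal to the tensor--polynomial correspondence, and only the centered specialization of the Gaussian moment formula is needed; what the paper's route buys is economy, since $\Phi$ eliminates every permutation-counting step you had to discharge by hand. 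One cosmetic point: the normalization $1/N$ in \cref{eq:Test_intro} is a typo for $1/p$ (compare \cref{eq:Test}), which your choice $N = p$ silently corrects.
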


\vspace{0.5em}

In the context of GMM with known common covariance matrix $\C = \C{1} = \cdots = \C{m}$,
we can recast the estimate of the remaining parameters $\set{(\lambda_j, \m{j})}_{j=1}^m$
as a \textit{symmetric tensor decomposition} problem:
minimize $\nrm{\T - \Test}^2$ with $\T = \sum_{j=1}^m \lambda_j \sop{\m{j}}$.
As before with the standard moment tensor,
we extend this result to
\emph{implicitly} calculate quantities needed in the gradient-based optimization, such as
\begin{equation*}
  \ang{ \Test, \T },
  \quad
  \nabla_{\lambda_{j}} \ang{ \Test, \T },
  \qtext{and}
  \nabla_{\m{j}} \ang{ \Test, \T }.
\end{equation*}
This can be done without forming $\Test$ or $\T$, and
in time and storage that is quadratic in $n$ (the number of variables) and linear in $m$, $p$, and $d$.

\subsubsection{Handling All Moments Simultaneously}
\label{sec:augm-fix-scal}

For a fixed order $d$,
there is a scaling ambiguity in \eqref{eq:opt-problem} causing the optimization problem to have multiple continuously varying solutions.
This is sometimes fixed by working with multiple moments simultaneously.
We propose to do something similar but implicitly,
solving the optimization problem for multiple moments simultaneously
via a mathematical conversion where a constant is appended to
each observation.
If $X \sim \sum_{j=1}^m \lambda_j \; \N( \m{j}, \C{j} )$
and $\set{\x{1},\dots,\x{p}}$ is a set of realizations, then
the augmented observations
\begin{displaymath}
  \Vc[\bar]{x}{i} =
  \begin{bmatrix}
    \x{i} \\ \omega
  \end{bmatrix} \in \R^{n+1},
\end{displaymath}
where $\omega$ is some constant,
can be seen as coming from the random variable
\begin{equation*}
  \bar X \sim \sum_{j=1}^m \lambda_j \; \N( \Vc[\bar]{\mu}{j}, \Mx[\bar]{\Sigma}{j} )
  \qtext{where}
  \Vc[\bar]{\mu}{j} = \begin{bmatrix} \m{j} \\ \omega \end{bmatrix}
  \qtext{and}
  \Mx[\bar]{\Sigma}{j} = \begin{bmatrix} \C{j} & \bf 0 \\ \bf 0 & 0 \end{bmatrix}.
\end{equation*}
\Cref{sec:augmented-system-gmm} shows that
minimizing the augmented $d$-th moment matching problem defined by
$F^{(d)}(\bar{\theta})$
with
respect to the augmented parameters
$\bar{\theta}=\set{ (\lambda_j, \Vc[\bar]{\mu}{j}, \Mx[\bar]{\Sigma}{j})}_{j=1}^m$,
is equivalent to a weighted
sum of all moments up to order $d$, i.e., 
\begin{displaymath}
  F^{(d)}(\bar{\theta}) = \sum_{k=1}^d \binom{d}{k} \omega^{2d-2k} F^{(d)}(\theta) + C,
\end{displaymath}
where $C$ is a constant that does not depend on $\theta$.
In this way, the augmented problem is simultaneously matching all moments from orders 1 to $d$.

\subsection{Approach and Tools}
\label{sec:approach-tools}

Our derivations in part rely upon the equivalence of symmetric tensors and homogeneous
polynomials, which is a  well-known correspondence in the computational algebraic geometry community.
This equivalence is used, for instance, to derive
the concise formulation of the moment tensor in terms of the GMM parameters.
Along the way, we prove a binomial theorem for tensors (\cref{thm:binom-tensors}).

We also employ tools from combinatorics for, e.g., calculating the
inner products of moment tensors. For this, we use
Bell polynomials, which are intimately related to cumulants.
Using recurrences in terms of Bell polynomials, we can efficiently compute key quantities
such as, for instance, $\ang{\M,\sop{\a}}$ where $\M$ is a moment
tensor for a GMM and $\a \in \Real^n$ (\cref{thm:gmmdotp}).

We demonstrate the utility of these approaches for estimating
the parameters of a GMM, especially in contexts where EM approaches
are not as successful.

\subsection{Related Work}
\label{sec:related-work}

There are two basic approaches for fitting statistical models: 
expectation maximization (EM) and the method of moments. 
For multivariate GMMs, EM has to this point been considered the
only practical model, but we did find one interesting
early application of the method of moments to GMMs by \citet{LiBa93}.

To date, most interest in the
method of moments has  come from a theoretical point of view.  There are two theoretical advantages.  
Firstly,
the method of moments can be used to remove the usual requirement that means be well separated.  The works  \citet{Da99}, \citet{BeSi09,BeSi10} and \citet{moitra2010settling} and others  show that the restriction
can be relaxed via the method of moments, at least in theory.  This stands in contrast to the situation for EM, where there is a need for well-separated means \citep{XuJo96}. 
As a second advantage, the method of moments can lead to structured polynomial systems or tensor decomposition problems for parameter estimation of GMMs.
This has allowed some authors to develop provable polynomial-time algorithms, with bounds on the number of samples required, although it has not been clear how practical these methods are.
Under the condition that the means are linearly independent
and the covariances are spherical, \citet{HsKa13}  develop
a method based on first-, second-, and third-order moments
in which they can recover the means and covariances
using a combination of eigen- and symmetric tensor decomposition.
\citet{KhMaMo21} develop a similar method and recommend it
for intializing EM.
\citet{GeHuKa15} extend this to arbitrary covariance matrices,
but they work with vectorized covariance matrices and as a result 
forfeit some of the symmetries. 
\citet{agostini2021moment} use algebraic geometry  to prove identifiability results: namely when each mixture component has the same unknown covariance, the GMM's parameters are uniquely determined by a few of the model's moments.
These past works are primarily concerned with the
question of sample efficiency, i.e., how many samples
are required to recover the model, as well as identifiability questions.
We do not consider those issues in our work (even though
they are certainly topics for future work), but we do 
build upon these in other ways: we handle
moments of arbitrary order, develop practical
moment-based algorithms using numerical optimization, and provide computational evidence of both their effectiveness
and efficiency.

As mentioned, symmetric tensor decomposition has played an important role in 
the bit of computational work done thus far for GMMs and moment
methods.  
This is because the moment formed by the means is an approximation
to the empirical moment tensor:
\begin{displaymath}
    \sum_{j=1}^m \lambda_j \sop{\m_j} \approx \frac{1}{p} \sum_{i=1}^p \sop{\Vc{x}_i}.
\end{displaymath}
The connection has been considered in several works, e.g., \cite{AnGeHsKa14a,AnGeHsKa14}.
The works that use symmetric tensor decomposition tend to use
simultaneous diagonalization, which is not very robust.  Other recent works employ algebraic methods from polynomial solving for the cases of spherical and diagonal covariances, as in  \cite{guo2021learning}, \citet{lindberg2021estimating} and \citet{KhMaMo21}.
By contrast, \citet{sk-2019} developed a different computational approach,
based on numerical optimization, 
that avoids forming the empirical or approximate model moment
tensors.  
We build upon their approach in this \nolinebreak work.

\subsubsection{Parallels with a Scientific Domain}
\label{sec:cryo-EM}
In cryo-electron microscopy (cryo-EM), the goal is to estimate a three-dimensional model for a molecule, given many noisy two-dimensional images \citep{bendory2020single}. 
This can be viewed  as a  parameter estimation problem, where the data are the images and the unknown parameters correspond to the molecule (and other features). 
So far in cryo-EM,  expectation maximization methods have been dominant. 
See \cite{sigworth2010introduction} for a description of the EM approach, and \cite{scheres2012relion} for a software implementation.  
However, recent works have  considered using the method of moments, e.g. 
  \citet{bandeira2017estimation} and \citet{sharon2020method}.  
We believe our paper might find applications to cryo-EM, because the noise on cryo-EM images is typically assumed to be  Gaussian.  Also, GMMs have been used a modeling tool \citep{chen2021deep}.

\section{Preliminaries}
\label{sec:preliminaries}

\subsection{Tensors and tensor products}
\label{sec:tens-tens-prod}
We begin by establishing notation and basic terminology for tensors.
We let $\cT_n^d = \sop{(\R^{n})} = \R^{n} \otimes \R^n \otimes \cdots \otimes \R^n$
($d$ times) denote the vector space of real 
tensors of \textit{order} $d$ and \textit{dimension} $n$.
Tensors with dimensions $0$, $1$ and $2$ are scalars, vectors and matrices, respectively.
If $\W \in \cT_n^d$, then $w_{\minc!} \equiv \W(\miwc!)$ is the entry indexed
by $(\miwc!) \in [n]^{d}$, where $[n] = \set{1, \ldots, n}$.

The tensor product is a generalization of the outer product,
the tensor power is an outer product of a tensor (possibly a vector or matrix)
with itself, and the tensor inner product is the dot product of the
vectorized representations. We formalize these ideas below.

\begin{definition}[Tensor product]
  \label{def:tensor-product}
  For tensors $\W \in  \cT_n^{d}$ and $\U \in \cT_n^{d'}$,
  their \textit{tensor product}  in $\cT_n^{d+d'}$ is defined by
  \begin{equation*}
    (\W \otimes \U)(\miwc!,\miwc![j][d'])
    = w_{\minc!}u_{\minc![j][d']}
    \; \forall \, (\miwc!,\miwc![j][d'])\in [n]^{d+d'}.
  \end{equation*}
\end{definition}

\begin{definition}[Tensor power]
  The \textit{tensor power} $\sop[m]{\W}\in \cT_n^{m d}$ is the
  tensor product of $\W$ with itself $m$ times. 
\end{definition}

Consder the case of $d=1$, so we have just a vector. If $\v \in \R^n = \cT_n^1$,
then its tensor power $\sop{\v}$ is a tensor
with $\prn*{\sop{\v}}(\miwc!) = \prod_{k=1}^d v_{i_k}$.

\begin{definition}[Tensor inner product]\label{def:tensor-inner-product}
  The \textit{tensor inner product} of
  $\W,\U \in \cT_n^d$ is 
  \begin{equation}
    \Ang{\W, \U}
    =\sum_{i_1=1}^n \cdots \sum_{i_d=1}^n w_{\minc!}u_{\minc!}.
  \end{equation}
\end{definition}

The norm of a tensor is $\nrm{\W} = \sqrt{\ang{\W, \W}}$.
For $d=0,1,2$, the dot product of these as tensors coincides with the usual dot product definition:
if $\Vc{u}, \v \in  \cT_n^1$, then $\ang{ \Vc{u}, \v } = \Vc{u}' \v$,
and if $\Mx{A}, \Mx{B} \in  \cT_n^2$, then $\ang{ \Mx{A}, \Mx{B} }
= \operatorname{trace}(\Mx{A}' \Mx{B}) = \opvec(\Mx{A})^{\Tr} \opvec(\Mx{B})$. 

Properties of tensor inner and outer products can be combined in useful ways, as follows.
\begin{lemma}[Inner product of tensor products, \citealp{hackbusch2019tensor}]
  \label{lemma:inner_tensor_powers}
  For tensors $\W_1,\W_2 \in \cT_n^d$, $\U_1,\U_2 \in \cT_n^{d'}$,
  we have 
  \begin{displaymath}
    \Ang{\W_1 \otimes \U_1, \W_2 \otimes \U_2}
    =  \Ang{\W_1 , \W_2}  \Ang{\U_1, \U_2}.
  \end{displaymath}
  In particular, for all vectors $\Vc{u},\v\in \R^n$,
  we have $\Ang{\sop{\v}, \sop{\Vc{u}}} = \Ang{\v, \Vc{u}}^d$.
\end{lemma}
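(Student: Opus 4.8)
The plan is to prove the product rule by a direct expansion from the two defining formulas and then deduce the vector special case by a short induction. First I would expand the left-hand side $\Ang{\W_1 \otimes \U_1, \W_2 \otimes \U_2}$ using \cref{def:tensor-inner-product}: since $\W_1 \otimes \U_1$ and $\W_2 \otimes \U_2$ both live in $\cT_n^{d+d'}$, this is a single sum over the full index tuple $(i_1,\dots,i_d,j_1,\dots,j_{d'}) \in [n]^{d+d'}$ of the product of their entries. The structural key is \cref{def:tensor-product}, which says each entry of a tensor product factors as $(\W_1 \otimes \U_1)(i_1,\dots,i_d,j_1,\dots,j_{d'}) = \W_1(i_1,\dots,i_d)\,\U_1(j_1,\dots,j_{d'})$, and similarly for $\W_2 \otimes \U_2$. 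Substituting, each summand becomes a product of a factor depending only on the $i$-indices and a factor depending only on the $j$-indices.

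Next I would separate the sum. Because the index tuple partitions into the $\W$-block $(i_1,\dots,i_d)$ and the disjoint $\U$-block $(j_1,\dots,j_{d'})$, and the summand factors accordingly, distributivity of finite sums over products lets me rewrite the $(d+d')$-fold sum as the product of an independent $d$-fold sum over $(i_1,\dots,i_d)$ and a $d'$-fold sum over $(j_1,\dots,j_{d'})$. Reading each factor back through \cref{def:tensor-inner-product} identifies them as $\Ang{\W_1,\W_2}$ and $\Ang{\U_1,\U_2}$, which is the claimed identity.

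For the special case, I would write $\sop{\v} = \v \otimes \sop[d-1]{\v}$ and $\sop{\u} = \u \otimes \sop[d-1]{\u}$, then apply the identity just proved with $\W_1 = \v$, $\W_2 = \u$, $\U_1 = \sop[d-1]{\v}$, and $\U_2 = \sop[d-1]{\u}$, obtaining $\Ang{\sop{\v}, \sop{\u}} = \Ang{\v,\u}\,\Ang{\sop[d-1]{\v}, \sop[d-1]{\u}}$. Induction on $d$, with base case $d=1$ (where the statement reads $\Ang{\v,\u} = \Ang{\v,\u}^1$), then yields $\Ang{\sop{\v}, \sop{\u}} = \Ang{\v,\u}^d$.

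I do not expect a genuine obstacle here: the argument is entirely mechanical. The only step warranting care is the index bookkeeping — making the partition of the product tensor's index tuple into the $\W$-block and the $\U$-block explicit, so that the summand visibly factors and the sum separates. Once that partition is written down, everything else is routine distributivity of finite sums.
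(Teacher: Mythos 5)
Your proof is correct. There is nothing in the paper to compare it against: the lemma is imported by citation from \citet{hackbusch2019tensor} with no proof given, and your argument---factor each entry of the product tensors via \cref{def:tensor-product}, split the $(d+d')$-fold sum in \cref{def:tensor-inner-product} into the product of the $i$-block and $j$-block sums by distributivity, then handle the rank-one special case by induction on $d$---is the standard proof and is complete. (A minor remark: the special case also follows in one line without induction, since $\ang{\sop{\v},\sop{\Vc{u}}} = \sum_{i_1,\dots,i_d} \prod_{k=1}^d v_{i_k} u_{i_k} = \prod_{k=1}^d \sum_{i_k=1}^n v_{i_k} u_{i_k} = \ang{\v,\Vc{u}}^d$, but your inductive route is equally valid.)
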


\subsection{Symmetric tensors}
\label{sec:symmetric-tensors}

A symmetric tensor is a tensor whose entries are invariant under any
permutation of the indices.
Symmetric tensors have important properties, and
these are relevant to us because moment tensors are symmetric.

\begin{definition}[Symmetric tensor]\label{def:sym-tensor}
  A tensor $\W \in \cT_n^d$ is \emph{symmetric} if it is
  unchanged by any permutation of indices, that is, 
  \begin{equation}
    w_{\minc![j]} = w_{\minc[\dsub{j}{\sigma}]}
    \!\! \quad \forall \, (\miwc![j])\in [n]^{d}
    \textup{ and } \sigma\in \mathfrak{S}^d,
  \end{equation}
  where $\mathfrak{S}^d$ is the permutation group on $[d]$.
  We denote by $\cS_n^d \subset \cT_n^d$
  the vector space of real symmetric tensors of order $d$ and length $n$. 
\end{definition}

For example, for $\v \in \R^n$, the tensor power $\sop{\v}$ is symmetric.
Hence, a moment tensor $\M = \sop{X}$, which is the expectation of a tensor power,
is also symmetric.

If a tensor is not already symmetric, then it can be made symmetrized
via the $\sym(\W)$ operation; see \cref{fig:sym}.
Moreover, a tensor is symmetric if and only if $\sym(\W) = \W$;
see \citet[Prop.~3.1]{CoGoLiMo08}.

\begin{definition}[Symmetrization]\label{def:sym}
  A tensor $\W \in \cT^d_n$ may be \textit{symmetrized} via 
  \begin{equation}\label{eq:symmetrizing_operator}
    \sym (\W)(\miwc!) =
    \frac{1}{d!}\sum_{\sigma\in \mathfrak{S}^d}
    w_{\minc![\dsub{i}{\sigma}]}
    \quad \forall \, (\miwc!)\in [n]^{d}.
  \end{equation}  
\end{definition}

\begin{figure}
  \centering
  \includegraphics{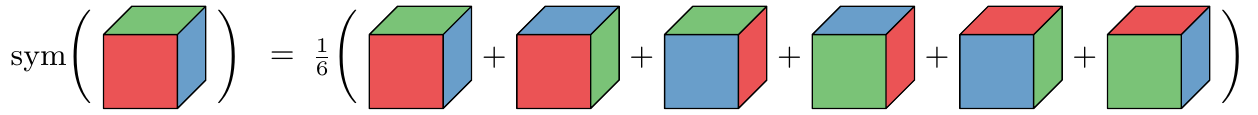}
  \caption{Symmetrization of a 3-way tensor.}
  \label{fig:sym}
\end{figure}

Symmetrization is potentially costly but can essentially be skipped
for certain computations involving symmetrized tensors, as
elucidated in the next lemma.
This will be useful later for computational efficiency in working with moments.
 
\begin{lemma}[\citealp{hackbusch2019tensor}]
\label{lemma:sym_orthogonal_projection}
The $\sym$ operation defined in \cref{def:sym} is an orthogonal projection
and therefore self-adjoint.
In particular, for any vector $\v \in \R^n$ and tensor $\U \in \cT^{d}_n$,
  we have
  \begin{displaymath}
    \Ang{\sym(\U),\sop{\v}}=\Ang{\U,\sop{\v}}.
  \end{displaymath}
\end{lemma}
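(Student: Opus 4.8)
The plan is to establish the lemma in three stages: first that $\sym$ is idempotent, second that it is self-adjoint, and third to read off the inner-product identity as a corollary. Since an orthogonal projection is precisely a linear operator that is both idempotent and self-adjoint (with range here the symmetric subspace $\cS_n^d$), the first two stages together prove the opening sentence, and the third delivers the ``in particular'' clause that is the version actually used later for computing with moments.

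First I would verify idempotency, $\sym\circ\sym=\sym$. The key observation is that $\sym(\W)$ is itself symmetric: permuting the indices of the averaged tensor in \cref{eq:symmetrizing_operator} merely relabels the summation variable $\sigma$ by a fixed group element, and because multiplication by a fixed element is a bijection of $\mathfrak{S}^d$, the average is unchanged. Once $\sym(\W)\in\cS_n^d$, the characterization quoted above (a tensor equals its own symmetrization exactly when it is symmetric, \citet[Prop.~3.1]{CoGoLiMo08}) gives $\sym(\sym(\W))=\sym(\W)$ immediately.

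The main step, and the one I expect to be the only genuine obstacle, is self-adjointness, $\Ang{\sym(\W),\U}=\Ang{\W,\sym(\U)}$. Expanding $\Ang{\sym(\W),\U}$ via \cref{def:tensor-inner-product} together with \cref{eq:symmetrizing_operator} produces a double sum over multi-indices $(i_1,\dots,i_d)\in[n]^d$ and permutations $\sigma\in\mathfrak{S}^d$. For each fixed $\sigma$ I would perform the change of summation variable that relabels the index tuple by $\sigma$; since $\sigma$ acts as a bijection on $[n]^d$, this leaves the multi-index sum invariant while transferring the permutation off the entries of $\W$ and onto the entries of $\U$, with $\sigma$ now appearing as $\sigma^{-1}$. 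Summing over $\sigma$ and using that inversion is itself a bijection of $\mathfrak{S}^d$ then reassembles $\sym(\U)$ on the second factor, yielding $\Ang{\W,\sym(\U)}$. The only care required is in tracking which permutation (and whether $\sigma$ or $\sigma^{-1}$) lands on which factor, which is why I flag this bookkeeping as the crux of the argument.

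Finally, the ``in particular'' identity follows at once: the tensor power $\sop{\v}$ is symmetric, so $\sym(\sop{\v})=\sop{\v}$, and self-adjointness then gives $\Ang{\sym(\U),\sop{\v}}=\Ang{\U,\sym(\sop{\v})}=\Ang{\U,\sop{\v}}$. I would state this consequence explicitly, since it is exactly the fact that lets us discard symmetrization in inner products against rank-one symmetric tensors throughout the later moment computations.
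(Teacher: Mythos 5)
Your proof is correct, but note that the paper itself offers no argument for this lemma: it is quoted directly from \citet{hackbusch2019tensor}, so there is no in-paper proof to match against, and what you have written is a self-contained replacement for the citation. Your three steps are all sound: idempotency follows because $\sym(\W)$ is symmetric (permuting its indices only relabels the sum over $\mathfrak{S}^d$ by a fixed group element), self-adjointness follows from the change of variables $j_k = i_{\sigma(k)}$ in the multi-index sum (which moves $\sigma$ off $\W$ and puts $\sigma^{-1}$ on $\U$, and summing over $\sigma$ then reassembles $\sym(\U)$ since inversion is a bijection of $\mathfrak{S}^d$), and the ``in particular'' clause follows since $\sop{\v}$ is symmetric. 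One small observation: the inner-product identity, which is the only part of the lemma the paper actually invokes later, admits an even shorter direct argument that bypasses self-adjointness entirely. Since the entry $\prn{\sop{\v}}(i_1,\dots,i_d) = \prod_{k=1}^d v_{i_k}$ is invariant under any permutation of its indices, each $\sigma$-term in the expansion of $\Ang{\sym(\U),\sop{\v}}$ equals $\Ang{\U,\sop{\v}}$ after exactly the relabeling you perform, and averaging over the $d!$ permutations gives the claim at once. Your longer route has the advantage of proving the full operator-theoretic statement (orthogonal projection onto $\cS_n^d$), which is what the lemma as stated asserts, so it is the more faithful proof of the quoted result.
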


\subsection{Tensors and homogeneous polynomials}
\label{sec:polynomials}

Key to several main results is the basic link between tensors and homogeneous polynomials;
 see, e.g., \citet[Sec.~2.6.4]{landsberg2012tensors} or \citet{CaSt13}.
We denote by $\R[\miwc![z][n]]$ the ring of real polynomials in $n$ variables
$\Vc{z}=(\miwc![z][n])$, and 
{$\R[\miwc![z][n]]_{d}$}
denotes the subspace of homogeneous degree $d$ forms.

\vspace{0.5em}

\begin{keyprop}\label{lemma:Phi}
  Define $\Phi:\bigcup_{d=0}^\infty \cT^d_n\to \bigcup_{d=0}^{\infty} \R[\miwc![z][n]]_{d}$
  (the set of homogeneous polynomials in $n$ variables), such that for $\V\in \cT^d_n$:
  \begin{equation*}%
    \Phi\Sqr{\V}(\Vc{z}) = \ang{ \V, \sop{\Vc{z}} }.
  \end{equation*}
  Then, we have the following.
  \begin{enumerate}[label={\bf(\Alph*)},ref={(\Alph*)}]
  \item \label{enum:Phi_1to1}
    For every integer $d$, $\Phi$ is a linear map from
    $\cT^d_n$ to $\R[\miwc![z][n]]_{d}$,
    it is bijective when restricted to $\cS^d_n$,
    and for all $\V\in \cT^d_n$:\vspace{-1mm}\newline
    \begin{minipage}{2\linewidth-\textwidth}
    \begin{equation}\label{eq:Phi_sym}
	\Phi\Sqr{\V} = \Phi\Sqr{\sym(\V)}.
	\end{equation}    	
    \end{minipage}\medskip\newline
    In particular, if $\V \in \cS^d_n$ and
    $\Phi\Sqr{\V} = \Phi\Sqr{\W}$,
    then $\V = \sym (\W)$.
  \item \label{enum:Phi_matvec}
    For any vector $\v \in \R^n$ and matrix $\Mx{M}\in  \R^{n\times n}$,
    the homogeneous polynomials $\Phi\Sqr{\v}$, $\Phi\Sqr{\Mx{M}}$ are such that
    \begin{equation*}%
      \Phi\Sqr{\v}(\Vc{z}) = \v'  \Vc{z}
      \quad \text{and} \quad
      \Phi\Sqr{\Mx{M}}(\Vc{z}) = \Vc{z}'  \Mx{M} \Vc{z}.
    \end{equation*}
  \item \label{enum:Phi_prod}
    For all $\V\in \cT^d_n$ and $\W\in \cT^{d'}_n$, we have
    \begin{equation*}%
      \Phi\sqr{\V \otimes \W} = \Phi\sqr{\V}\Phi\sqr{\W}.
    \end{equation*}
  \end{enumerate}
\end{keyprop}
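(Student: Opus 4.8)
The plan is to dispatch parts \ref{enum:Phi_matvec} and \ref{enum:Phi_prod} quickly, since they follow by unfolding the definition against the inner-product lemmas already in hand, and to invest the real effort in the bijectivity claim of part \ref{enum:Phi_1to1}. For part \ref{enum:Phi_matvec} I would simply compute $\Phi\Sqr{\v}(\Vc{z}) = \ang{\v,\Vc{z}} = \v'\Vc{z}$, using $\sop[1]{\Vc{z}} = \Vc{z}$, and $\Phi\Sqr{\Mx{M}}(\Vc{z}) = \ang{\Mx{M},\Vc{z}\otimes\Vc{z}} = \sum_{i,j} M_{ij}\,z_i z_j = \Vc{z}'\Mx{M}\Vc{z}$. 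For part \ref{enum:Phi_prod} I would use the factorization $\sop[d+d']{\Vc{z}} = \sop{\Vc{z}}\otimes\sop[d']{\Vc{z}}$ and split the inner product with \cref{lemma:inner_tensor_powers}, obtaining $\ang{\V\otimes\W,\sop{\Vc{z}}\otimes\sop[d']{\Vc{z}}} = \ang{\V,\sop{\Vc{z}}}\,\ang{\W,\sop[d']{\Vc{z}}} = \Phi\Sqr{\V}(\Vc{z})\,\Phi\Sqr{\W}(\Vc{z})$.

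For part \ref{enum:Phi_1to1}, linearity of $\Phi$ on $\cT^d_n$ is immediate from linearity of the tensor inner product in its first slot, and every monomial $z_{i_1}\cdots z_{i_d}$ occurring in $\ang{\V,\sop{\Vc{z}}}$ has degree $d$, so the image lands in $\R[\miwc![z][n]]_{d}$. The identity $\Phi\Sqr{\V} = \Phi\Sqr{\sym(\V)}$ I would read off directly from the self-adjointness of $\sym$ in \cref{lemma:sym_orthogonal_projection}: for each fixed $\Vc{z}\in\R^n$ that lemma gives $\ang{\sym(\V),\sop{\Vc{z}}} = \ang{\V,\sop{\Vc{z}}}$, and two degree-$d$ forms that agree at every point of $\R^n$ agree as polynomials.

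The heart of the argument, and the step I expect to require the most care, is bijectivity of $\Phi$ restricted to $\cS^d_n$; my plan is to prove injectivity and then close the gap with a dimension count. Expanding $\Phi\Sqr{\V}(\Vc{z}) = \sum_{i_1,\dots,i_d} v_{i_1\cdots i_d}\,z_{i_1}\cdots z_{i_d}$ and grouping index tuples by the multi-index $\alpha = (a_1,\dots,a_n)$, $a_1+\cdots+a_n = d$, that records how often each value is used, the coefficient of $z_1^{a_1}\cdots z_n^{a_n}$ equals $\tfrac{d!}{a_1!\cdots a_n!}\,v_\alpha$, where $v_\alpha$ is the common entry of the symmetric tensor on any tuple of type $\alpha$. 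Hence $\Phi\Sqr{\V} = 0$ forces $v_\alpha = 0$ for every $\alpha$, and since a symmetric tensor is determined by these values, $\V = 0$; this is injectivity. Because $\dim\cS^d_n = \dim\R[\miwc![z][n]]_{d} = \binom{n+d-1}{d}$ (both count degree-$d$ monomials in $n$ variables), an injective linear map between them is automatically bijective. Finally, the ``in particular'' clause follows by assembling the pieces: if $\V\in\cS^d_n$ and $\Phi\Sqr{\V} = \Phi\Sqr{\W}$, then the symmetrization identity gives $\Phi\Sqr{\V} = \Phi\Sqr{\sym(\W)}$ with both $\V$ and $\sym(\W)$ in $\cS^d_n$, so injectivity yields $\V = \sym(\W)$. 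The one genuine subtlety throughout is keeping the multinomial bookkeeping between tensor entries and polynomial coefficients straight and citing the dimension equality; the remaining verifications are mechanical.
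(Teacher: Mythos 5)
Your proof is correct. Parts~\ref{enum:Phi_matvec} and~\ref{enum:Phi_prod} coincide with the paper's argument (direct computation from the definitions, and \cref{lemma:inner_tensor_powers}, respectively), as does your derivation of \cref{eq:Phi_sym} from \cref{lemma:sym_orthogonal_projection}; the genuine difference is in how you establish bijectivity of $\Phi$ restricted to $\cS^d_n$. The paper works from the surjectivity side: since every degree-$d$ monomial appears as an entry of $\sop{\Vc{z}}$, the map $\Phi$ is surjective from all of $\cT^d_n$ onto $\R[\miwc![z][n]]_{d}$, and then \cref{eq:Phi_sym} is invoked \emph{inside} the bijectivity argument to conclude that the symmetric tensors alone already attain the full image; the dimension equality $\dim \cS^d_n = \dim \R[\miwc![z][n]]_{d}$ (cited from the literature) then upgrades surjectivity to bijectivity. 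You work from the injectivity side: for symmetric $\V$ you extract the coefficient of $z_1^{a_1}\cdots z_n^{a_n}$ explicitly as $\frac{d!}{a_1!\cdots a_n!}\,v_\alpha$, conclude that $\Phi\Sqr{\V}=0$ forces $\V=0$, and use the same dimension count in the opposite direction. Both routes hinge on the dimension equality, but the halves proved directly are complementary. Your version buys an explicit dictionary between the entries of a symmetric tensor and the coefficients of the associated form (in effect exhibiting $\Phi^{\Inv}$ on monomials), and it makes the bijectivity step independent of the $\sym$ identity, which you then need only for the final ``in particular'' clause; the paper's version is shorter and avoids the multinomial bookkeeping, at the price of threading \cref{eq:Phi_sym} through the surjectivity argument. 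One point worth making explicit in your write-up: identifying a form with the function it defines on $\R^n$ (your step that two degree-$d$ forms agreeing at every point agree as polynomials) is valid precisely because $\R$ is an infinite field.
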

\begin{proof}
  We first show \ref{enum:Phi_1to1}.
  The linearity of $\Phi$ follows from the bilinearity of the inner product.
  Since all the entries of the tensor $\sop{\Vc{z}}$ are in $\R[z_1, \ldots, z_n]_d$,
  $\Phi\sqr{\V}$ is also in $\R[z_1, \ldots, z_n]_d$.
  Moreover, the tensor $\sop{\Vc{z}}$ contains all the monomials of degree $d$ in $n$ variables,
  thus $\Phi$ is surjective over all tensors in $\cT_n^d$.
  Additionally, \Cref{lemma:sym_orthogonal_projection} implies \eqref{eq:Phi_sym},
  which in turn implies that $\Phi$ is surjective over all symmetric tensors in $\cS_n^d$. 
  Since the vector space dimensions of $\cS_n^d$ and $\R[z_1, \ldots, z_n]_d$
  coincide \citep{hackbusch2019tensor},
  $\Phi$ is a one-to-one map between these spaces.
  \ref{enum:Phi_matvec} follows from the dot product definition for vectors and matrices;
  for matrices we have
  $\ang{ \Mx{M}, \sop[2]{\Vc{z}} }
  = \operatorname{trace}(\Mx{M} \Vc{z} \Vc{z}')
  = \Vc{z}'  \Mx{M} \Vc{z}$.
  Finally, \ref{enum:Phi_prod} follows from \Cref{lemma:inner_tensor_powers}:
  \begin{equation*}
    \Phi\sqr{\V \otimes  \W}(\Vc{z})
    = \ang*{\V\otimes  \W, \sop[(d+d')]{\Vc{z}}}
    =  \ang*{\V, \sop{\Vc{z}}} \ang*{\W, \sop[d']{\Vc{z}}}
    = \Phi\sqr{\V}(\Vc{z})  \Phi\sqr{\W}(\Vc{z}).
    \qedhere
  \end{equation*}
\end{proof}

\newcommand{\specialref}[1]{\cref{lemma:Phi}\ref{#1}}

\Cref{lemma:Phi} is useful mainly for two reasons.
First, \specialref{enum:Phi_1to1} implies that we can  determine a symmetric tensor by describing its inner product with rank-1 tensors.
Secondly, although the operation $\otimes$ is not commutative, for symmetric tensors we can write it in terms of products of homogeneous polynomials using \specialref{enum:Phi_prod}, and this product is commutative.
A quick application of this idea gives the following identity, which we call the binomial theorem for tensors.  

\vspace{0.5em}

\begin{corollary}[Binomial theorem for tensors]\label{thm:binom-tensors}
  For all $\v, \Vc{u}\in \R^n$,
  \begin{displaymath}
    \sop{(\v + \Vc{u})}
    = \sum_{k=0}^d \binom{d}{k} \sym\Prn{ \sop[k]{\v} \otimes \sop[d-k]{\Vc{u}} }.    
  \end{displaymath}
\end{corollary}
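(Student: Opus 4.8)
The plan is to invoke the tensor--polynomial dictionary of \cref{lemma:Phi} to reduce this tensor identity to the ordinary scalar binomial theorem. The first observation is that both sides are symmetric tensors in $\cS_n^d$: the left-hand side $\sop{(\v+\Vc{u})}$ is a tensor power of a vector, and the right-hand side is a sum of symmetrized tensors, each of which lies in $\cS_n^d$. Since \specialref{enum:Phi_1to1} guarantees that $\Phi$ is a bijection when restricted to $\cS_n^d$, it suffices to verify that the two sides have the same image under $\Phi$, i.e.\ that they agree as homogeneous polynomials in $\Vc{z}$.

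For the left-hand side, I would regard $\sop{(\v+\Vc{u})}$ as a $d$-fold tensor product and apply \specialref{enum:Phi_prod} repeatedly, followed by \specialref{enum:Phi_matvec}, to obtain $\Phi\Sqr{\sop{(\v+\Vc{u})}}(\Vc{z}) = \big((\v+\Vc{u})'\Vc{z}\big)^d = (\v'\Vc{z} + \Vc{u}'\Vc{z})^d$. The ordinary scalar binomial theorem, applied to the two real numbers $\v'\Vc{z}$ and $\Vc{u}'\Vc{z}$, then expands this into $\sum_{k=0}^d \binom{d}{k} (\v'\Vc{z})^k (\Vc{u}'\Vc{z})^{d-k}$.

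For the right-hand side, I would use the linearity of $\Phi$ to move it inside the sum, then use \eqref{eq:Phi_sym} to discard each symmetrization operator, and finally apply \specialref{enum:Phi_prod} and \specialref{enum:Phi_matvec} to each factor, so that $\Phi\Sqr{\sop[k]{\v}\otimes\sop[d-k]{\Vc{u}}}(\Vc{z}) = (\v'\Vc{z})^k (\Vc{u}'\Vc{z})^{d-k}$. The two homogeneous polynomials then match term by term, and the injectivity of $\Phi$ on symmetric tensors from \specialref{enum:Phi_1to1} upgrades this to the desired equality of tensors.

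The computation is routine once \cref{lemma:Phi} is available, and no genuinely hard step arises; the only point requiring care is the symmetry bookkeeping. For $0<k<d$ the summands $\sop[k]{\v}\otimes\sop[d-k]{\Vc{u}}$ are not themselves symmetric, which is exactly why the symmetrization must appear on the right-hand side so that both sides live in $\cS_n^d$ and the injective half of \specialref{enum:Phi_1to1} is applicable. The identity \eqref{eq:Phi_sym} is precisely what allows $\sym$ to be ignored after passing to polynomials, reconciling the noncommutativity of $\otimes$ with the commutativity of polynomial multiplication.
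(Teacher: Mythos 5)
Your proposal is correct and follows essentially the same route as the paper: both reduce the identity to homogeneous polynomials via the map $\Phi$ of \cref{lemma:Phi}, apply the ordinary binomial theorem to $(\v'\Vc{z}+\Vc{u}'\Vc{z})^d$, and conclude by the bijectivity of $\Phi$ restricted to symmetric tensors. The only cosmetic difference is that you insert $\sym$ on the right-hand side and discard it via \eqref{eq:Phi_sym} before comparing, whereas the paper compares against the unsymmetrized products and invokes the ``$\V=\sym(\W)$'' clause of \cref{lemma:Phi} to produce the symmetrization --- mathematically the same bookkeeping.
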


\begin{proof}
  Using \cref{lemma:Phi}, we can rewrite this result in terms of polynomials and apply the binomial theorem for powers of polynomials:
  \begin{align*}
    \Phi\Sqr{\sop{(\v + \Vc{u})}}
    = \Phi\Sqr{\v + \Vc{u}}^d%
    = \sum_{k=0}^d \binom{d}{k} \Phi\Sqr{\v}^k\Phi\sqr{\Vc{u}}^{d-k}%
    = \sum_{k=0}^d \binom{d}{k} \Phi\Sqr{\sop[k]{\v} \otimes \sop[d-k]{\Vc{u}}}.
  \end{align*}
  Then, since $\sop{(\v + \Vc{u})}$ is a symmetric tensor and
  $\Phi$ is bijective when restricted to symmetric tensors, we obtain the result.
\end{proof}

\subsection{Bell polynomials, cumulants and higher-order moments}
\label{sec:bell-polyn-cumul}
An important tool from combinatorial enumeration is Bell polynomials, and
these play an intimate role in computation of higher-order moments.
First, we present the definition and basic facts of Bell polynomials,
which can be found in \citet{bell1927partition,comtet2012advanced}.
Then we explain the connection to moments.

\begin{definition}[Bell polynomials]\label{def:BellPolynomials}
  The (complete) Bell polynomials are defined by
  \begin{equation}\label{eq:belldef}
    B_k\prn{x_1, \dots, x_k}
    = \sum_{\Vc{j}\in \cP_k}
    \frac{k!}{j_{1}!j_{2}!\cdots j_{k}!}
    \Prn{\frac{x_{1}}{1!}}^{j_{1}}
    \Prn{\frac{x_{2}}{2!}}^{j_{2}}
    \cdots
    \Prn{\frac{x_{k}}{k!}}^{j_{k}},
  \end{equation}
  where $\cP_k = \Set{\Vc{j} \in \bbZ^k_{\scalebox{.65}{${\ge }0$}}:\, j_1 + 2 j_2 + \cdots + k j_k = k}$.

  Although the cardinality of $\cP_k$ grows exponentially with $k$, the Bell polynomials can be calculated instead through the recursion 
  \begin{equation}\label{eq:bell-recursion}
    B_{k}\prn{x_1, \ldots, x_{k}}
    = \sum_{i=0}^{k-1} \binom{k-1}{i} B_{i}(x_1, \ldots, x_{i}) x_{k-i},
  \end{equation}
  with the convention $B_0 = 1$. 
  The first few Bell polynomials are
  \begin{align*}
      B_1(x_1) &= x_1, \\
      B_2(x_1,x_2) &=x_1^2 + x_2, \\
      B_3(x_1,x_2,x_3) &=x_1^3 + 3x_1x_2 + x_3, \qtext{and}\\
      B_4(x_1,x_2,x_3,x_4) &= x_1^4 + 6 x_1^2 x_2 + 4 x_1 x_3 + 3 x_2^2 + x_4.
  \end{align*}
  The partial derivatives are given by
  \begin{equation}\label{eq:bell-derivative}
    \frac{\partial B_{k}}{\partial x_i} \prn{x_1, \ldots, x_{k}}
    = \binom{k}{i}B_{k-i}(x_1, \ldots, x_{k-i}).
  \end{equation}
\end{definition}

Bell polynomials are useful for expressing higher-order moments of random variables in terms of cumulants per the following lemma.
\begin{lemma}[{Moments, Cumulants and Bell polynomials; \citealp[p.~160]{comtet2012advanced}}]
\label{lem:cums_and_bell}
  Let $Z$ be a real random variable and suppose %
  \begin{equation*}
	\gamma(t) = \log\prn*{\E\prn{e^{tZ}}}
	\qtext{and}
	\kappa_k = \frac{\mathrm{d}^{k}\gamma}{\mathrm{d} t^k} (0),\; k\in \Natural
  \end{equation*}
  exist.  Then
  \begin{displaymath}
      \E \prn*{Z^d} = B_d(\kappa_1, \dots, \kappa_d).
  \end{displaymath}
  The function $\gamma(t)$ is called the cumulant generating function, and its $k$-th derivatives evaluated at zero, $\kappa_k$, are called cumulants.
\end{lemma}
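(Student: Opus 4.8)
The plan is to pass to generating functions and use the structural identity $\E\prn{e^{tZ}} = e^{\gamma(t)}$ together with the exponential generating function of the complete Bell polynomials. Write $M(t) = \E\prn{e^{tZ}}$ for the moment generating function, so that by definition $\gamma(t) = \log M(t)$ and hence $M(t) = e^{\gamma(t)}$.

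First I would record two Taylor expansions about $t = 0$. Since $M(0) = \E(1) = 1$, we have $\gamma(0) = 0$, so $\gamma$ has no constant term and its Taylor coefficients are exactly the cumulants: $\gamma(t) = \sum_{k \ge 1} \kappa_k \, t^k / k!$. On the other side, differentiating under the expectation gives $M(t) = \sum_{d \ge 0} \E\prn{Z^d} \, t^d/d!$. The hypotheses that $\gamma$ and the $\kappa_k$ exist guarantee that $M$ is analytic near the origin, so both expansions are valid and coefficient comparison is legitimate (equivalently, one may argue entirely with formal power series).

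The key step is the identity
\[
  \exp\Prn{\sum_{k=1}^\infty x_k \frac{t^k}{k!}} = \sum_{d=0}^\infty B_d(x_1, \dots, x_d)\, \frac{t^d}{d!},
\]
which I would derive directly from the partition-sum definition of $B_d$ in \cref{def:BellPolynomials}: factor the left-hand side as $\prod_{k \ge 1} \exp\Prn{x_k t^k/k!}$, expand each factor as a power series in $t$, and collect the coefficient of $t^d$. The terms contributing to $t^d$ are indexed by exponent vectors with $\sum_k k\, j_k = d$, i.e.\ by $\cP_d$, and their multinomial weights reproduce exactly the sum defining $B_d$.

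Applying this identity with $x_k = \kappa_k$ turns $M(t) = e^{\gamma(t)}$ into $\sum_{d} B_d(\kappa_1, \dots, \kappa_d)\, t^d/d!$. Matching this against $M(t) = \sum_d \E\prn{Z^d}\, t^d/d!$ and equating the coefficients of $t^d/d!$ yields $\E\prn{Z^d} = B_d(\kappa_1, \dots, \kappa_d)$, as claimed. The only real obstacle is verifying the Bell-polynomial generating-function identity from the given combinatorial definition and confirming that the coefficient comparison is justified by the existence hypotheses; no deeper idea is needed beyond the exponential-of-a-series structure relating moments and cumulants.
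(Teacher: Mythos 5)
Your proposal is correct. Note that the paper itself offers no proof of this lemma --- it is quoted directly from \citet[p.~160]{comtet2012advanced} --- so there is no in-paper argument to compare against; your write-up is precisely the classical derivation found in that reference: write $M(t)=e^{\gamma(t)}$, invoke the exponential generating function $\exp\prn*{\sum_{k\ge1}x_k t^k/k!}=\sum_{d\ge0}B_d(x_1,\dots,x_d)\,t^d/d!$ (which, as you say, drops out of the partition-sum definition \cref{eq:belldef} by expanding the product of exponentials and collecting the coefficient of $t^d$), and compare coefficients with $M(t)=\sum_{d\ge0}\E\prn{Z^d}\,t^d/d!$. The one point worth tightening is the analytic justification: existence of the derivatives $\kappa_k$ at $t=0$ by itself does not imply analyticity of $M$, so your coefficient comparison implicitly reads the hypothesis as finiteness of $\E\prn{e^{tZ}}$ on a neighborhood of $0$, which is what actually guarantees that $M$ (hence $\gamma$) is analytic there and that all moments exist. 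If you want to avoid convergence questions altogether, apply Fa\`a di Bruno's formula to $M=e^{\gamma}$ at $t=0$: it gives $M^{(d)}(0)=B_d\prn{\gamma'(0),\dots,\gamma^{(d)}(0)}e^{\gamma(0)}=B_d(\kappa_1,\dots,\kappa_d)$ using only $d$-fold differentiability, which matches the lemma's stated hypotheses more literally.
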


In working with moments, it is useful to know
the convention that $0!=1$, so that
$\binom{d}{0} = 1$ for any $d \geq 0$.
Further, any matrix $\Mx{C}$ to the power zero is the identity, i.e., $\Mx{C}^0 = \Id$.

\section{Characterizing Moments of Multivariate Gaussians}

We are interested in the moments of Gaussians primarily as a prelude to understanding
moments of GMMs; nevertheless, some of these results may also have other applications.
We write $X \sim \cN(\m, \C)$ to denote the Gaussian random variable $X \in \R^n$
with mean $\m \in \R^n$ and symmetric positive definite covariance $\C \in \R^{n \times n}$.

In this paper, it is convenient to allow for $\C$ to be only positive \emph{semi}definite.
In the case that $\C$ is rank-deficient, we say $X$ has a degenerate distribution. 
Then $X$ is supported on the affine subspace $\m + \operatorname{colspan}(\C) \subseteq \R^n$, restricted to which it has a probability density function given analogously as in \cref{sec:gauss-gauss-mixt}. 
An important case is when $\C=\Mx{0}$, where $X=\m$
with probability 1 (a discrete distribution with only one option).
The proofs of our results only require $\C$ to be positive semidefinite.

\subsection{Symmetric tensor formulation of Gaussian moments }
\label{sec:symm-tens-form}

Below we present the formulas for the higher-order moments of a Gaussian vector in $\R^n$.
These formulas are obtained almost directly from \cref{lemma:Phi} combined with
the formula for the moments of a one-dimensional random Gaussian variable.
Although we provide a formulation for the moment, we do not recommend computing
these directly. Rather, this is a stepping stone to efficient methods for computing
quantities involving the moments.

\begin{theorem}\label{thm:gaussian_moments}
  If $X \sim \N( \m, \C )$, then
  its $d$th moment $\M = \E(\sop{X})$ for any $d \in \mathbb{N}$ is given by
  \begin{equation} \label{eq:gaussian-moment}
    \M
    = \sum_{k=0}^{\floor{ d/2}}
    C_{d,k} \, \sym\Prn{ \sop[d-2k]{\m} \otimes \sop[k]{\C} },
  \end{equation}
  with $C_{d,k}= \binom{d}{2k} \dblfact{k}$.
\end{theorem}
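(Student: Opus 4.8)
The plan is to transport the entire identity through the tensor-to-polynomial isomorphism $\Phi$ of \cref{lemma:Phi}, which collapses the multivariate tensor statement into a one-dimensional moment computation. Since the moment tensor $\M = \E(\sop{X})$ is symmetric (being an expectation of tensor powers) and $\Phi$ is injective on symmetric tensors by \specialref{enum:Phi_1to1}, together with \eqref{eq:Phi_sym} it suffices to verify the scalar polynomial identity $\Phi\Sqr{\M}(\Vc{z}) = \Phi\Sqr{\,\sum_{k} C_{d,k}\, \sop[d-2k]{\m} \otimes \sop[k]{\C}\,}(\Vc{z})$ for all $\Vc{z} \in \R^n$, and then conclude by injectivity and linearity of $\sym$.

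First I would rewrite the left-hand side. By the definition of $\Phi$, linearity of expectation, and \cref{lemma:inner_tensor_powers}, we have $\Phi\Sqr{\M}(\Vc{z}) = \ang{\E(\sop{X}), \sop{\Vc{z}}} = \E\prn{\ang{\sop{X}, \sop{\Vc{z}}}} = \E\prn{(\ang{X,\Vc{z}})^d}$. The key observation is that the scalar random variable $Z := \ang{X, \Vc{z}} = X'\Vc{z}$ is itself (univariate) Gaussian, $Z \sim \N(\m'\Vc{z}, \Vc{z}'\C\Vc{z})$, which holds even in the positive semidefinite case. Thus the problem reduces to computing the $d$th moment of a one-dimensional Gaussian with mean $m = \m'\Vc{z}$ and variance $s^2 = \Vc{z}'\C\Vc{z}$.

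Next I would apply \cref{lem:cums_and_bell} to write $\E(Z^d) = B_d(\kappa_1, \dots, \kappa_d)$ in terms of cumulants, using the defining fact that the Gaussian cumulant generating function is the quadratic $\gamma(t) = mt + \tfrac{1}{2}s^2 t^2$, so that $\kappa_1 = m$, $\kappa_2 = s^2$, and $\kappa_k = 0$ for $k \ge 3$. The step I expect to be the crux is the combinatorial simplification of $B_d(m, s^2, 0, \dots, 0)$. Inspecting \cref{def:BellPolynomials}, only the partitions $\Vc{j} \in \cP_d$ with $j_3 = \cdots = j_d = 0$ contribute; setting $k = j_2$ forces $j_1 = d - 2k$ with $0 \le k \le \floor{d/2}$, and the surviving multinomial coefficient collapses to $\tfrac{d!}{(d-2k)!\,k!\,2^k} = \binom{d}{2k}\dblfact{k} = C_{d,k}$. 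This yields $\E(Z^d) = \sum_{k=0}^{\floor{d/2}} C_{d,k}\, m^{d-2k} (s^2)^k$.

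Finally I would translate back into tensor language. By \specialref{enum:Phi_matvec}, $m = \Phi\Sqr{\m}(\Vc{z})$ and $s^2 = \Phi\Sqr{\C}(\Vc{z})$, and by iterating \specialref{enum:Phi_prod} the monomial $m^{d-2k}(s^2)^k$ equals $\Phi\Sqr{\sop[d-2k]{\m} \otimes \sop[k]{\C}}(\Vc{z})$. Substituting and using linearity of $\Phi$ shows $\Phi\Sqr{\M}(\Vc{z})$ coincides with $\Phi$ applied to the claimed sum. Since $\M$ is symmetric, the final clause of \specialref{enum:Phi_1to1} forces $\M = \sym\Prn{\sum_{k=0}^{\floor{d/2}} C_{d,k}\, \sop[d-2k]{\m} \otimes \sop[k]{\C}}$, and pulling $\sym$ through the finite sum by linearity gives exactly \eqref{eq:gaussian-moment}, completing the proof.
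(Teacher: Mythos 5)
Your proposal is correct, and its skeleton matches the paper's proof: transport the identity through $\Phi$ from \cref{lemma:Phi}, observe that $\Vc{z}'X$ is a univariate Gaussian with mean $\m'\Vc{z}$ and variance $\Vc{z}'\C\Vc{z}$, expand its $d$th moment as $\sum_{k} C_{d,k}\, (\m'\Vc{z})^{d-2k}(\Vc{z}'\C\Vc{z})^{k}$, translate back via \cref{lemma:Phi}\ref{enum:Phi_matvec} and \cref{lemma:Phi}\ref{enum:Phi_prod}, and conclude by the injectivity of $\Phi$ on symmetric tensors from \cref{lemma:Phi}\ref{enum:Phi_1to1}. The one genuine difference is how the univariate moment formula is obtained: the paper simply quotes it from \citet[Theorem 3.1]{gut2009probability}, whereas you rederive it from \cref{lem:cums_and_bell} by noting that the Gaussian cumulant generating function is quadratic (so $\kappa_1 = m$, $\kappa_2 = s^2$, $\kappa_k = 0$ for $k \ge 3$) and then collapsing $B_d(m, s^2, 0, \dots, 0)$ through the partition definition of the Bell polynomials, verifying that the surviving coefficient $\frac{d!}{(d-2k)!\,k!\,2^k}$ equals $C_{d,k}$ --- a computation that checks out. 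Your route is slightly longer but fully self-contained, and it has the conceptual advantage of running the whole argument on machinery the paper develops anyway (cumulants and Bell polynomials, which reappear in \cref{prop:Psi} and the recursions downstream of it); the paper's route buys brevity at the cost of an external citation. Both arguments remain valid when $\C$ is only positive semidefinite, a point you correctly flag for the marginal $\Vc{z}'X$.
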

\begin{proof}
  Using \specialref{enum:Phi_matvec}, we have
  \begin{equation}
    \Phi\Sqr{\E\prn{\sop{X}}}(\Vc{z})
    =  \Ang{ \sop{\Vc{z}}, \E\prn{\sop{X}} }
    = \E\prn*{\ang{ \sop{\Vc{z}}, \sop{X}}}
    = \E\prn*{(\Vc{z}'  X)^d}.
  \end{equation}
  Fixing $\Vc{z}\in \R^n$, we have
  $\Vc{z}' X \sim \mathcal{N}(\Vc{z}' \m, \Vc{z}' \C \Vc{z})$
  per the properties of marginals of multivariate Gaussians.
  The higher-order moment of a univariate Gaussian $Y \sim \N(\mu,\sigma)$ is
  \begin{displaymath}
    \E(Y^d) = \sum_{k=1}^{\floor{d/2}} \binom{d}{2k} \dblfact{k} \mu^{d-2k} \sigma^k
  \end{displaymath}
  per, e.g., \citet[Theorem 3.1]{gut2009probability}.
  Hence, the higher order moments of $\Vc{z}'X$ are given by 
  \begin{align*}
    \Phi\sqr*{\E\prn{\sop{X}}}(\Vc{z}) = \E\prn{(\Vc{z}'X)^d}
    &= \sum_{k=0}^{\floor{d/2}} \binom{d}{2k} \dblfact{k} (\Vc{z}' \m)^{d-2k} (\Vc{z}' \C \Vc{z})^k, \\
    &= \sum_{k=0}^{\floor{d/2}} \binom{d}{2k} \dblfact{k} \Phi\Sqr{\sop[d-2k]{\m} \otimes \sop[k]{\C}}(\Vc{z}).
  \end{align*}
  Here the last line follows from \specialref{enum:Phi_matvec} and \specialref{enum:Phi_prod}.
  Finally, since $\E\prn{\sop{X}}$ is a symmetric tensor, the result follows from \specialref{enum:Phi_1to1}.
\end{proof}
\begin{example}
The first few moments for $X \sim \N(\m,\C)$ are
\begin{align*}
  \M[1] &= \m
  && \in \Real^n, \\
  \M[2] &= \sop[2]{\m} + \C
  && \in \Real^{n \times n}, \\
  \M[3] &= \sop[3]{\m} + 3\sym\prn{ \m \otimes \C}
  && \in \Real^{n \times n \times n},\\
  \M[4] &= \sop[4]{\m} + 6\sym\prn{ \sop[2]{\m} \otimes \C} + 3 \sym\Prn{\sop[2]{\C}}
  && \in \Real^{n \times n \times n \times n}.
\end{align*}
\end{example}
\cref{thm:gaussian_moments} can be alternatively formulated in terms of homogeneous polynomials, using $\Phi$, as
\begin{equation}\label{eq:gaussian-moment-Phi}
\Phi\sqr*{\M} = \E\prn*{\Phi\sqr{X}^d} = \sum_{k=0}^{\floor{ d/2}} C_{d,k} \Phi\sqr{\m}^{d-2k} \Phi\sqr{\C}^k.
\end{equation}
\subsection{Inner product of Gaussian moments}
\label{sec:inner-product-gaussian}

A key proposition characterizes the inner product of two Gaussian moments.
We show that these can be computed efficiently using the Bell polynomials discussed in \cref{sec:bell-polyn-cumul}.
As a standalone result, this proposition has several applications, including kernel learning.
On this topic, similar specific formulas for $d=1,2,3$ are provided by \citet[Table 1]{MuFuDiSc12}.
Nevertheless, to the best of our knowledge, the formula for general $d$ has not been previously discovered.
As for our paper, this proposition is crucial not only for the implicit calculation of the inner product of two Gaussian moments (see \cref{thm:gmm_tensor_norm}) but also to obtain useful recursion formulas for other implicit calculations (see \cref{prop:biaseddotp-single,thm:gmmdotp,prop:debiaseddotp}).
This result is applicable to the case of degenerate covariance matrices
as well, but we defer its proof to \cref{sec:proof-prop:debiaseddotp}.

\vspace{0.5em}

\begin{keyprop}\label{prop:Psi}
	Suppose $X \sim \N( \m, \C )$ and $\tilde X \sim \N( \tm, \tC)$ are independent random variables, let $d\in\mathbb{N}$ and define the dot-product between the dth moments $\M = \E(\sop{X})$ and $\tM = \E(\sop{\tilde X})$ as
	\begin{equation}\label{eq:Psidef}
		\PsiFn{\m}{\C}{\tm}{\tC} := \Ang{\M,\tM}
	\end{equation}	
	Then 
	\begin{equation}\label{eq:Psi_formula}
		\PsiFn{\m}{\C}{\tm}{\tC} =  B_d\prn{\kappa_1, \dots, \kappa_d},
	\end{equation}
	where $B_d$ is the $d$th Bell polynomial and for each $k\in [d]$
	\begin{equation}\label{eq:kappa_def}
		\kappa_k =
		\begin{cases}
			(k-1)!\trace\prn*{\Mx{Z}^{\frac{k}2}} +
			\frac{k!}{2} \prn*{\m'\tC \Mx{Z}^{\frac{k-2}2} \m + \tm' \Mx{Z}^{\frac{k-2}2} \C \tm }
			&\text{if } k \text{ is even},\\
			k! \; \tm'\Mx{Z}^{\frac{k-1}2}\m
			&\text{if } k \text{ is odd},
		\end{cases}
	\end{equation}
	with $\Mx{Z} = \C \tC$.
\end{keyprop}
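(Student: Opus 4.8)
The plan is to reduce the tensor inner product to a scalar moment of a bilinear form in the two Gaussians, and then read off its cumulants. First I would use the bilinearity of the inner product together with the independence of $X$ and $\tilde X$ to pull both expectations outside: since $\M = \E(\sop{X})$ and $\tM = \E(\sop{\tilde X})$,
\[
  \Ang{\M, \tM} = \E\Prn{\Ang{\sop{X}, \sop{\tilde X}}} = \E\Prn{\Ang{X, \tilde X}^d} = \E\Prn{(X' \tilde X)^d},
\]
where the middle equality is \cref{lemma:inner_tensor_powers} and the outer expectation is over the product law of $(X, \tilde X)$. Writing $W = X' \tilde X$, the quantity $\PsiFn{\m}{\C}{\tm}{\tC}$ is thus exactly the scalar $d$th moment $\E(W^d)$.

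Next I would invoke \cref{lem:cums_and_bell}: once the cumulant generating function $\gamma(t) = \log \E(e^{t W})$ exists near $t=0$, its derivatives at $0$ are the cumulants $\kappa_k$ and $\E(W^d) = B_d(\kappa_1, \dots, \kappa_d)$, which is precisely the right-hand side of \eqref{eq:Psi_formula}. So the whole proposition reduces to computing the cumulants of $W$ and matching them to \eqref{eq:kappa_def}. To find $\gamma$, I would condition on $\tilde X$: given $\tilde X$, the variable $W = \tilde X' X$ is a linear functional of the Gaussian $X$, hence $W \mid \tilde X \sim \N(\m' \tilde X,\; \tilde X' \C \tilde X)$ and $\E(e^{tW} \mid \tilde X) = \exp\Prn{t\, \m' \tilde X + \tfrac{t^2}{2}\, \tilde X' \C \tilde X}$. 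Taking the remaining expectation over $\tilde X \sim \N(\tm, \tC)$ is a Gaussian integral of the form $\E_{\tilde X}\bigl[\exp(\Vc{a}' \tilde X + \tfrac12 \tilde X' \Mx{D} \tilde X)\bigr]$ with $\Vc{a} = t\m$ and $\Mx{D} = t^2 \C$; completing the square gives the closed form
\[
  \gamma(t) = -\tfrac12 \log\Abs{\Id - t^2 \Mx{Z}} + \tfrac12\, \Vc{q}' \Prn{\tC^{-1} - t^2\C}^{-1} \Vc{q} - \tfrac12\, \tm' \tC^{-1} \tm,
\]
with $\Vc{q} = \tC^{-1}\tm + t\m$ and $\Mx{Z} = \C\tC$, where I use $\Abs{\Id - t^2\tC\C} = \Abs{\Id - t^2\Mx{Z}}$ since $\C\tC$ and $\tC\C$ share a spectrum.

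Finally I would extract the Taylor coefficients $\kappa_k = \gamma^{(k)}(0) = k!\,[t^k]\gamma(t)$. The determinant term expands as $-\tfrac12\log\Abs{\Id - t^2\Mx{Z}} = \tfrac12\sum_{j\ge1}\tfrac{t^{2j}}{j}\trace(\Mx{Z}^j)$, contributing $(k-1)!\,\trace(\Mx{Z}^{k/2})$ to the even cumulants. For the quadratic term I would use the Neumann series $\Prn{\tC^{-1} - t^2\C}^{-1} = \Prn{\Id - t^2\tC\C}^{-1}\tC = \sum_{j\ge0} t^{2j}(\tC\C)^j\tC$ together with the collapses $(\tC\C)^j\tC = \tC\Mx{Z}^j$ and $\Prn{\Id - t^2\tC\C}^{-1}\tC = \tC\Prn{\Id - t^2\Mx{Z}}^{-1}$; collecting powers of $t$ reproduces the mean-dependent even contribution $\tfrac{k!}{2}\Prn{\m'\tC\Mx{Z}^{(k-2)/2}\m + \tm'\Mx{Z}^{(k-2)/2}\C\tm}$ and the odd contribution $k!\,\tm'\Mx{Z}^{(k-1)/2}\m$, matching \eqref{eq:kappa_def}. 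The \textbf{main obstacle} is precisely this bookkeeping: tracking which matrix products collapse to powers of $\Mx{Z}$ and verifying the even/odd split while being careful that $\C$ and $\tC$ need not commute. A secondary point is that the Gaussian integral invokes $\tC^{-1}$, so the derivation assumes $\C,\tC$ positive definite; but since both $\PsiFn{\m}{\C}{\tm}{\tC}$ (via \cref{thm:gaussian_moments}) and $B_d(\kappa_1,\dots,\kappa_d)$ are polynomials in the entries of $\m,\C,\tm,\tC$ and the final formula contains no inverses, equality on the dense set of positive definite covariances extends by continuity to all positive semidefinite ones, which disposes of the degenerate case.
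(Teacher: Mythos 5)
Your proposal is correct, and it coincides with the paper's proof of \cref{prop:Psi} up through the reduction $\ang{\M,\tM}=\E\prn{\ang{X,\tilde X}^d}$ and the appeal to \cref{lem:cums_and_bell}; the genuine divergence is in how the cumulants of $\ang{X,\tilde X}$ are computed, which the paper delegates to \cref{prop:cum_formula} in the appendix. There the paper whitens rather than conditions: it factors $\C=\Csqrt\Csqrt'$ and $\tC=\tCsqrt\tCsqrt'$ with Cholesky-type factors, takes an SVD $\Csqrt'\tCsqrt=\Mx{U}\Mx{S}\Mx{V}'$, and changes variables so that $\ang{X,\tilde X}$ splits into $\m'\tm$ plus $n$ \emph{independent} one-dimensional pieces of the form $s\,\alpha\beta+u\,\alpha+v\,\beta$ with $\alpha,\beta$ standard normal; a scalar CGF lemma (\cref{lem:cumulants_1dgaussprod}) then yields each piece's cumulants, and summing over coordinates reassembles the traces and quadratic forms in $\Mx{Z}$. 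Your route --- condition on $\tilde X$, complete the square to obtain the closed-form multivariate CGF, and extract coefficients via the $\log\det$ series and a Neumann expansion --- is also valid: the collapses $(\tC\C)^j\tC=\tC\Mx{Z}^j$ and $\tC\Mx{Z}^j\tC^{\Inv}=(\tC\C)^j$ check out, and the three pieces of the expanded quadratic form reproduce the even contributions $\frac{k!}{2}\prn{\m'\tC\Mx{Z}^{(k-2)/2}\m+\tm'\Mx{Z}^{(k-2)/2}\C\tm}$ and the odd contribution $k!\,\tm'\Mx{Z}^{(k-1)/2}\m$ of \cref{eq:kappa_def} exactly. The trade-off is this: the paper's whitening argument never inverts a covariance (the factors exist for any positive semidefinite matrix, as the paper notes), so the degenerate Gaussians the paper explicitly permits are covered with no additional step, and the analytic work is isolated in a reusable scalar lemma; your Gaussian integral requires $\tC\succ\Mx{0}$, which makes your closing polynomial-identity/density argument essential rather than cosmetic --- though it is sound, and in fact mirrors the argument the paper itself makes in \cref{cor:Psi} to extend \cref{prop:Psi} to symmetric matrices that are not positive semidefinite. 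What your approach buys in exchange is the closed-form cumulant generating function itself, from which all $\kappa_k$ are read off at once instead of being assembled coordinatewise.
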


\begin{proof}
We have
\begin{equation}
	\PsiFn{\m}{\C}{\tm}{\tC}
	=  \ang**{\E\prn*{\sop{X}}, \E\prn*{\sop{\tilde X}}}
	=  \E\prn**{\ang{X, \tilde X}^{d}}.
\end{equation}
Defining the random variable $Y=\ang{X, \tilde{X}}$,
\cref{lem:cums_and_bell} implies that this $d$th moment is given in terms of the cumulants
\begin{displaymath}
	\E\prn{Y^{d}} = B_d(\kappa_1, \dots, \kappa_d)
\end{displaymath}
where
\begin{displaymath}
  \kappa_k=\frac{d^r}{d t^r} \prn**{\log\prn*{\E\prn{e^{t Y}}}}_{t=0}
  \qtext{for all} k \in [d].
\end{displaymath}
We defer the technical details of the calculation of the cumulants, $\kappa_k$, to \cref{prop:cum_formula}.
\end{proof}

These formulas may seem intimidating, but these are scalars
that can be computed directly from the inputs via relatively simple equations.
\begin{example}
The first few $\kappa_k$'s are:
\begin{align*}
  \kappa_1 &= \m'\tm
  ,&
  \kappa_2 &= \trace(\Mx{Z}) + \tm'\C\tm + \m'\tC\m
  ,\\
  \kappa_3 &= 6 \m' \Mx{Z} \tm
  ,&
  \kappa_4 &= 6 \trace(\Mx{Z}^2)
  + 12 \prn*{\tm'\C\Mx{Z}\tm + \m'\Mx{Z}\tC\m}
  .
\end{align*}
To calculate the dot-product we use the scalars $\kappa_k$'s and the recurrence formula for Bell polynomials  \cref{eq:bell-recursion}. For instance, for $d=1,2,3$, we have
\begin{align*}
	\PsiFn[1]{\m}{\C}{\tm}{\tC} &= \kappa_1 = \m'\tm\\
	\PsiFn[2]{\m}{\C}{\tm}{\tC} &= \kappa_1^2 + \kappa_2 = \prn{\m'\tm}^2 + \trace(\Mx{Z}) + \tm'\tC\tm + \m'\C\m,\\
	\PsiFn[3]{\m}{\C}{\tm}{\tC} &= \kappa_1(\kappa_1^2 + \kappa_2) + 2 \kappa_2 \kappa_1 + \kappa_3 = \kappa_1^3 + 3 \kappa_1 \kappa_2 + \kappa_3\\
	&=\prn{\m'\tm}^3 +3\m'\tm\prn*{\trace(\Mx{Z}) + \tm'\C\tm + \m'\tC\m}+ 6 \m'\Mx{Z} \tm.
\end{align*}
\end{example}
It is useful to calculate the derivatives of $\Psi^{(d)}$ with respect to $\m$ and $\C$ for the optimization problem \cref{eq:opt-problem}. However, since these formulas for general GMM models are even lengthier, we include these in \cref{sec:inner_product_derivatives}. Particular cases of these formulas, for instance when the $\C{j}$ are diagonal, are included and explained in \cref{sec:implementation}.

\subsection{Gradients of Gaussian moments}
\label{sec:grad-gauss-moments}

We generally do not need to use the moment formulated in \cref{eq:gaussian-moment} explicitly; instead,
its inner product with a vector outer product can be computed efficiently,
including gradients.
The import is that working with the Gaussian moment tensor \emph{implicitly}
can be extremely efficient in terms of both storage and computations. 
Using \cref{prop:Psi}, we can efficiently compute dot products and corresponding
gradients with
respect to a Gaussian moment, as follows.

\vspace{0.5em}

\begin{theorem}\label{prop:biaseddotp-single}
  Let $X \sim \N( \m, \C )$ and
  its $d$th moment be $\M = \E(\sop{X})$.
  Then for any $\a \in \R^n$, we can compute
  \begin{displaymath}
      \gmdp{d} \equiv \ang{ \M, \sop{\a}}
  \end{displaymath}
  where $\gmdp{d}$ can be calculated implicitly using the recursion formula
  \begin{equation}\label{eq:andotprod-single}    
    \gmdp{d} = \gmdp{d-1}\a' \m +(d-1) \gmdp{d-2} \a' \C \a
  \end{equation}
  with $\gmdp{0} = 1$ and $\gmdp{1} = \a' \m$.
  Furthermore, the gradients are given by
  \begin{align}\label{eq:andotprodder-a}
    \nabla_{\a} \ang{ \M, \sop{\a} }
    &= d \prn**{\gmdp{d-1} \m + (d-1) \gmdp{d-2} \C \a}, \\
    \label{eq:andotprodder-mu}
    \nabla_{\m} \ang{ \M, \sop{\a} }
    & = d \gmdp{d-1} \a,
    \qtext{and} \\
    \label{eq:andotprodder-sigma}
    \nabla_{\C} \ang{ \M, \sop{\a} }
    &= \binom{d}{2} \gmdp{d-2} \a" \a'.
  \end{align}
\end{theorem}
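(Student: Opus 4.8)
The plan is to collapse the tensor inner product to a univariate Gaussian moment, after which all dependence on the parameters enters through just two scalars. First I would move the inner product inside the expectation and apply \cref{lemma:inner_tensor_powers}: since $\M=\E(\sop X)$ and $\ang{\cdot,\cdot}$ is bilinear,
\begin{displaymath}
  \gmdp d = \ang{\M,\sop{\a}} = \E\ang{\sop X,\sop{\a}} = \E\bigl((\a' X)^d\bigr).
\end{displaymath}
Setting $Y=\a' X$, the marginalization property of multivariate Gaussians (already invoked in the proof of \cref{thm:gaussian_moments}) gives $Y\sim\N(\a'\m,\,\a'\C\a)$, a univariate Gaussian with mean $t:=\a'\m$ and variance $s:=\a'\C\a$. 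Hence $\gmdp d$ is precisely the $d$th moment of $Y$, and it depends on $(\m,\C)$ only through $t$ and $s$.

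For the recursion I would apply \cref{lem:cums_and_bell}, so that $\E(Y^d)=B_d(\kappa_1,\dots,\kappa_d)$ with $\kappa_k$ the cumulants of $Y$. A univariate Gaussian has $\kappa_1=t$, $\kappa_2=s$, and $\kappa_k=0$ for $k\ge 3$. Substituting this into the Bell recursion \cref{eq:bell-recursion} annihilates every summand except those with $k-i\in\{1,2\}$, leaving
\begin{displaymath}
  \gmdp d = B_{d-1}\kappa_1 + (d-1)B_{d-2}\kappa_2 = \gmdp{d-1}\,\a'\m + (d-1)\,\gmdp{d-2}\,\a'\C\a,
\end{displaymath}
which is exactly \cref{eq:andotprod-single}; the base cases $\gmdp 0=B_0=1$ and $\gmdp 1=\kappa_1=\a'\m$ follow from the conventions $B_0=1$ and $B_1(x_1)=x_1$.

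For the gradients I would regard $\gmdp d=g_d(t,s)$ as a function of the two scalars and differentiate by the chain rule, using $\nabla_{\a}t=\m$, $\nabla_{\a}s=2\C\a$ (as $\C$ is symmetric), $\nabla_{\m}t=\a$, and $\nabla_{\C}s=\a\a'$, while $s$ is independent of $\m$ and $t$ is independent of $\C$. This reduces everything to the two identities $\partial_t g_d=d\,g_{d-1}$ and $\partial_s g_d=\binom{d}{2}g_{d-2}$. The slick way to obtain them is to reparametrize $Y=t+\sqrt{s}\,\xi$ with $\xi\sim\N(0,1)$: differentiating $g_d=\E\bigl((t+\sqrt{s}\,\xi)^d\bigr)$ in $t$ gives $\partial_t g_d=d\,\E(Y^{d-1})=d\,g_{d-1}$ at once, and differentiating in $s$ and then applying the standard-normal identity $\E(\xi h(\xi))=\E(h'(\xi))$ yields $\partial_s g_d=\binom{d}{2}g_{d-2}$. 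Feeding these into the chain rule gives \cref{eq:andotprodder-mu,eq:andotprodder-sigma} directly, and merges the $t$- and $s$-contributions into $\nabla_{\a}\gmdp d=d\,\gmdp{d-1}\m+d(d-1)\,\gmdp{d-2}\C\a$, which is \cref{eq:andotprodder-a} since $2\binom{d}{2}=d(d-1)$.

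I expect the only real friction to be the identity for $\partial_s g_d$: one must carry the factor $\tfrac{1}{2\sqrt{s}}$ produced by the chain rule through the Stein step so that the two powers of $\sqrt{s}$ cancel and leave the clean coefficient $\binom{d}{2}$. Equivalently---and avoiding the $s=0$ boundary of the reparametrization---one may start from the explicit polynomial $g_d(t,s)=\sum_k C_{d,k}\,t^{d-2k}s^k$, which \cref{thm:gaussian_moments} together with \cref{lemma:sym_orthogonal_projection,lemma:inner_tensor_powers} supply, and verify the binomial identities $(d-2k)C_{d,k}=d\,C_{d-1,k}$ and $(k+1)C_{d,k+1}=\binom{d}{2}C_{d-2,k}$; this is the same content reorganized as bookkeeping on the constants $C_{d,k}$. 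The reduction to the univariate moment and the chain-rule assembly are otherwise routine.
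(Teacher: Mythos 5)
Your proof is correct, and it takes a genuinely leaner route than the paper's. The paper derives this theorem as a corollary of the general two-moment inner-product formula: it views $\sop{\a}$ as the $d$th moment of the degenerate Gaussian $\tilde X\sim\N(\a,{\bf 0})$ and invokes \cref{cor:Psi} --- the extension of \cref{prop:Psi} to singular covariances, proved in the appendix by a polynomial-continuation argument --- to arrive at $\gmdp{d}=B_d(\a'\m,\,\a'\C\a,\,0,\dots,0)$, after which the recursion and gradients follow from \cref{eq:bell-recursion,eq:bell-derivative} and the chain rule. You reach the identical Bell-polynomial identity directly: $\ang{\M,\sop{\a}}=\E\prn{(\a'X)^d}$ by \cref{lemma:inner_tensor_powers}, the marginal $\a'X$ is a univariate Gaussian, and \cref{lem:cums_and_bell} with the Gaussian cumulants $\kappa_1=\a'\m$, $\kappa_2=\a'\C\a$, $\kappa_k=0$ for $k\ge 3$ gives the same $B_d$. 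This bypasses \cref{cor:Psi} entirely, and in fact it amounts to specializing the proof of \cref{prop:Psi} (cumulants of $\ang{X,\tilde X}$) to a deterministic $\tilde X$, carried out from scratch. Your gradient step likewise re-derives by hand --- via the $Y=t+\sqrt{s}\,\xi$ reparametrization with Stein's identity, or via the coefficients $C_{d,k}$ --- exactly the two instances of the Bell derivative formula that the paper cites, namely $\partial B_d/\partial x_1=d\,B_{d-1}$ and $\partial B_d/\partial x_2=\binom{d}{2}B_{d-2}$ from \cref{eq:bell-derivative}; citing that formula directly would shorten your argument and also dispose of the $s=0$ boundary issue you flag, since $B_d$ is a polynomial and your second (coefficient-checking) variant is precisely this observation. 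The trade-off: the paper's heavier machinery buys reuse, since the same \cref{cor:Psi} is needed again in \cref{prop:debiaseddotp} with second covariance argument $-\C$, where your probabilistic reduction would fail outright (no Gaussian has variance $\a'(-\C)\a<0$); your route buys a self-contained, elementary proof of the present statement that never leaves genuine probability distributions and never needs the appendix extension.
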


\begin{proof}
Let $\tilde X\sim \N( \a, \bf{0})$, that is, $\tilde X = \a$ with probability $1$, and notice that for any $d \in \Natural$, $\E\prn{\sop{\tilde X}} = \sop{\a}$. Applying \cref{cor:Psi} (the extension of \cref{prop:Psi} to symmetric matrices, not necessarily positive definite),
\begin{displaymath}
  \gmdp{d} =   \ang{ \M, \sop{\a}} = \PsiFn{\m}{\C}{\a}{{\bf 0}}
  = B_d\prn{\a'\m, \a' \C\a, 0, 0, \ldots, 0}.
\end{displaymath}
Then the recursion formula \cref{eq:andotprod-single}
follows from the recursion formula for Bell polynomials, \cref{eq:bell-recursion}.
The derivatives, \cref{eq:andotprodder-a,eq:andotprodder-mu,eq:andotprodder-sigma},
follow from the derivatives of the Bell polynomial, \cref{eq:bell-derivative},
and the chain rule.
\end{proof}

\section{Gaussian mixture models}

Now that we have characterized the moment of a single Gaussian, we can
employ those results for mixtures of Gaussians. %
We write $X \sim \sum_{j=1}^m \lambda_j \; \N( \m{j}, \C{j} )$ to denote the GMM
with $m$ components.
The value $\lambda_j \in [0,1]$ is the probability of component $j$,
and $\sum_{j=1}^m \lambda_j = 1$.
The $j$th Gaussian has mean $\m{j}$ and covariance $\C{j}$.

\subsection{Symmetric tensor formulation of GMM moments}
\label{sec:symm-tens-form-1}

An immediate corollary of the characterization of moments for a Gaussian, \cref{thm:gaussian_moments}, is a similar result for GMMs, as follows.

\begin{theorem}\label{prop:GMMbiasedmoments}
  If $X \sim \sum_{j=1}^m \lambda_j \; \N( \m{j}, \C{j} )$, then
  its $d$th moment $\M = \E\prn{\sop{X}}$ for any $d\geq2$ is given by
  \begin{equation}
    \M = \sum_{j=1}^m \sum_{k=0}^{\floor{d/2}} \lambda_j \, C_{d,k} \;
    \sym\prn**{ \sop[d-2k]{\m{j}} \otimes \sop[k]{\C{j}} }.
  \end{equation}
  with $C_{d,k}= \binom{d}{2k} \dblfact{k}$.
\end{theorem}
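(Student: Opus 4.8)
The plan is to derive the GMM moment directly from the single-Gaussian result, \cref{thm:gaussian_moments}, using the linearity of expectation over the mixture components. First I would recall that for a GMM, $X \sim \sum_{j=1}^m \lambda_j \, \N(\m{j}, \C{j})$, the expectation of any function $g$ decomposes as a convex combination of the component-wise expectations:
\begin{equation*}
  \E\prn{g(X)} = \sum_{j=1}^m \lambda_j \, \E_{X_j \sim \N(\m{j},\C{j})}\prn{g(X_j)}.
\end{equation*}
This is just the statement that the mixture density is the weighted sum of the component densities. Applying this with $g(X) = \sop{X}$ gives $\M = \E\prn{\sop{X}} = \sum_{j=1}^m \lambda_j \, \E_{X_j}\prn{\sop{X_j}}$.

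Next I would substitute the single-Gaussian formula for each component moment. By \cref{thm:gaussian_moments}, the inner expectation $\E_{X_j}\prn{\sop{X_j}}$ is exactly
\begin{equation*}
  \sum_{k=0}^{\floor{d/2}} C_{d,k} \, \sym\Prn{ \sop[d-2k]{\m{j}} \otimes \sop[k]{\C{j}} },
\end{equation*}
with the same constants $C_{d,k}= \binom{d}{2k} \dblfact{k}$. Pulling the weight $\lambda_j$ inside and combining the two summations then yields the claimed double-sum expression, which completes the argument.

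Since this is an essentially immediate consequence of the previous theorem combined with a one-line probabilistic fact, there is no real obstacle. The only points warranting a sentence of care are justifying the mixture decomposition of the expectation (which follows directly from the definition of a GMM density as a convex combination) and confirming that the constants $C_{d,k}$ and the index range $0 \le k \le \floor{d/2}$ are shared across all components so that the sums combine cleanly. The hypothesis $d \geq 2$ is only cosmetic here, ensuring the covariance terms genuinely appear; the derivation itself goes through verbatim from the single-Gaussian case, so I would present the proof as a short corollary-style calculation rather than a detailed computation.
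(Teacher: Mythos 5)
Your proposal is correct and matches the paper's proof essentially verbatim: the paper also reduces to the single-Gaussian result of \cref{thm:gaussian_moments} via the mixture decomposition $\E\prn{\sop{X}} = \sum_{j=1}^m \lambda_j \E\prn{\sop{X_j}}$, which it justifies with a latent component-indicator variable $\tau$ and the tower property rather than the equivalent density argument you give. Your side remarks (shared constants $C_{d,k}$, the cosmetic role of $d\geq 2$) are also accurate.
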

\begin{proof}
Let $X_j \sim \N( \m{j}, \C{j} )$, and $\tau$ a discrete integer random variable such that $P(\tau = j) = \lambda_j$. We then have that $X$ and $X_\tau$ have the same distribution, hence
$$\E\prn{\sop{X}} = \E_\tau\Prn{\E\prn{\sop{X_\tau}|\tau}} = \sum_{j=1}^m \lambda_j \E\prn{\sop{X_j}}$$
and the result follows by applying \cref{thm:gaussian_moments}.
\end{proof}

\subsection{Moment matching for GMMs with efficient computations}
\label{sec:moment-matching-gmms}

Suppose we have realizations $\set{\miwc[\x][p]}$ of a GMM and wish to determine
its parameters.
We can formulate a \emph{moment-matching} optimization problem of the form in
\cref{eq:opt-problem}.
Discarding the constant term, we are left with the following optimization problem:
\begin{equation}\label{eq:opt-problem-detailed}
\begin{aligned}
  \min_{\theta} \hspace{1em} & \fGMM(\theta) \equiv
  \underbrace{\nrm*{\M}^2}_{\fGMM_1(\theta)} -
  2 \underbrace{\frac1{p}\sum_{i=1}^p \ang**{\M, \sop{\x{i}}}}_{\fGMM_2(\theta)} \\
  \text{subject to}\hspace{1em} 
  & \M = \sum_{j=1}^m \sum_{k=0}^{\floor{d/2}} \lambda_j \, C_{d,k} \;%
    \sym\prn**{ \sop[d-2k]{\m{j}} \otimes \sop[k]{\C{j}} }, \\
    & \sum_{j=1}^m \lambda_j = 1, \; \lambda_j\ge 0, \; \m{j} \in \Real^n, \; \C{j} \in \Real^{n \times n}, \C{j} \succ 0, \\
    & \theta = \Set{ \prn{ \lambda_j, \m{j}, \C{j} }}_{j=1}^m .
\end{aligned}
\end{equation}
To solve this optimization problem using first-order methods, we need to be able
to efficiently compute
\begin{displaymath}
  \fGMM, \quad \FD{\fGMM}{\lambda_j}, \quad \FD{\fGMM}{\m{j}}, \qtext{and} \FD{\fGMM}{\C{j}}
  \qtext{for all} j \in [m].
\end{displaymath}
Let us consider the two terms in the objective function \cref{eq:opt-problem-detailed}, $\fGMM_1(\theta)$ and $\fGMM_2(\theta)$, independently.
The first term is calculated efficiently via a corollary of \cref{prop:Psi}. 

\begin{theorem}\label{thm:gmm_tensor_norm}
  Let $X \sim \sum_{j=1}^m \lambda_j \; \N( \m{j}, \C{j} )$
  and its $d$th moment be $\M = \E(\sop{X})$.
  Then
  \begin{displaymath}
    \fGMM_1(\theta) \equiv \nrm*{\M}^2 = \sum_{i=1}^m\sum_{j=1}^m \lambda_i\lambda_j
    B_d\prn{\kappa^{(ij)}_1, \dots, \kappa^{(ij)}_d},
  \end{displaymath}
  where $B_d$ is the $d$-th complete Bell polynomial and for each $k\in [d]$
  \begin{equation}\label{eq:kappa_def_ij}
    \kappa_k^{(ij)} =
    \begin{cases}
      (k{-}1)!\trace\prn{\Mx{Z}{ij}^{\frac{k}2}} + 
      \frac{k!}{2} \Prn{ v_k^{(ij)} + v_k^{(ji)} }
      & \text{if $k$ even}, \\
      k! \m{j}' \Mx{Z}{ij}^{\frac{k-1}{2}} \m{i}
      & \text{if $k$ odd},
    \end{cases}
  \end{equation}
  with $\Mx{Z}_{ij} = \C{i}\C{j}$ and $v_k^{(ij)} = \m{i}' \C{j}  \Mx{Z}{ij}^{\frac{k-2}2}  \m{i}$.
\end{theorem}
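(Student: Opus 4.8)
The plan is to reduce the squared norm to a double sum of pairwise inner products of single-Gaussian moments and then apply \cref{prop:Psi} to each term. First I would rewrite the mixture moment in the linear form $\M = \sum_{j=1}^m \lambda_j\, \E\prn{\sop{X_j}}$, where $X_j \sim \N(\m{j}, \C{j})$; this is exactly the decomposition established in the proof of \cref{prop:GMMbiasedmoments} by conditioning on the component label. Expanding by the bilinearity of the tensor inner product (\cref{def:tensor-inner-product}) then gives
\begin{equation*}
  \nrm*{\M}^2 = \Ang{\M, \M} = \sum_{i=1}^m \sum_{j=1}^m \lambda_i \lambda_j \Ang{\E\prn{\sop{X_i}}, \E\prn{\sop{X_j}}},
\end{equation*}
so it remains to evaluate each inner product $\Ang{\E\prn{\sop{X_i}}, \E\prn{\sop{X_j}}}$.

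Next I would take $X_i$ and $X_j$ to be independent and invoke \cref{prop:Psi}, which identifies this inner product with $\PsiFn{\m{i}}{\C{i}}{\m{j}}{\C{j}} = B_d\prn{\kappa_1^{(ij)}, \dots, \kappa_d^{(ij)}}$. The cumulants $\kappa_k^{(ij)}$ are read off from \cref{eq:kappa_def} under the substitution $(\m, \C, \tm, \tC) \mapsto (\m{i}, \C{i}, \m{j}, \C{j})$, so that $\Mx{Z} = \C\tC \mapsto \C{i}\C{j} = \Mx{Z}{ij}$. For odd $k$ this immediately returns $\kappa_k^{(ij)} = k!\, \m{j}' \Mx{Z}{ij}^{\frac{k-1}{2}} \m{i}$, matching the claimed formula, so the only real work is the even case.

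The even case is where the main obstacle lies. Under the same substitution the bracketed term $\m' \tC \Mx{Z}^{\frac{k-2}{2}} \m + \tm' \Mx{Z}^{\frac{k-2}{2}} \C \tm$ of \cref{eq:kappa_def} becomes
\begin{equation*}
  \m{i}' \C{j} \Mx{Z}{ij}^{\frac{k-2}{2}} \m{i} + \m{j}' \Mx{Z}{ij}^{\frac{k-2}{2}} \C{i} \m{j}.
\end{equation*}
The first summand is exactly $v_k^{(ij)}$. To recognize the second as $v_k^{(ji)} = \m{j}' \C{i} \Mx{Z}{ji}^{\frac{k-2}{2}} \m{j}$ (with $\Mx{Z}{ji} = \C{j}\C{i}$), I would use the telescoping identity $\prn{\C{i}\C{j}}^{\ell}\C{i} = \C{i}\prn{\C{j}\C{i}}^{\ell}$, valid for every integer $\ell \ge 0$, applied with $\ell = \tfrac{k-2}{2}$; this rewrites $\Mx{Z}{ij}^{\frac{k-2}{2}}\C{i}$ as $\C{i}\Mx{Z}{ji}^{\frac{k-2}{2}}$ and hence the second summand as $v_k^{(ji)}$. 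Substituting back produces the stated even-$k$ cumulant $(k-1)!\trace\prn{\Mx{Z}{ij}^{k/2}} + \frac{k!}{2}\prn{v_k^{(ij)} + v_k^{(ji)}}$ of \cref{eq:kappa_def_ij}, and summing over $i,j$ yields the claimed expression. I would finally remark that no positive-definiteness of the $\C{j}$ is actually needed: \cref{prop:Psi} remains valid for degenerate (positive semidefinite) covariances, so the formula holds verbatim in that case as well.
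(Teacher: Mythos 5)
Your proof is correct and takes essentially the same route as the paper's: the paper likewise decomposes $\M = \sum_{j=1}^m \lambda_j \E\prn{\sop{X_j}}$ via \cref{prop:GMMbiasedmoments}, expands $\nrm*{\M}^2$ bilinearly into pairwise terms, and evaluates each one with \cref{prop:Psi}. The only difference is that you spell out the substitution $(\m,\C,\tm,\tC)\mapsto(\m{i},\C{i},\m{j},\C{j})$ and the commutation identity $\Mx{Z}{ij}^{\ell}\,\C{i} = \C{i}\,\Mx{Z}{ji}^{\ell}$ needed to identify the even-$k$ cumulants with $v_k^{(ij)} + v_k^{(ji)}$, details that the paper's one-line proof leaves implicit.
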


Gradients of $\fGMM_1$ in terms of $\C{j}, \m{j}$ and $\lambda_j$ follow from \cref{prop:Psi_derivatives}. The term $\fGMM_2(\theta)$ in \cref{eq:opt-problem-detailed} is computed using an immediate corollary of \cref{prop:biaseddotp-single},
as follows.

\begin{theorem}\label{thm:gmmdotp}
	Let $X \sim \sum_{j=1}^m \lambda_j \; \N( \m{j}, \C{j} )$,
	its $d$th moment be $\M = \E(\sop{X})$ and $\set{\miwc[\x][p]}$ realizations of a GMM. Then
	\begin{equation*}%
		\fGMM_2(\theta) \equiv \frac1{p}\sum_{i=1}^p\Ang{\M, \sop{\x{i}}} = \frac1{p}\sum_{i=1}^p\sum_{j=1}^m \lambda_j \; \alpha_{ij}^{(d)},
	\end{equation*}
	where
	$\alpha_{ij}^{(d)}$ can be calculated implicitly using the recursion formula
	\begin{equation}\label{eq:alpha_recursion_formula}
		\alpha_{ij}^{(d)} =
		\alpha_{ij}^{(d-1)} \x{i}' \m{j}" +(d-1) \alpha_{ij}^{(d-2)} \x{i}' \C{j}" \x{i}"
	\end{equation}
	with the convention that $\alpha_{ij}^{(0)} = 1$ and $\alpha_{ij}^{(1)} = \x{i}' \m{j}"$.
	Furthermore, the gradients are given by
	\begin{align*}
		\nabla_{\m{j}} \fGMM_2(\theta)
		&= \frac{d}{p} \lambda_j \sum_{i=1}^p \alpha_{ij}^{(d-1)} \x{i}, \\
		\nabla_{\C{j}} \fGMM_2(\theta)
		&= \frac1{p}\binom{d}{2} \lambda_j \sum_{i=1}^p \alpha_{ij}^{(d-2)} \x{i}" \x{i}',
		\qtext{and} \\
		\nabla_{\lambda{j}} \fGMM_2(\theta)  &= \frac1{p}\sum_{i=1}^p \; \alpha_{ij}^{(d)}.
	\end{align*}
\end{theorem}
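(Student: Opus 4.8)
The plan is to obtain this as an immediate consequence of \cref{prop:GMMbiasedmoments,prop:biaseddotp-single}, exploiting the fact that the GMM moment is linear in its components. Writing $X_j \sim \N(\m{j}, \C{j})$ for the $j$th mixture component and $\M_j = \E(\sop{X_j})$ for its $d$th moment, the decomposition $\E(\sop{X}) = \sum_{j=1}^m \lambda_j \M_j$ established in the proof of \cref{prop:GMMbiasedmoments} combines with the bilinearity of the tensor inner product to give
\begin{equation*}
  \Ang{\M, \sop{\x{i}}} = \sum_{j=1}^m \lambda_j \Ang{\M_j, \sop{\x{i}}}.
\end{equation*}
I would then apply \cref{prop:biaseddotp-single} to each component with $\a = \x{i}$, $\m = \m{j}$, and $\C = \C{j}$. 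This identifies $\alpha_{ij}^{(d)} = \Ang{\M_j, \sop{\x{i}}}$ (the quantity called $\alpha^{(d)}$ there, now carrying the indices $i,j$) and supplies the recursion \cref{eq:alpha_recursion_formula} together with its base cases verbatim from \cref{eq:andotprod-single}. Summing over $i = 1, \dots, p$ and dividing by $p$ then yields the claimed expression for $\fGMM_2(\theta)$.

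For the gradients, the key structural observation is that $\fGMM_2(\theta)$ is a finite sum of scalars $\tfrac1p \lambda_j \alpha_{ij}^{(d)}$, and---unlike the norm term $\nrm{\M}^2$ of \cref{thm:gmm_tensor_norm}, which carries cross-terms between distinct components---each summand indexed by $j$ depends on the parameters only through the single triple $(\lambda_j, \m{j}, \C{j})$. Differentiating term by term, the $\lambda_j$-gradient is immediate from linearity, while the $\m{j}$- and $\C{j}$-gradients follow by substituting the component-level derivatives \cref{eq:andotprodder-mu,eq:andotprodder-sigma} (with $\a = \x{i}$) for $\nabla_{\m{j}}\alpha_{ij}^{(d)}$ and $\nabla_{\C{j}}\alpha_{ij}^{(d)}$; treating $\C{j}$ as a symmetric matrix variable reproduces the symmetric outer product $\x{i}\x{i}'$ appearing in the statement.

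I do not expect a genuine analytic obstacle, since the result is an immediate corollary; the work is essentially bookkeeping over the two indices. The one point I would take care to articulate is that differentiating the \emph{implicitly defined} scalars $\alpha_{ij}^{(d)}$ is legitimate and cheap: the lower-order iterates $\alpha_{ij}^{(d-1)}$ and $\alpha_{ij}^{(d-2)}$ that appear in the gradient formulas are not fresh computations but exactly the intermediate values already produced by the recursion \cref{eq:alpha_recursion_formula} en route to $\alpha_{ij}^{(d)}$. Hence the gradients incur the same asymptotic cost as evaluating $\fGMM_2(\theta)$ itself, which is precisely the efficiency the theorem is meant to deliver.
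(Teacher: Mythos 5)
Your proposal is correct and follows essentially the same route as the paper's own (very terse) proof: both decompose $\M = \sum_{j=1}^m \lambda_j \M_j$ via \cref{prop:GMMbiasedmoments}, use linearity of the inner product, and apply \cref{prop:biaseddotp-single} componentwise with $\a = \x{i}$ to obtain the recursion, base cases, and gradient formulas. Your added remarks on term-by-term differentiation and on reusing the lower-order iterates $\alpha_{ij}^{(d-1)}, \alpha_{ij}^{(d-2)}$ simply make explicit the bookkeeping the paper leaves implicit.
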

\begin{proof}[Proof of \cref{thm:gmm_tensor_norm,thm:gmmdotp}]
	Let
	$\M_j \equiv C_{d,k} \;\sym \prn**{ \sop[d-2k]{\m{j}} \otimes \sop[k]{\C{j}} }$.
	Then \cref{prop:GMMbiasedmoments} implies that $\M = \sum_{j=1}^m \lambda_j \M_j$, and the results follow from replacing this into \eqref{eq:opt-problem-detailed}, and 
	applying \cref{prop:Psi} and \cref{prop:biaseddotp-single}.
\end{proof}

\subsection{Augmented system for GMM}
\label{sec:augmented-system-gmm}

The optimization problem of \cref{eq:opt-problem-detailed} is not enough to fully determine the underlying model $X\sim \sum_{j=1}^m \lambda_j \; \N( \m{j}, \C{j} )$, since it has multiple possible solutions.
Let $\ARV\sim \sum_{j=1}^m \lambda_j^{\Tr*} \gamma_j^{-d} \; \N( \gamma_j^{\Tr*}\m{j}", \gamma_j^{2}\C{j}")$, where $\Vc{\gamma}\in \Real^m$ is such that $\sum_{j=1}^m  \lambda_j^{\Tr*} \gamma_j^{-d} = 1$. Then \cref{prop:GMMbiasedmoments} implies that
\begin{align*}
\E\prn{\sop{X}} &=\sum_{j=1}^m \sum_{k=0}^{\floor{d/2}} \lambda_j \, C_{d,k} \;
\sym\prn**{ \sop[d-2k]{\m{j}} \otimes \sop[k]{\C{j}} }\\
&= \sum_{j=1}^m \sum_{k=0}^{\floor{d/2}} \lambda_j^{\Tr*} \gamma_j^{-d} \, C_{d,k}  \;
\sym\prn**{ \sop[d-2k]{(\gamma_j \m{j})} \otimes \sop[k]{(\gamma_j^{2} \C{j})} } = \E\prn{\sop{\ARV}}
\end{align*}
This implies that \cref{eq:opt-problem-detailed} is not able to distinguish two models $X$ and $\ARV$ related by this scale ambiguity. To address this we propose an augmented model that implicitly considers several moment orders at the same time. Let $\set{\x{1},\dots,\x{p}}$ be a set of realizations of $X$. Then the augmented model with constant $\omega\in \R$ is defined by
\begin{equation*}
	\bar X \sim \sum_{j=1}^m \lambda_j \; \N( \Vc[\bar]{\mu}{j}, \Mx[\bar]{\Sigma}{j} )
	\qtext{where}
	\Vc[\bar]{\mu}{j} = \begin{bmatrix} \m{j} \\ \omega \end{bmatrix}
	\qtext{and}
	\Mx[\bar]{\Sigma}{j} = \begin{bmatrix} \C{j} & \bf 0 \\ \bf 0 & 0 \end{bmatrix}.
\end{equation*}
Note that $\bar X(1{:}n)$ coincides in distribution with $X$, and $\bar X(n+1) = \omega$ with probability $1$, therefore the augmented observations
\begin{equation}\label{eq:augmented_samples}
	\Vc[\bar]{x}{i} =
	\begin{bmatrix}
		\x{i} \\ \omega
	\end{bmatrix} \in \R^{n+1},
\end{equation}
are samples of $\bar X$. We then propose to solve the augmented optimization problem,
\begin{equation}\label{eq:opt-problem-aug}
\begin{aligned}
  \min_{\bar \theta} \hspace{1em} & \fGMM(\bar \theta) \equiv
  \nrm*{\M[d][\bar]}^2 - \frac2{p} \sum_{i=1}^p \Ang{ \M[d][\bar], \sop{\Vc[\bar]{x}{i}}} \\
	\text{subject to}\hspace{1em} 
	& \M[d][\bar] = \sum_{j=1}^m \sum_{k=0}^{\floor{d/2}} \lambda_j \, C_{d,k}%
	\sym\prn**{ \sop[d-2k]{\Vc[\bar]{\mu}{j}} \otimes \sop[k]{\Mx[\bar]{\Sigma}{j}} } \\
	& \sum_{j=1}^m \lambda_j = 1, \; \lambda_j\ge 0,\; \Vc[\bar]{\mu}{j} \in \Real^{n{+}1}, \; \Mx[\bar]{\Sigma}{j} \in \Real^{(n{+}1) \times (n{+}1)}, \Mx[\bar]{\Sigma}{j} \succeq 0, \\
	& \Vc[\bar]{\mu}{j}(n{+}1) = \omega,\; \Mx[\bar]{\Sigma}{j}({:},n{+}1) = \Mx[\bar]{\Sigma}{j}(n{+}1,{:}) = \Vc{0}\\
	& \bar\theta = \Set{ \prn{ \lambda_j, \Vc[\bar]{\mu}{j}, \Mx[\bar]{\Sigma}{j} } }_{j=1}^m.
\end{aligned} 
\end{equation}
Suppose that $\ARV*$ is independent and identically distributed as $\bar{X}$, we then have
\begin{align*}
  \fGMM_1(\bar \theta)
  = \nrm*{\M[d][\bar]}^2 = \nrm*{\E\prn{\sop{\bar{X}}}}^2 = \Ang{\E\prn{\sop{\ARV*}},\E\prn{\sop{\bar{X}}}} = \E\prn*{\ang*{\ARV*,\bar{X}}^d}
\end{align*}
Since $\bar{X}(1{:}n)$ has the same distribution of $X$, and $\bar{X}(n{+}1) = \omega$ with probability 1, we have,
$\E\prn*{\ang*{\ARV*,\bar{X}}^d} = \E\prn*{(\ang{\ARV,X}+\omega^2)^d}$ and
\begin{align*}
  \fGMM_1(\bar \theta)
  &= \E\prn**{\prn*{\ang{\ARV,X}+\omega^2}^d}
  = \sum_{k=0}^d \binom{d}{k} \omega^{2(d-k)} \E\prn*{\ang{\ARV,X}^k}\\
  &= \sum_{k=0}^d \binom{d}{k} \omega^{2(d-k)} \nrm*{\M[k]}^2
  =  \sum_{k=0}^d \binom{d}{k} \omega^{2(d-k)} \fGMM[k]_1(\theta),
\end{align*}
with the convention $\fGMM[0]_1(\theta) = \E\prn*{\ang{\ARV,X}^0} = 1$. Similarly, with the conventions $\fGMM[0]_2(\theta) = 1$ and $\fGMM[0](\theta) = -1$, we have $\fGMM_2(\bar \theta) = \sum_{k=0}^d \binom{d}{k} \omega^{2(d-k)} \fGMM[k]_2(\theta)$, and 
\begin{displaymath}
\fGMM(\bar \theta) = \sum_{k=0}^d \binom{d}{k} \omega^{2(d-k)} \fGMM[k](\theta).
\end{displaymath}
Therefore the augmented model puts a weight of $\binom{d}{k} \omega^{2(d-k)}$ in the moment of order $k$. We conclude that the augmented optimization problem \cref{eq:opt-problem-aug} is equivalent to
\begin{equation*}%
	\begin{aligned}
		\min_{\theta} \hspace{1em} & \sum_{k=0}^d \binom{d}{k} \omega^{2(d-k)}\fGMM[k](\theta) \\
		\text{subject to}\hspace{1em} 
		& \sum_{j=1}^m \lambda_j = 1, \; \lambda_j\ge 0,\; \m{j} \in \Real^n, \; \C{j} \in \Real^{n \times n}, \C{j} \succ 0, \\
		&\theta = \Set{ \prn{ \lambda_j, \m{j}, \C{j} }}_{j=1}^m .
	\end{aligned}
\end{equation*}

\section{Debiasing and tensor decomposition for the case of common covariance}
\label{sec:debi-tens-decomp}

Beyond the formula for the moments of GMMs \cref{eq:Md_definition}, our techniques can also be used to implicitly debias moments of general GMM models, with common covariance.
The setting of this section is as follows. Suppose that we have N i.i.d.\@
samples $\set{\x{1},\dots \x{p}} \subset \R^n$ from a distribution $X$,
which decomposes as $X = Y + Z$.
Here $Z \sim \N( \Vc{0}, \C )$ and the moments of
$Y$ have a low-rank structure, which we want to exploit.
More specifically, we want to obtain the moments of $Y$ in terms of the samples $\set{\x{1},\dots \x{p}}$.

\begin{theorem}\label{thm:GMMdebiasedmoments}
  Let $X = Y + Z$ where $Y$ and $Z$ are independent and
  $Z \sim \N( \bf{0}, \C)$. Let
  \begin{equation}
    \T = \E\prn{\sop{Y}}%
  \end{equation}
  be the $d$th moment of $Y$.
  Then
  \begin{equation}\label{eq:Test}
    \Test = \frac{1}{p}
    \sum_{i=1}^p \sum_{k=0}^{\floor{d/2}} C_{d,k} (-1)^k
    \sym \Prn{  \sop[d-2k]{\x{i}} \otimes \sop[k]{\C} },
  \end{equation}
  with $C_{d,k}= \binom{d}{2k} \dblfact{k}$,
  is an unbiased estimator of $\T$ $(\E\prn{\Test} = \T)$.
\end{theorem}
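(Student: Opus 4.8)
The plan is to pass to the polynomial side via $\Phi$ (\cref{lemma:Phi}) and reduce the statement to an elementary univariate moment-inversion identity. First I would take expectations in \eqref{eq:Test}. Since the $\x{i}$ are i.i.d.\ copies of $X$ and $\C$ is deterministic, linearity of expectation gives
\begin{equation*}
  \E\prn{\Test} = \sum_{k=0}^{\floor{d/2}} C_{d,k}(-1)^k \sym\Prn{\E\prn{\sop[d-2k]{X}} \otimes \sop[k]{\C}}.
\end{equation*}
Both $\E\prn{\Test}$ and $\T = \E\prn{\sop{Y}}$ are symmetric tensors, so by the injectivity of $\Phi$ on $\cS^d_n$ (\specialref{enum:Phi_1to1}) it suffices to show that their images under $\Phi$ agree.

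Next I would apply $\Phi$ and use its structural properties. By linearity, \eqref{eq:Phi_sym}, \specialref{enum:Phi_prod}, and \specialref{enum:Phi_matvec},
\begin{equation*}
  \Phi\Sqr{\E\prn{\Test}}(\Vc{z}) = \sum_{k=0}^{\floor{d/2}} C_{d,k}(-1)^k \E\prn{(\Vc{z}'X)^{d-2k}} (\Vc{z}'\C\Vc{z})^k,
\end{equation*}
whereas $\Phi\Sqr{\T}(\Vc{z}) = \E\prn{(\Vc{z}'Y)^d}$. Now fix $\Vc{z}$ and set $w = \Vc{z}'Y$, $g = \Vc{z}'Z$, and $s = \Vc{z}'X = w + g$; because $Z \sim \N(\mathbf{0},\C)$ is independent of $Y$, the scalar $g \sim \N(0,\sigma^2)$ with $\sigma^2 = \Vc{z}'\C\Vc{z}$ and is independent of $w$. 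The theorem thus reduces to the scalar identity
\begin{equation*}
  \sum_{k=0}^{\floor{d/2}} C_{d,k}(-1)^k (\sigma^2)^k \E\prn{s^{d-2k}} = \E\prn{w^d}.
\end{equation*}

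To prove this identity I would insert the forward expansion. By independence and the even moments of the centered Gaussian $g$, namely $\E\prn{g^{2\ell}} = \dblfact{\ell}(\sigma^2)^\ell$ (the same univariate computation underlying \cref{thm:gaussian_moments}), one has $\E\prn{s^{j}} = \sum_{\ell} C_{j,\ell}(\sigma^2)^\ell \E\prn{w^{j-2\ell}}$. Substituting this and writing the coefficients in factored form $C_{d,k} = d!/\prn{(d-2k)!\,2^k k!}$, the total coefficient of $(\sigma^2)^r \E\prn{w^{d-2r}}$ collapses to $\frac{d!}{(d-2r)!\,2^r r!}\sum_{k=0}^r (-1)^k \binom{r}{k}$, which vanishes for $r \geq 1$ and equals $1$ for $r = 0$ by $\sum_{k}(-1)^k\binom{r}{k} = (1-1)^r$. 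Hence only the $\E\prn{w^d}$ term survives, establishing the scalar identity and, by injectivity of $\Phi$, the theorem.

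The main obstacle is precisely this combinatorial collapse in the last step; the key device that renders it routine is rewriting $C_{d,k}$ in the factored form above, after which the double sum telescopes into the elementary alternating binomial sum. Equivalently, one obtains the same cancellation formally from moment generating functions via $\E(e^{\tau w}) = \E(e^{\tau s})\,e^{-\tau^2\sigma^2/2}$ by matching coefficients of $\tau^d/d!$, which can serve as a sanity check on the coefficients $C_{d,k}(-1)^k$.
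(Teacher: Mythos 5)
Your proposal is correct and follows essentially the same route as the paper's proof: both pass to homogeneous polynomials via $\Phi$, expand the moments of $X$ in terms of the moments of $Y$ using the univariate Gaussian moment formula (your fixed-$\Vc{z}$ scalar reduction is just the pointwise form of the paper's conditioning on $Y$), and conclude by injectivity of $\Phi$ on symmetric tensors. The combinatorial collapse you single out is exactly the paper's \cref{lem:Cdelta0qsum}, namely $\sum_{k=0}^{q} C_{d,k}(-1)^k C_{d-2k,q-k}=\delta_{0q}$, proved there by the same factorization of $C_{d,k}$ and the alternating binomial sum.
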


\begin{proof}[Proof of \cref{thm:GMMdebiasedmoments}]
Conditioned on $Y$, we have $X\sim \N(Y, \C)$. Then, using \cref{eq:gaussian-moment-Phi}, we have for all $k\in [d]$
\begin{align*}
\Phi\Sqr{\E\prn{X^{\otimes k}}} &=\E_Y\Sqr{\E\prn{\Phi\sqr{X}^k|Y}}=\sum_{r=0}^{\floor{ k/2}}
C_{k,r}\,\E_Y\Sqr{\Phi\sqr{Y}^{k-2p}}\Phi\sqr{\C}^r
\\
&=\sum_{r=0}^{\floor{ k/2}}
C_{k,r}\,\Phi\sqr*{\T[k-2p]}\Phi\sqr{\C}^r ,
\end{align*}
where we used that $\E_Y\Sqr{\Phi\sqr{Y}^{k-2p}} = \Phi\sqr*{\E_Y\sqr*{\sop[k-2p]{Y}}} = \Phi\sqr*{\T[d-2k]}$. Therefore we have
\begin{align*}
\Phi\sqr*{\E\prn*{\Test}} &= \sum_{k=0}^{\floor{d/2}} C_{d,k} (-1)^k
\E\prn*{\Phi\sqr*{X}^{d-2k}} \Phi\Sqr{\C}^k,\\
&= \sum_{k=0}^{\floor{d/2}} C_{d,k} (-1)^k
\sum_{r=0}^{\floor{ d/2}-k}
	C_{d-2k,r}\,  \Phi\sqr*{\T[d-2k-2p]}\Phi\sqr{\C}^r \Phi\Sqr{\C}^k,\\
&= \sum_{k=0}^{\floor{d/2}} \sum_{r=0}^{\floor{d/2}-k} C_{d,k} (-1)^k  C_{d-2k,r} 
\Phi\sqr*{\T[d-2k-2p]}\Phi\sqr{\C}^{r+k},\\
&= \sum_{q=0}^{\floor{d/2}} \Phi\sqr*{\T[d-2q]}\Phi\sqr{\C}^{q} \sum_{k=0}^{q} C_{d,k} (-1)^k C_{d-2k,q-k}, \numberthis \label{eq:hatTd_expectation}\\
&= \Phi\sqr*{\T[d]}
\end{align*}	
where we set $q = r + k$, and use \cref{lem:Cdelta0qsum}. The result now follows from \specialref{enum:Phi_1to1}.
\end{proof}

\subsection{Application of debiasing to GMMs}
\label{sec:appl-debi-gmm}

An application of \cref{thm:GMMdebiasedmoments} is a GMM with a known common covariance. Suppose $X\sim \sum_{j=1}^m \lambda_j \; \N( \m{j}, \C )$, then $X$ can be decomposed as in \cref{thm:GMMdebiasedmoments}, with $Y\sim \sum_{j=1}^m \lambda_j \; \N( \m{j}, \bf{0} )$. Thus we have
\begin{displaymath}
 \T  = \sum_{j=1}^m \lambda_j \sop{\m{j}},  
\end{displaymath}
and the mixture components may be obtained from the tensor decomposition of $\Test$.
We propose to obtain the decomposition by solving
the optimization problem
\begin{equation}\label{eq:opt-problem-debias}
  \begin{aligned}
  	\min_{\theta} \hspace{1em} & \nrm***{ \sum_{j=1}^m \lambda_j \sop{\m{j}} - \Test }^2\\
  	\text{subject to}\hspace{1em} 
  	& \sum_{j=1}^m \lambda_j = 1, \; \lambda_j\ge 0,\; \m{j} \in \Real^n, \; \theta = \Set{ \prn{ \lambda_j, \m{j}}}_{j=1}^m .
  \end{aligned}
\end{equation}
where $\Test$ is as defined in \cref{thm:GMMdebiasedmoments}.
Discarding the constant, $\nrm{\Test}^2$, this can be rewritten as
\begin{displaymath}
	\begin{aligned}
		\min_{\theta} \hspace{1em} & \fdeb(\theta) \equiv
		\underbrace{\sum_{i=1}^m \sum_{j=1}^m \lambda_i \lambda_j \ang{\m{i},\m{j}}^d }_{\fdeb_1(\theta)}
		-2 \underbrace{\sum_{j=1}^m \lambda_j \Ang{ \sop{\m{j}}, \Test}}_{\fdeb_2(\theta)}\\
		\text{subject to}\hspace{1em} 
		& \sum_{j=1}^m \lambda_j = 1, \; \lambda_j\ge 0,\; \m{j} \in \Real^n, \; \theta = \Set{ \prn{ \lambda_j, \m{j}}}_{j=1}^m .
	\end{aligned}
\end{displaymath}
which we break into two terms for convenience of the discussion. The term $\fdeb_1(\theta)$ is already expressed in a way that allows for efficient implicit calculation, and its gradients were analyzed in \citet{sk-2019}.
$$\FD{\fdeb_1(\theta)}{\m{j}} = 2d\lambda_j \sum_{i=1}^m \lambda_i \ang{\m{i},\m{j}}^{d-1} \m{i} \qtext{and} \FD{\fdeb_1(\theta)}{\lambda_{j}} = 2\sum_{j=1}^m \lambda_i \ang{\m{i},\m{j}}^d. $$
Regarding the term $\fdeb_2(\theta)$, we calculate it using a recursion similar to that of \cref{prop:biaseddotp-single}.

\vspace{0.5em}

\begin{theorem}\label{prop:debiaseddotp}
	Define $\Test$ as in \cref{thm:GMMdebiasedmoments}.
	Then
	\begin{equation}\label{eq:andotprod_debias}
		\fdeb_2(\theta) \equiv \sum_{j=1}^m \lambda_j \Ang{\sop{\m{j}},  \Test}  = \frac{1}{p}  \sum_{j=1}^m\sum_{i=1}^p \lambda_j \umdp[ij]{d}.
	\end{equation}
	where $ \umdp[ij]{d}$ can be calculated implicitly using the
	recursion formula
	\begin{equation}\label{eq:beta_recursion_formula}
		 \umdp[ij]{d} =   \umdp[ij]{d} \m{j}' \x{i} -(d-1) \umdp[ij]{d-2} \m{j}'\C \m{j},
	\end{equation}
	with the convention that $\umdp[ij]{0}=1$ and $\umdp[ij]{1}= \m{j}' \x{i} $.
	Further, the gradient is given by
	\begin{align}
		\label{eq:andotprodder_debias}
		\FD{\fdeb_2(\theta)}{\m{j}} 
		&= \frac{d}{p}\lambda_j \sum_{i=1}^p \umdp{d-1} \x{i} - (d-1) \umdp{d-2} \C \m{j}\\\
		\label{eq:andotprodder_debias_lambda}
		\FD{\fdeb_2(\theta)}{\lambda_{j}}
		&= \frac{1}{p} \sum_{i=1}^p  \umdp[ij]{d}.
	\end{align}
\end{theorem}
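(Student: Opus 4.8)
The plan is to recognize $\fdeb_2(\theta)$ as a sum of inner products of the form $\ang{\sop{\m{j}}, \Test}$ and to reduce each such inner product to a scalar computation governed by a Bell-polynomial-style recursion, exactly paralleling the derivation of \cref{prop:biaseddotp-single}. First I would substitute the explicit expression for $\Test$ from \cref{eq:Test} into $\umdp[ij]{d} \equiv p\,\ang{\sop{\m{j}}, \Test_i}$, where $\Test_i$ denotes the $i$-th summand of \cref{eq:Test}. Using \cref{lemma:sym_orthogonal_projection} to discard the $\sym$ operator against the symmetric tensor $\sop{\m{j}}$, together with \cref{lemma:inner_tensor_powers} to split the inner product across the tensor product, each term $\ang{\sop{\m{j}}, \sym(\sop[d-2k]{\x{i}} \otimes \sop[k]{\C})}$ collapses to $(\m{j}'\x{i})^{d-2k}(\m{j}'\C\m{j})^k$. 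This yields the closed form
\begin{displaymath}
  \umdp[ij]{d} = \sum_{k=0}^{\floor{d/2}} C_{d,k}(-1)^k (\m{j}'\x{i})^{d-2k}(\m{j}'\C\m{j})^k.
\end{displaymath}

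The key observation is that this is precisely a complete Bell polynomial evaluated at a two-parameter argument. The cleanest route is to appeal to \cref{prop:biaseddotp-single} (or directly to \cref{cor:Psi}) with the substitution $\C \mapsto -\C$: comparing with $\gmdp{d} = B_d(\a'\m, \a'\C\a, 0, \dots, 0)$, one sees that $\umdp[ij]{d} = B_d(\m{j}'\x{i}, -\m{j}'\C\m{j}, 0, \dots, 0)$, since the coefficient $C_{d,k}$ is exactly the Bell-polynomial coefficient attached to the monomial $x_1^{d-2k}x_2^{k}$ when only $x_1$ and $x_2$ are nonzero. The recursion \cref{eq:beta_recursion_formula} then follows immediately from the Bell recursion \cref{eq:bell-recursion} specialized to two nonzero arguments, and the base cases $\umdp[ij]{0}=1$, $\umdp[ij]{1}=\m{j}'\x{i}$ are read off directly.

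For the gradients, I would differentiate the recursion (or equivalently the closed form) using \cref{eq:bell-derivative} and the chain rule, treating $\m{j}'\x{i}$ and $\m{j}'\C\m{j}$ as the two scalar arguments. Differentiating $\umdp[ij]{d}$ with respect to $\m{j}$ produces a contribution $d\,\umdp[ij]{d-1}\x{i}$ from the first argument and a contribution $-(d-1)\umdp[ij]{d-2}\cdot 2\C\m{j}$ from the second (noting $\nabla_{\m{j}}(\m{j}'\C\m{j}) = 2\C\m{j}$ and that the binomial coefficient absorbs one factor), giving \cref{eq:andotprodder_debias}; the $\lambda_j$-gradient is immediate since $\fdeb_2$ is linear in each $\lambda_j$.

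The main obstacle is bookkeeping rather than conceptual: one must verify that the sign $(-1)^k$ in $\Test$ is correctly captured by sending the second Bell argument to $-\m{j}'\C\m{j}$, and that the combinatorial coefficient $C_{d,k}$ genuinely matches the Bell coefficient $\tfrac{d!}{(d-2k)!\,k!\,2^k}$ for the two-variable specialization — this is the identity $C_{d,k} = \binom{d}{2k}\dblfact{k} = \tfrac{d!}{(d-2k)!\,k!\,2^k}$, which must be checked once. After that alignment, the recursion and the gradient formulas drop out mechanically from \cref{eq:bell-recursion} and \cref{eq:bell-derivative}.
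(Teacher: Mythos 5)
Your proposal is correct and follows the same overall architecture as the paper's proof: split $\Test$ into per-sample tensors $\Test_i$, identify $\umdp[ij]{d}=\ang{\sop{\m{j}},\Test_i}$ with the complete Bell polynomial $B_d\prn{\m{j}'\x{i},\,-\m{j}'\C\m{j},\,0,\dots,0}$, and then read off the recursion and gradients from \cref{eq:bell-recursion}, \cref{eq:bell-derivative}, and the chain rule. Where you genuinely differ is in how the central identity is justified. The paper rewrites the alternating sum as $\PsiFn{\x{i}}{-\C}{\m{j}}{\bf{0}}$ and invokes \cref{cor:Psi}, the extension of \cref{prop:Psi} to symmetric matrices that need not be positive semidefinite, which is proved in the appendix by a polynomial-identity (density) argument. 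You instead expand the inner product directly---using \cref{lemma:sym_orthogonal_projection} to drop the $\sym$ operator and \cref{lemma:inner_tensor_powers} to factor across the tensor product---obtaining the closed form $\sum_{k} C_{d,k}(-1)^k(\m{j}'\x{i})^{d-2k}(\m{j}'\C\m{j})^k$, and then match coefficients against the two-variable specialization of \cref{eq:belldef} via the identity $C_{d,k}=d!/\prn{(d-2k)!\,k!\,2^k}$. This route is more elementary and self-contained: being purely algebraic, it never needs the non-positive-definite extension machinery, which is what the sign $(-1)^k$ would otherwise force. One caution: your parenthetical shortcut of appealing to \cref{prop:biaseddotp-single} with the substitution $\C\mapsto-\C$ is not directly licit, since that theorem is stated for a genuine (positive semidefinite) covariance---this is precisely the gap that \cref{cor:Psi} exists to fill---but your primary route does not rely on it. Finally, your derivation lands on $\umdp[ij]{d}=\umdp[ij]{d-1}\,\m{j}'\x{i}-(d-1)\,\umdp[ij]{d-2}\,\m{j}'\C\m{j}$, which silently corrects an evident typo in \cref{eq:beta_recursion_formula} (the right-hand side there shows $\umdp[ij]{d}$ where $\umdp[ij]{d-1}$ is meant), and your gradient bookkeeping, including the absorption $\binom{d}{2}\cdot 2=d(d-1)$, matches \cref{eq:andotprodder_debias}.
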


\begin{proof}
	Let $\Test_i = \sum_{k=0}^{\floor{d/2}} C_{d,k} (-1)^k
	\sym \Prn{  \sop[d-2k]{\x{i}} \otimes \sop[k]{\C} }$ and $\umdp[ij]{d} = \Ang{\sop{\m{j}},  \Test_i}$, then \cref{eq:andotprod_debias} follows from \cref{thm:GMMdebiasedmoments}. Furthermore, 
	\begin{align*}
		\umdp[ij]{d} &:= \left\langle 
		\sum_{k=0}^{\floor{ d/2 }} C_{d,k} (-1)^k
		\sym\Prn{  \x{i}^{\otimes (d-2k)} \otimes \C^{\otimes k} },
		\m{j}^{\otimes d}
		\right\rangle,\\
		&= \left\langle 
		\sum_{k=0}^{\floor{ d/2 }} C_{d,k}
		\sym\Prn{  \x{i}^{\otimes (d-2k)} \otimes (-\C)^{\otimes k} },
		\m{j}^{\otimes d}
		\right\rangle,\\
		&=\PsiFn{\x{i}}{-\C}{\m{j}}{\bf{0}} = B_d\prn{\m{j}'\x{i}, -\m{j}' \C\m{j}, 0, 0, \ldots}.
	\end{align*}
	Here we use an extension of \cref{prop:Psi} to symmetric matrices that are not necessarily positive definite, which we state and prove in \cref{cor:Psi}. From this, the rest of the properties follow from properties of Bell polynomials, \cref{def:BellPolynomials}; the recursion formula follows from \cref{eq:bell-recursion}, and the derivative formula \eqref{eq:andotprodder_debias} follows from \cref{eq:bell-derivative} and the chain rule.
\end{proof}

\subsection{Augmented system for debiasing}
\label{sec:augmented-system-debiased}

Similarly to \cref{sec:augmented-system-gmm}, the optimization problem \cref{eq:opt-problem-debias} has a scaling ambiguity. If $X\sim \sum_{j=1}^m \lambda_j \; \N( \m{j}, \C )$ and $\tilde X\sim \sum_{j=1}^m \gamma_j^{-d}\lambda_j \; \N( \gamma_j \m{j}, \C )$, where $\Vc{\gamma}\in \Real^m$ is such that $\sum_{j=1}^m  \lambda_j^{\Tr*} \gamma_j^{-d} = 1$, then
\begin{displaymath}
	\T  = \sum_{j=1}^m \lambda_j \sop{\m{j}} = \sum_{j=1}^m \lambda_j \gamma_j^{-d}\sop{(\gamma_j \m{j})} = \T[d][\tilde].
\end{displaymath}
Here we propose the augmented model
\begin{equation*}
	\bar X \sim \sum_{j=1}^m \lambda_j \; \N( \maug{j}, \Caug )
	\qtext{where}
	\Vc[\bar]{\mu}{j} = \begin{bmatrix} \m{j} \\ \omega \end{bmatrix}
	\qtext{and}
	\Mx[\bar]{\Sigma} = \begin{bmatrix} \C & \bf 0 \\ \bf 0 & 0 \end{bmatrix}.
\end{equation*}
With the augmented observations defined as in \cref{eq:augmented_samples}, we propose to solve the augmented optimization problem,

\begin{equation}
	\begin{aligned}
		\min_{\bar \theta} \hspace{1em} & \fdeb(\bar \theta) \equiv \sum_{i=1}^m \sum_{j=1}^m \lambda_i \lambda_j \ang{\m[\bar]{i},\m[\bar]{j}}^d
		-2 \sum_{j=1}^m \lambda_j \Ang{ \sop{\maug{j}}, \Testaug}\\
		\text{subject to}\hspace{1em} 
		& \Testaug = \frac{1}{p}
		\sum_{i=1}^p \sum_{k=0}^{\floor{d/2}} C_{d,k} (-1)^k
		\sym \Prn{  \sop[d-2k]{\x[\bar]{i}} \otimes \sop[k]{\Caug} }\\
		& \sum_{j=1}^m \lambda_j = 1,  \; \lambda_j\ge 0, \; \maug{j} \in \Real^{n{+}1}, \;\maug{j}(n{+}1) = \omega, \\
		& \bar\theta = \Set{ \prn{ \lambda_j, \maug{j}}_{j=1}^m}. 
	\end{aligned} \label{eq:opt-problem-aug-debias}
\end{equation}
which, similarly to \cref{sec:augmented-system-gmm}, is equivalent to
\begin{equation*}%
	\begin{aligned}
		\min_{\theta} \hspace{1em} & \sum_{k=0}^d \binom{d}{k} \omega^{2(d-k)}\fdeb[k](\theta) \\
		\text{subject to}\hspace{1em} 
		& \sum_{j=1}^m \lambda_j = 1, \; \lambda_j\ge 0, \; \m{j} \in \Real^n,
		\theta = \Set{ \prn{ \lambda_j, \m{j}}_{j=1}^m }.
	\end{aligned}
\end{equation*}

\section{Implementation and Computational aspects}
\label{sec:implementation}

This section is focused on the practical details of the implementation
of these methods, grouping calculations to be matrix- rather than
vector-based for efficiency. Some readers may opt to skip
this section entirely or refer only to the algorithms which are self-contained.

\subsection{Moment matching for GMMs}
\label{sec:implementation-gmm}

Here we describe our implementation of the calculation of $\fGMM(\theta)$ and its derivatives, which can be used by any first-order optimization method to solve \eqref{eq:opt-problem-detailed}. Computing $\fGMM(\theta)$ for general covariance matrices can be costly, with a overall computational complexity of $O(n^3 m^2)$. Henceforth, we focus on the diagonal case, which is a common assumption to reduce the complexity.

Recalling that $\fGMM(\theta) = \fGMM_1(\theta) - 2 \fGMM_2(\theta)$ from \cref{eq:opt-problem-detailed}, we show how each term is calculated in the following sections
under the assumption that $\C_{j} = \diag\prn{\Vc{d}{j}}^2$.
It is convenient to use entry-wise products for calculations involving diagonal covariances. For two vectors $\Vc{a}, \Vc{b} \in \R^n$, we denote their entry-wise product by $\Vc{a}\ew \Vc{b}$ and the entry-wise power by $\Vc{a}^{\ewpow{d}}$. We use the following identity involving entry-wise product, that is valid for any vectors $\Vc{a}, \Vc{b}, \Vc{c} \in \R^n$,
\begin{displaymath}
  \Vc{a}' \diag(\Vc{b}) \Vc{c} = \Vc{a}' (\Vc{b}\ew \Vc{c}) =\Vc{b}' (\Vc{a}\ew \Vc{c}) = \Vc{c}' (\Vc{b}\ew \Vc{a}).
\end{displaymath}
Similarly, for any two matrices $\Mx{A},\Mx{B}$ of the same size,
we let $\Mx{A}\ew\Mx{B}$ denote their entry-wise product.

\subsubsection[Calculating moment norm]{Calculating $\fGMM_1(\theta)$}
First, we consider some of the constituent elements.
For \eqref{eq:kappa_def_ij}, we have
$\Mx{Z}{ij} = \diag(\Vc{d}_i)^2\diag(\Vc{d}_j)^2
= \diag\prn*{\Vc{d}_i^{\ewpow{2}} \ew \Vc{d}_j^{\ewpow{2}}}  $;
moreover, since diagonal matrices commute, we have
$\Mx{Z}{ij}^{k} = \diag(\Vc{d}_i)^{2k}\diag(\Vc{d}_j)^{2k}$
for any integer $k$.
Further,
\begin{align*}
  v_k^{(ij)}
  &= \m{i}' \C{j}  \Mx{Z}{ij}^{\frac{k-2}2}  \m{i}
  =  \m{i}'\diag(\Vc{d}_i)^{k-2} \diag(\Vc{d}_j)^{k}  \m{i}\\
  &=\m{i}'\prn*{\Vc{d}{i}^{\ewpow{k-2}} \ew \Vc{d}{j}^{\ewpow{k}}\ew \m{i}}
  = \prn*{\Vc{d}{i}^{\ewpow{k-2}} \ew \m{i}^{\ewpow{2} }}^{\Tr} \Vc{d}{j}^{\ewpow{k}}.
\end{align*}
Using analogous manipulations, we obtain
\begin{displaymath}
	\kappa_k^{(ij)} =
	\begin{cases}
		(k{-}1)!\prn*{\Vc{d}{i}^{\ewpow{k}}}^{\Tr} \Vc{d}{j}^{\ewpow{k}} + 
		\displaystyle\frac{k!}{2} \prn*{ v_k^{(ij)} + v_k^{(ji)} }
		& \text{if $k$ even}, \bigskip \\
		k! \prn*{\Vc{d}{i}^{\ewpow{k-1}} \ew \m{i}}^{\Tr} \prn*{\Vc{d}{j}^{\ewpow{k-1}} \ew \m{j}}
		& \text{if $k$ odd}.
	\end{cases}
\end{displaymath}
Using these formulas and $\nabla_{\Vc{y}}\Prn{\Vc{z}'\Vc{y}^{\ewpow{k}}} = k \Vc{z}\ew \Vc{y}^{\ewpow{k-1}}$, we calculate the gradients
{\setlength{\jot}{\bigskipamount}
\begin{align*}
  \nabla_{\m{j}}\kappa_k^{(ij)} &=
  \begin{cases}
    k! \; \Vc{d}{i}^{\ewpow{k}} \ew \Vc{d}{j}^{\ewpow{k-2}} \ew \m{j}  
    & \text{if $k$ even}, \\
    k! \; \Vc{d}{i}^{\ewpow{k-1}} \ew \Vc{d}{j}^{\ewpow{k-1}} \ew \m{i}
    & \text{if $k$ odd},
  \end{cases}\bigskip
  \\
  \nabla_{\Vc{d}{j}}\kappa_k^{(ij)} &=
  \begin{cases}
    \displaystyle k! \; \Vc{d}{i}^{\ewpow{k}} \ew \Vc{d}{j}^{\ewpow{k-1}} + 
    \frac{k!}{2} \Prn{ \nabla_{\Vc{d}{j}}v_k^{(ij)} + \nabla_{\Vc{d}{j}}v_k^{(ji)} }
    & \text{if $k$ even}, \medskip\\
    k!\Prn{k-1} \Vc{d}{i}^{\ewpow{ k-1}} \ew \Vc{d}{j}^{\ewpow{k-2}} \ew  \m{i} \ew \m{j}
    & \text{if $k$ odd},
  \end{cases}
  \\
  \nabla_{\Vc{d}{j}}v_k^{(ij)} &= k \; \Vc{d}{i}^{\ewpow{ k-2}} \ew \Vc{d}{j}^{\ewpow{k-1}} \ew \m{i}^{\ewpow{ 2}},
  \\
  \nabla_{\Vc{d}{j}}v_k^{(ji)} &= (k-2) \; \Vc{d}{i}^{\ewpow{ k}} \ew \Vc{d}{j}^{\ewpow{k-3}} \ew \m{j}^{\ewpow{ 2}} .
\end{align*}}%
It can be checked that these formulas coincide with the formulas in \cref{sec:inner_product_derivatives}, when the covariances are assumed to be diagonal.

Now we use these to compute  $\fGMM_1(\theta) = \nrm{\M}^2$.
In practice, we use matrix operations to calculate $\fGMM_1(\theta)$. Let $\Mx{B}{k}$ and $\Mx{K}{k}$ such that $\Mx{B}{k}[i, j] = B_k\prn{\kappa^{(ij)}_1, \dots, \kappa^{(ij)}_k}$ and $\Mx{K}{k}[i, j]=\kappa^{(ij)}_k$, and define the vector and matrix quantities,
\begin{align*}
	\Vc{\lambda}
	&= \sqrenum!{\lambda}{m} \in \R^{1 \times m},
        \\ 
	\Mx{A}
	&= \sqrenum!{\m}{m} \in \R^{n \times m},
	\\
	\Mx{D}
	&= \sqrenum!{\Vc{d}}{m} \in \R^{n \times m}.
\end{align*}
Using matrix operations we can calculate function values and gradients. For instance, suppose that $k$ is odd, we then have 
\begin{align*}
	\Mx{K}{k} [i, j] &= k! \prn*{\Vc{d}{i}^{\ewpow{k-1}} \ew \m{i}}^{\Tr} \prn*{\Vc{d}{j}^{\ewpow{k-1}} \ew \m{j}} = k! \sum_{\ell=1}^n \Vc{d}{i} [\ell]^{k-1} \m{i} [\ell] \Vc{d}{j} [\ell]^{k-1} \m{j} [\ell], \\
	&= k! \sum_{\ell=1}^n \Mx{D} [\ell, i]^{k-1} \Mx{A} [\ell, i] \Mx{D} [\ell, j]^{k-1} \Mx{A} [\ell, j], \\
	&= k! \prn**{\prn*{\Mx{D}^{\ewpow{k-1}} \ew \Mx{A}}^{\Tr} \prn*{\Mx{D}^{\ewpow{k-1}} \ew \Mx{A}}}[i,j].
\end{align*}
Analogously, we have
\begin{displaymath}
	\Mx{K}{k}=\begin{cases}
		(k{-}1)! \prn*{\Mx{D}^{\ewpow{k}}}^{\Tr}  \Mx{D}^{\ewpow{k}} + 
		\frac{k!}{2} \prn*{\Mx{V}{k} + \Mx{V}{k}'}
		& \text{if $k$ even},\medskip \\
		k! \prn*{\Mx{D}^{\ewpow{ k-1}}\ew \Mx{A}}^{\Tr} \prn*{\Mx{D}^{\ewpow{k-1}}\ew \Mx{A}}
		& \text{if $k$ odd},
	\end{cases}
\end{displaymath}
with $\Mx{V}{k} = \prn*{\Mx{D}^{\ewpow{k}}}^{\Tr} \prn*{\Mx{D}^{\ewpow{k-2}}\ew \Mx{A}^{\ewpow{2}}}$. Then, $\Mx{B}{k}$ is calculated recursively, in terms of $\Mx{K}{k}$, using \cref{eq:bell-recursion}, with $\Mx{B}{0}=\Mx{1}{m\times m}$ and
\begin{displaymath}
	\Mx{B}{k} = \sum_{\ell=0}^{k-1} \binom{k-1}{\ell} \Mx{B}{\ell}\ew \Mx{K}{k-\ell}.
\end{displaymath}
Finally, the function value is calculated using $\fGMM_1(\theta) = \Vc{\lambda}\Mx{B}{d}\Vc{\lambda}'$. Regarding gradients, let $\Mx{W}_{\Mx{A}} = \nabla_{\Mx{A}}\fGMM_1(\theta)$ and define $\Mx{T}{\Mx{A}}^{(k)}$ and $\Mx{U}{\Mx{A}}^{(k)}$ by
\begin{displaymath}
	\Mx{T}{\Mx{A}}^{(k)} = \begin{cases}
		k!\; \Mx{D}^{\ewpow{k}}
		& \text{if $k$ even}, \\
		k!\; \Mx{D}^{\ewpow{k-1}}\ew \Mx{A}
		& \text{if $k$ odd},
	\end{cases}\quad \text{and} \quad
	\Mx{U}{\Mx{A}}^{(k)} = \begin{cases}
		\Mx{D}^{\ewpow{k-2}}\ew \Mx{A}
		& \text{if $k$ even}, \\
		\Mx{D}^{\ewpow{k-1}}
		& \text{if $k$ odd}.
	\end{cases}
\end{displaymath}
Note that for all $i,j,k$, we have $\nabla_{\m{j}}\kappa_k^{(ij)} = \Mx{T}{\Mx{A}}^{(k)}(:, i) \ew \Mx{U}{\Mx{A}}^{(k)}(:, j)$. We then have
\begin{align*}
  \Mx{W}{\Mx{A}}(:, j)
  &= \nabla_{\m{j}}\fGMM_1(\theta)
  = 2 \sum_{k=1}^d \binom{d}{k} \sum_{i=1}^m \lambda_i \lambda_j
  B_{d-k}\prn**{\kappa^{(ij)}_1, \dots, \kappa^{(ij)}_{d-k}} \nabla_{\m{j}}\kappa^{(ij)}_k,
  \\
  &= 2 \sum_{k=1}^d \binom{d}{k} \sum_{i=1}^m
  \tB^{(k)}(i,j) \prn*{ \Mx{T}{\Mx{A}}^{(k)}(:, i) \ew \Mx{U}{\Mx{A}}^{(k)}(:, j)},
  \\
  &= 2 \sum_{k=1}^d \binom{d}{k}
  \prn*{ \Mx{T}{\Mx{A}}^{(k)} \tB^{(k)} \ew \Mx{U}{\Mx{A}}^{(k)}}(:, j),
\end{align*}
where $\tB^{(k)} = \Mx{B}{d-k}\ew \Vc{\lambda}'\Vc{\lambda}$. This formula implies that
\begin{displaymath}
	\Mx{W}_{\Mx{A}}=2 \sum_{k=1}^d \binom{d}{k}  \Mx{T}{\Mx{A}}^{(k)} \tB^{(k)}\ew \Mx{U}{\Mx{A}}^{(k)}.
\end{displaymath}
In a similar fashion, we may define $\prn*{\Mx{T}{\Mx{D} ,r}^{(k)}}_{r=1,2}$ and $\prn*{\Mx{U}{\Mx{D} ,r}^{(k)}}_{r=1,2}$ such that
$$\nabla_{\Vc{d}{j}}\kappa_k^{(ij)} = \Mx{T}{\Mx{D} ,1}^{(k)}(:, i) \ew \Mx{U}{\Mx{D} ,1}^{(k)}(:, j) + \Mx{T}{\Mx{D} ,2}^{(k)}(:, i) \ew \Mx{U}{\Mx{D} ,2}^{(k)}(:, j),$$
which implies
\begin{displaymath}
	\Mx{W}_{\Mx{D}}:= \nabla_{\Mx{D}}\fGMM_1(\theta) = 2 \sum_{k=1}^d \binom{d}{k}  \Mx{T}{\Mx{D} ,1}^{(k)} \tB^{(k)}\ew \Mx{U}{\Mx{D} ,1}^{(k)} + \binom{d}{k}\Mx{T}{\Mx{D} ,2}^{(k)} \tB^{(k)}\ew \Mx{U}{\Mx{D} ,2}^{(k)}.
\end{displaymath}
Finally $\Mx{W}_{\Vc{\lambda}} = 2\Vc{\lambda}\Mx{B}{d}$. We summarize the whole procedure to calculate $\fGMM_1(\theta)$ and its gradients in \cref{alg:fGMM_1}, including the formulas for $\prn*{\Mx{T}{\Mx{D} ,r}^{(k)}}_{r=1,2}$ and $\prn*{\Mx{U}{\Mx{D} ,r}^{(k)}}_{r=1,2}$.
\begin{myalgo}{alg:fGMM_1}%
  {Calculate $\fGMM_1(\theta)$ from \cref{eq:opt-problem-detailed} with diagonal covariances, that is, $\C_j\nobreak=\nobreak\diag(\Vc{d}{j})^2$ for all $j\in[m]$.
    }
  \Require $\theta=\set{\Vc{\lambda}, \Mx{A}, \Mx{D}},$
  $\Vc{\lambda}=\sqrenum!{\lambda}{m},$ 
  $\Mx{A}=\sqrenum!{\m}{m},$ 
  $\Mx{D}=\sqrenum!{\Vc{d}}{m}$ \smallskip
	\Ensure $f=\fGMM_1(\theta)$, 
	$\Mx{W}_{\Mx{A}} = \nabla_{\Mx{A}}\fGMM_1(\theta)$, 
	$\Mx{W}_{\Mx{D}} = \nabla_{\Mx{D}}\fGMM_1(\theta)$, 
	$\Mx{W}_{\Vc{\lambda}} = \nabla_{\Vc{\lambda}}\fGMM_1(\theta)$ \smallskip
	\State $\Mx{B}{0} \gets \Mx{1}{m\times m}$
	\For{$k = 1,\dots,d$}
	\If{$k$ is odd}
	\State $\Mx{K}{k} \gets k! \prn*{\Mx{D}^{\ewpow{ k-1}}\ew \Mx{A}}^{\Tr} \prn*{\Mx{D}^{\ewpow{k-1}}\ew \Mx{A}}$
	\Else
	\State $\Mx{V} \gets \prn*{\Mx{D}^{\ewpow{k}}}^{\Tr} \prn*{\Mx{D}^{\ewpow{k-2}}\ew \Mx{A}^{\ewpow{2}}}$
	\State $\Mx{K}{k} \gets (k{-}1)! \Prn{\Mx{D}^{\ewpow{k}}}^{\Tr}  \Mx{D}^{\ewpow{k}} + 
	\frac{k!}{2} \prn*{\Mx{V} + \Mx{V}'}$
	\EndIf
	\State $\displaystyle \Mx{B}{k} \gets \sum_{r=0}^{k-1} \binom{k-1}{r} \; \Mx{B}{r}\ew \Mx{K}{k-r}$ 
	\EndFor
	\State	$f \gets \Vc{\lambda}\Mx{B}{d}\Vc{\lambda}'$
	\State	$\Mx{W}_{\Vc{\lambda}} \gets 2\; \Vc{\lambda}\Mx{B}{d}$
	\State $\Mx{W}_{\Mx{A}}\gets \Mx{0}$,~~~~$\Mx{W}_{\Mx{D}}\gets \Mx{0}$,
	\For{$k = 1,\dots,d$}
	\State $\tB \gets \Mx{B}{d-k}\ew \Vc{\lambda}'\Vc{\lambda}$
	\If{$k$ is odd} \smallskip
	\State $\Mx{T}{\Mx{D}} \gets k! \; \Mx{D}^{\ewpow{k-1}}\ew \Mx{A}$
	\State $\Mx{U}{\Mx{D}} \gets \bm{1}(k>1)(k-1) \Mx{D}^{\ewpow{k-2}}\ew \Mx{A}$
	\State $\Mx{W}_{\Mx{D}} \gets \Mx{W}_{\Mx{D}}+ 2\binom{d}{k}\Mx{T}{\Mx{D}} \tB\ew \Mx{U}{\Mx{D}}$\medskip
	\State $\Mx{T}{\Mx{A}} \gets k!\; \Mx{D}^{\ewpow{k-1}}\ew \Mx{A}$
	\State $\Mx{U}{\Mx{A}} \gets \Mx{D}^{\ewpow{k-1}}$
	\State $\Mx{W}_{\Mx{A}} \gets \Mx{W}_{\Mx{A}} + 2\binom{d}{k} \Mx{T}{\Mx{A}} \tB\ew \Mx{U}{\Mx{A}}$ 
	\Else \smallskip
	\State $\Mx{T}{\Mx{D},1} \gets k!\; \Mx{D}^{\ewpow{k}} + k\frac{k!}2  \Mx{D}^{\ewpow{k-2}}\ew \Mx{A}^{\ewpow{2}}$
	\State $\Mx{U}{\Mx{D},1} \gets \Mx{D}^{\ewpow{k-1}} $
	\State $\Mx{T}{\Mx{D},2} \gets \frac{k!}{2}\; \Mx{D}^{\ewpow{k}}$
	\State $\Mx{U}{\Mx{D},2} \gets \bm{1}(k>2)(k-2) \Mx{D}^{\ewpow{k-3}}\ew \Mx{A}^{\ewpow{2}}$
	\State $\Mx{W}_{\Mx{D}} \gets \Mx{W}_{\Mx{D}}+ 2\binom{d}{k}\prn*{\Mx{T}{\Mx{D},1} \tB\ew \Mx{U}{\Mx{D},1} + \Mx{T}{\Mx{D},2} \tB\ew \Mx{U}{\Mx{D},2}}$\medskip
	\State $\Mx{T}{\Mx{A}} \gets k!\; \Mx{D}^{\ewpow{k}}$
	\State $\Mx{U}{\Mx{A}} \gets \Mx{D}^{\ewpow{k-2}} \ew \Mx{A}$
	\State $\Mx{W}_{\Mx{A}} \gets \Mx{W}_{\Mx{A}} + 2\binom{d}{k} \Mx{T}{\Mx{A}} \tB\ew \Mx{U}{\Mx{A}}$ 
	\EndIf
	
	\EndFor
\end{myalgo}

\subsubsection[Calculating moment dot-product]{Calculating $\fGMM_2(\theta)$}

Recalling \cref{thm:gmmdotp}, using that $\C_{j} = \diag(\Vc{d}{j})^2$, and thus that $\x{i}'\C{j}\x{i} = \prn*{\x{i}^{\ewpow{2}}}^{\Tr} \Vc{d}{j}^{\ewpow{2}}$, we obtain
\begin{align*}
	\fGMM_2(\theta) &= \sum_{i=1}^p\sum_{j=1}^m \lambda_j \; \alpha_{ij}^{(d)} & 
	\nabla_{\m{j}} \fGMM_2(\theta)
	&= d \lambda_j \sum_{i=1}^p \alpha_{ij}^{(d-1)} \x{i}, \\
	\nabla_{\lambda{j}} \fGMM_2(\theta)  &= \sum_{i=1}^p \; \alpha_{ij}^{(d)} &
	\nabla_{\C{j}} \fGMM_2(\theta)
	&= d(d-1) \lambda_j \sum_{i=1}^p \alpha_{ij}^{(d-2)} \Vc{d}{j}\ew \x{i}^{\ewpow{2}}.	
\end{align*}
where $\alpha_{ij}^{(d)}$ is calculated using \eqref{eq:alpha_recursion_formula}. Let $\Mx{R}{1}, \Mx{R}{2}\in \Real^{p \times m}$, defined entrywise by 
\begin{displaymath}
	\Mx{R}{1}(i, j) = \alpha_{ij}^{(d-1)} \qtext{and} \Mx{R}{2}(i, j) = \alpha_{ij}^{(d-2)}m
\end{displaymath}
and
\begin{gather*}
	\Mx{X}  = \begin{bmatrix} \x{1} & \cdots & \x{p} \end{bmatrix} \in \R^{n \times p},
	\quad
	\Mx{T}=\Mx{X} \Mx{R}{1},\\
	\Mx{U}=(d-1)(\Mx{X}^{\ewpow{2}}\Mx{R}{2})\ew \Mx{D}  \qtext{and}
	 \Vc{z} = \Prn{\Mx{T} \ew \Mx{A} + \Mx{U} \ew \Mx{D}} \Vc{1}.
\end{gather*}
Then we have
\begin{align*}
	\fGMM_2(\theta) &= \Vc{\lambda} \Vc{z}& 
	\nabla_{\Mx{A}} \fGMM_2(\theta)
	&= d \; \Mx{T} \diag(\Vc{\lambda}), \\
	\nabla_{\Vc{\lambda}} \fGMM_2(\theta)  &= \Vc{z}' &
	\nabla_{\Mx{D}} \fGMM_2(\theta)
	&= d \; \Mx{U} \diag(\Vc{\lambda}).	
\end{align*}
The algorithm to calculate these quantities, including the recursion to calculate $\Mx{R}{1}$ and $\Mx{R}{2}$, is summarized in \cref{alg:fGMM_2}.

\stepcounter{footnote}
\begin{myalgo}{alg:fGMM_2}%
  {Calculate $\fGMM_2(\theta)$ from \cref{eq:opt-problem-detailed}
    with diagonal covariances, that is,
    $\C_j\nobreak=\nobreak\diag(\Vc{d}{j})^2$ for all $j\in[m]$.%
}
  \Require 
  $\Mx{X}  = \sqrenum!{\x}{p}, 
  \theta=\set{\Vc{\lambda}, \Mx{A}, \Mx{D}}, 
  \Vc{\lambda}=\sqrenum!{\lambda}{m},
  \Mx{A}=\sqrenum!{\m}{m},
  \Mx{D}=\sqrenum!{\Vc{d}}{m}$
  \Ensure $f=\fGMM_2(\theta)$, 
  $\Mx{W}_{\Mx{A}} = \nabla_{\Mx{A}}\fGMM_2(\theta)$, 
  $\Mx{W}_{\Mx{D}} = \nabla_{\Mx{D}}\fGMM_2(\theta)$, 
  $\Mx{W}_{\Vc{\lambda}} = \nabla_{\Vc{\lambda}}\fGMM_2(\theta)$ \smallskip
  \State $\Mx{V} \gets \Mx{X}'\Mx{A}$
  \State $\Mx{Z} \gets (\Mx{X}')^{\ewpow{2}} \Mx{D}^{\ewpow{2}}$
  \If{$d=1$}
  \State $\Mx{R}{2} \gets \Mx{0}{p\times m}$ (all zeros $p \times m$ matrix)
  \State $\Mx{R}{1} \gets \Mx{1}{p\times m}$ (all ones $p \times m$ matrix)
  \Else
  \State $\Mx{R}{2} \gets \Mx{1}{p\times m}$ 
  \State $\Mx{R}{1} \gets \Mx{V}$
  \EndIf
  \For{$k = 2,\dots,d-1$}
  \State $\Mx{R}{3} \gets \Mx{R}{2}$
  \State $\Mx{R}{2} \gets \Mx{R}{1}$
  \State $\Mx{R}{1} \gets \Mx{R}{2} \ew \Mx{V} + (k-1) \Mx{R}{3} \ew \Mx{Z}$ 
  \EndFor
  \State $\Mx{T}\gets \Mx{X} \Mx{R}{1}$
  \State $\Mx{W}{\Mx{A}} \gets d\; \Mx{T} \diag(\Vc{\lambda})$
  \State $\Mx{U}\gets (d-1)(\Mx{X}^{\ewpow{2}}\Mx{R}{2})\ew \Mx{D}$
  \State $\Mx{W}{\Mx{D}} \gets d\; \Mx{U} \diag(\Vc{\lambda})$
  \State $\Mx{W}{\Vc{\lambda}} \gets \Prn{\Mx{T} \ew \Mx{A} + \Mx{U} \ew \Mx{D}} \Vc{1}$
  \State $f\gets \Mx{W}{\Vc{\lambda}}\Vc{\lambda}'$
\end{myalgo}

\subsection{Implementation of debiasing to GMMs}
\label{sec:impl-debi-gmm}

Consider the optimization problem \eqref{eq:opt-problem-debias}, 
that arises as an application of \cref{thm:GMMdebiasedmoments,prop:debiaseddotp}
to GMMs with known covariance $\C$. In this section, we explain how to calculate $\fdeb(\theta)$ and its derivatives, which is to be used by a first-order optimization method to solve \eqref{eq:opt-problem-debias}. Recalling that $\fdeb(\theta) = \fdeb_1(\theta) - 2 \fdeb_2(\theta)$, we show how each term is calculated in the following sections.

\subsubsection{Computing $\fdeb_1$ and its derivatives}
The derivatives of $\fdeb_1$ are:
\begin{align*}
	\FD{\fdeb_1}{\lambda_j} & = 2\sum_{i=1}^m \lambda_i \ang{\m{i},\m{j}}^d, &
	\FD{\fdeb_1}{\m{j}} & = 2d \sum_{i=1}^m \lambda_i \lambda_j \ang{\m{i},\m{j}}^{d-1} \m{i}.
\end{align*}
Define the vector and matrix quantities,

\begin{gather*}
	\Vc{\lambda}
	= \sqrenum!{\lambda}{m}  \in \R^{1\times m},
	\quad
	\Mx{A}
	= \sqrenum!{\m}{m} \in \R^{n \times m},
	\\
	\Mx{B} = \Mx{A}'\Mx{A} \in \R^{m \times m},
	\quad
	\Mx{C} = \Mx{B}^{\ewpow{d-1}} \in \R^{m \times m}, %
	\qtext{and}
	\Vc{u} = (\Mx{B} \had \Mx{C}) \Vc{\lambda} \in \Real^m, %
\end{gather*}
where $\Mx{B}^{\ewpow{d-1}}$ indicates to raise each element of $\Mx{B}$ to the $(d-1)$ power,
and $\Mx{B} \had \Mx{C}$ indicates Hadamard (elementwise) multiplication.
Then we can express $\fdeb_1$ and its gradients as follows,
with $\Mx{D}{\Vc{\lambda}} = \diag(\Vc{\lambda})$:
\begin{displaymath}
	\fdeb_1(\theta) = \Vc{\lambda}'\Vc{u} \in \R,
	\quad\!\!\!
	\FD{\fdeb_1}{\Vc{\lambda}}  = 2 \Vc{u} \in \R^m,\!\!\!
	\qtext{and}\!\!\! 
	\FD{\fdeb_1}{\Mx{A}} = 2d \, \Mx{A} \Mx{D}{\Vc{\lambda}} \Mx{C} \Mx{D}{\Vc{\lambda}} \in \R^{n \times m}.
\end{displaymath}

\subsubsection{Computing $\fdeb_2$ and its derivatives}
By \cref{prop:debiaseddotp}, the function $\fdeb_2$ and its gradients can be expressed as
\begin{align*}
	\fdeb_2(\theta) = \sum_{j=1}^m \lambda_j \m{j}'\Vc{z}{j},
	\quad
	\FD{\fdeb_2}{\lambda_j} = \m{j}' \Vc{z}{j},
	\qtext{and}
	\FD{\fdeb_2}{\m{j}} = d \lambda_j \Vc{z}{j},
\end{align*}
where
\begin{displaymath}
	\Vc{z}{j} = \frac{1}{p}\sum_{i=1}^p  \beta_{ij}^{(d-1)} \x{i} - (d-1) \beta_{ij}^{(d-2)} \C \m{j}
	\qtext{for all} j \in [m],
\end{displaymath}
and the $\beta$-values are calculated recursively using \eqref{eq:beta_recursion_formula}. Furthermore, let
\begin{gather*}
	\Mx{X}  = \begin{bmatrix} \x{1} & \cdots & \x{p} \end{bmatrix} \in \R^{n \times p},
	\quad
	\Mx{T}=\Mx{X} \Mx{R}{1},\\
	\Mx{U}=(d-1)(\Mx{X}^{\ewpow{2}}\Mx{R}{2})\ew \Mx{D}  \qtext{and}
	\Vc{z} = \Prn{\Mx{T} \ew \Mx{A} + \Mx{U} \ew \Mx{D}} \Vc{1}
\end{gather*}

Assuming we have computed the vectors $\Vc{z}{j}$, 
using the definitions of $\Vc{\lambda}$ and $\Mx{A}$ from computing $\fdeb_1$,
and defining
\begin{gather*}
	\Mx{X}  = \begin{bmatrix} \x{1} & \cdots & \x{p} \end{bmatrix} \in \R^{n \times p},
	\quad
	\Mx{Y} = \C \Mx{A} \in \R^{n \times m},
	\quad
	\Mx{Z}  = \begin{bmatrix} \Vc{z}{1} & \cdots & \Vc{z}{m} \end{bmatrix} \in \R^{n \times m},
	\\
	\qtext{and}
	\Vc{w}  \in \R^m \text{ with } w_j = \m{j}'\Vc{z}{j}". %
\end{gather*}
we can compute $\fdeb_2$ and its derivatives as follows.
\begin{align*}
	\fdeb_2(\theta) &= \Vc{\lambda}'\Vc{w}, &
	\FD{\fdeb_2}{\Vc{\lambda}}(\theta) &= \Vc{w}, &
	\FD{\fdeb_2}{\Mx{A}} &= d\, \Mx{Z} \diag({\Vc{\lambda}}).
\end{align*}

To compute $\Mx{Z}$, we execute a recursive procedure outlined in \cref{alg:fdeb}, to obtain
\begin{displaymath}
	\Mx{R}_1 = \sqr{ \beta_{ij}^{(d-1)} } \in \R^{p \times m}
	\qtext{and}
	\Mx{R}_2 =  \sqr{ \beta_{ij}^{(d-2)} } \in \R^{p \times m}.
\end{displaymath}
With these, we can express $\Mx{Z}$ as
\begin{displaymath}
	\Mx{Z} = \frac1{p}\prn{\Mx{X} \Mx{R}{1} - (d-1) \diag\prn{\Vc{1}' \Mx{R}{2}} \Mx{Y}},
\end{displaymath}
where $\Vc{1} = \begin{bmatrix} 1 & \cdots & 1 \end{bmatrix} \in \Real^p$. We summarize the algorithm to calculate $\fdeb(\theta)$ and its derivatives, which includes calculating $\fdeb_1$ and $\fdeb_2$, in \cref{alg:fdeb}.

\begin{myalgo}{alg:fdeb}{Calculate $\fdeb(\theta)$ from \cref{eq:opt-problem-debias}.}
	\Require $\C$, $\Mx{X}  = \sqrenum!{\x}{p}$,
	$\theta=\set{\Vc{\lambda}, \Mx{A}}$, 
	$\Vc{\lambda}=\sqrenum!{\lambda}{m}$,
	$\Mx{A}=\sqrenum!{\m}{m}$
	\Ensure $f=\fdeb(\theta)$, 
	$\Mx{W}_{\Mx{A}} = \nabla_{\Mx{A}}\fdeb(\theta)$, 
	$\Mx{W}_{\Vc{\lambda}} = \nabla_{\Vc{\lambda}}\fdeb(\theta)$ \smallskip
	\State $\Mx{B} \gets \Mx{A}'\Mx{A}$
	\State $\Mx{C} \gets \Mx{B}^{\ewpow{d-1}}$
	\State $\Vc{u} \gets (\Mx{B} \had \Mx{C}) \Vc{\lambda}$
	\State $\Mx{D}{\Vc{\lambda}} \gets \diag(\Vc{\lambda})$
	\State $f_1 \gets \Vc{\lambda}'\Vc{u} $
	\State $\Mx{W}_{\Vc{\lambda},1}\gets 2\Vc{u}$
	\State $\Mx{W}_{\Mx{A},1}\gets 2d \, \Mx{A} \Mx{D}{\Vc{\lambda}} \Mx{C} \Mx{D}{\Vc{\lambda}}$
	\State $\Mx{V} \gets \Mx{X}'\Mx{A}$
	\State $\Mx{Y} \gets \C \Mx{A}$
	\State $\Mx{U} \gets \diag((\Mx{Y} \ew \Mx{A})\Vc{1})$
	\If{$d=1$}
	\State $\Mx{R}{2} \gets \Mx{0}{p\times m}$ (all zeros $p \times m$ matrix)
	\State $\Mx{R}{1} \gets \Mx{1}{p\times m}$ (all ones $p \times m$ matrix)
	\Else
	\State $\Mx{R}{2} \gets \Mx{1}{p\times m}$ 
	\State $\Mx{R}{1} \gets \Mx{V}$
	\EndIf
	\For{$k = 2,\dots,d-1$}
	\State $\Mx{R}{3} \gets \Mx{R}{2}$
	\State $\Mx{R}{2} \gets \Mx{R}{1}$
	\State $\Mx{R}{1} \gets \Mx{R}{2} \ew \Mx{V} - (k-1) \Mx{R}{3} \Mx{U}$ 
	\EndFor
	\State $\Mx{Z} \gets \frac1{p}\prn{\Mx{X} \Mx{R}{1} - (d-1) \diag\prn{\Vc{1}' \Mx{R}{2}} \Mx{Y}}$
	\State $\Vc{w} \gets (\Mx{Z} \ew \Mx{A})\Vc{1}$
	\State $f_2 \gets \Vc{\lambda}'\Vc{w}$
	\State $\Mx{W}_{\Vc{\lambda},2}\gets \Vc{w}$
	\State $\Mx{W}_{\Mx{A},2}\gets d\, \Mx{Z} \Mx{D}{\Vc{\lambda}}$
	\State $f \gets f_1 - 2f_2$
	\State $\Mx{W}_{\Vc{\lambda}}\gets \Mx{W}_{\Vc{\lambda},1} - 2\Mx{W}_{\Vc{\lambda},2}$
	\State $\Mx{W}_{\Mx{A}}\gets \Mx{W}_{\Mx{A},1} - 2\Mx{W}_{\Mx{A},2}$
\end{myalgo}

\subsection{Augmented system}

For solving the augmented system, we implement two approaches, which we denote by \textsc{Implicit} and \textsc{Post-Processing}. These approaches can be thought of as reparametrizations of the same optimization problem, however we observed empirically that these perform differently. We explain these approaches for $\fGMM(\bar \theta)$; the implementation for $\fdeb(\bar \theta)$ is analogous.

\subsubsection[Implicit]{\textsc{Implicit}}

The idea of the \textsc{Implicit} approach is to solve \eqref{eq:opt-problem-aug} without explicitly forming the augmented variables $(\maug{j}, \Caug{j})_{j=1}^m$ and augmented samples $(\x[\bar]{i})_{i=1}^p$. Comparing $\fGMM_1(\bar\theta) \equiv \nrm*{\M[d][\bar]}^2$ with $\fGMM_1(\theta) \equiv \nrm*{\M}^2$ in \cref{thm:gmm_tensor_norm}, we note that the corresponding cumulants are related by 
\begin{displaymath}
\bar\kappa^{(\ell)}_{ij} = \begin{cases}
\kappa^{(1)}_{ij} + \omega^2& \text{ if }\ell=1,\medskip\\
\kappa^{(\ell)}_{ij}& \text{ otherwise}.
\end{cases}
\end{displaymath}
In a similar fashion, we obtain that
\begin{equation*}%
	\fGMM_2(\bar\theta) = \frac1{p}\sum_{i=1}^p\sum_{j=1}^m \lambda_j \; \bar\alpha_{ij}^{(d)},
\end{equation*}
where
$\bar\alpha_{ij}^{(d)}$ is calculated implicitly using the recursion formula
\begin{equation*}
	\bar\alpha_{ij}^{(\ell)} =
	\bar\alpha_{ij}^{(\ell-1)} (\x{i}' \m{j}" + \omega^2) +(\ell-1) \bar\alpha_{ij}^{(\ell-2)} \x{i}' \C{j}" \x{i}"
\end{equation*}
with the convention that $\bar\alpha_{ij}^{(0)} = 1$ and $\bar\alpha_{ij}^{(1)} = \x{i}' \m{j}" + \omega^2$. Finally, we note that $\omega$ is a hyper-parameter that stays constant throughout the optimization, and that the gradients can be easily adapted to the implicit augmented system.
In this formulation, we must explicitly enforce the constraint $\sum_{j=1}^m \lambda_j=1$.

\subsubsection[Post-Processing]{\textsc{Post-Processing}}

For this approach, we use the augmented variables \linebreak $(\maug{j}, \Caug{j})_{j=1}^m$ and augmented samples $(\x[\bar]{i})_{i=1}^p$, however we do not enforce the constraint $\Vc[\bar]{\mu}{j}(n{+}1) = \omega$ or $\sum_{j=1}^m \lambda_j=1$ throughout the optimization.
These are instead enforced later by post-processing the obtained solution. Without these constraints, the solutions of the optimization problem have again a scaling ambiguity, which we fix by setting $\Vc{\lambda}$ to be constant throughout the optimization. After we obtain a solution to this optimization problem, $\tilde\theta = \set{\prn{\tilde\lambda_j, \Vc[\tilde]{\mu}{j}, \Mx[\tilde]{\Sigma}{j} } }_{j=1}^m$, we obtain the desired solution by rescaling $\tilde\theta$ as follows.
\begin{displaymath}
\tilde\gamma_j = \omega/\Vc[\tilde]{\mu}{j}(n{+}1),\quad \m{j}= \tilde\gamma_j\Vc[\tilde]{\mu}{j}(1{:}n),
\quad \C{j}= \tilde\gamma_j^2\Mx[\tilde]{\Sigma}{j}(1{:}n, 1{:}n)
\qtext{and} \lambda_j = \tilde\gamma_j^{{-}d} \tilde\lambda_j.
\end{displaymath}

Although both augmented system approaches are reparametrizations of the same optimization problem, we observe empirically that \textsc{Post-processing} consistently outperforms \textsc{Implicit}.

\section{Computational Experiments}
\label{sec:comp-exper}

We demonstrate the potential of the proposed approaches in several examples.
We perform our experiments in MATLAB, using
MATLAB's implementation of EM (\texttt{fitgmdist}) from
the Statistics and Machine Learning Toolbox.
We also use the Tensor Toolbox for MATLAB \citep{TensorToolbox,BaKo06}.
The experiments were run on a Windows
laptop with an Intel Core i7-10510U CPU (2.3GHz)
and 16~GB of memory.

\subsection{Demonstrating accuracy of GMM moments}
\label{sec:gmm3_moment_2d_validate}

In \cref{fig:gmm3_moment_2d_validate}, we demonstrate
the accuracy of the GMM moment expression in \cref{prop:GMMbiasedmoments}
for a small example with $n=2$, $m=3$.
\begin{figure}
  \centering
  \subfloat[Probability distribution function (pdf)]{%
    \includegraphics[width=0.45\textwidth]{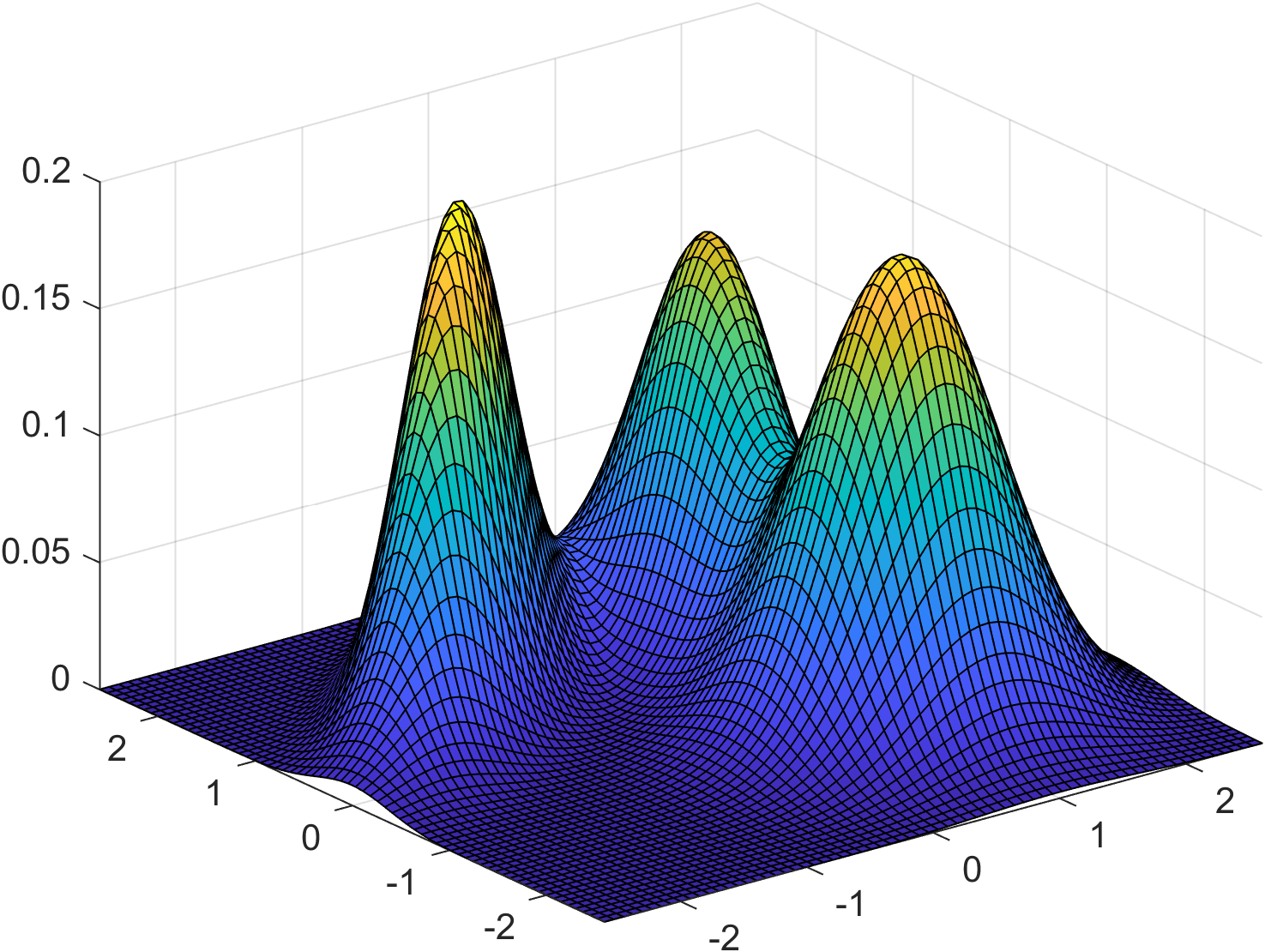}%
  }
  ~~
  \subfloat[Contour lines of pdf]{%
    \includegraphics[width=0.45\textwidth]{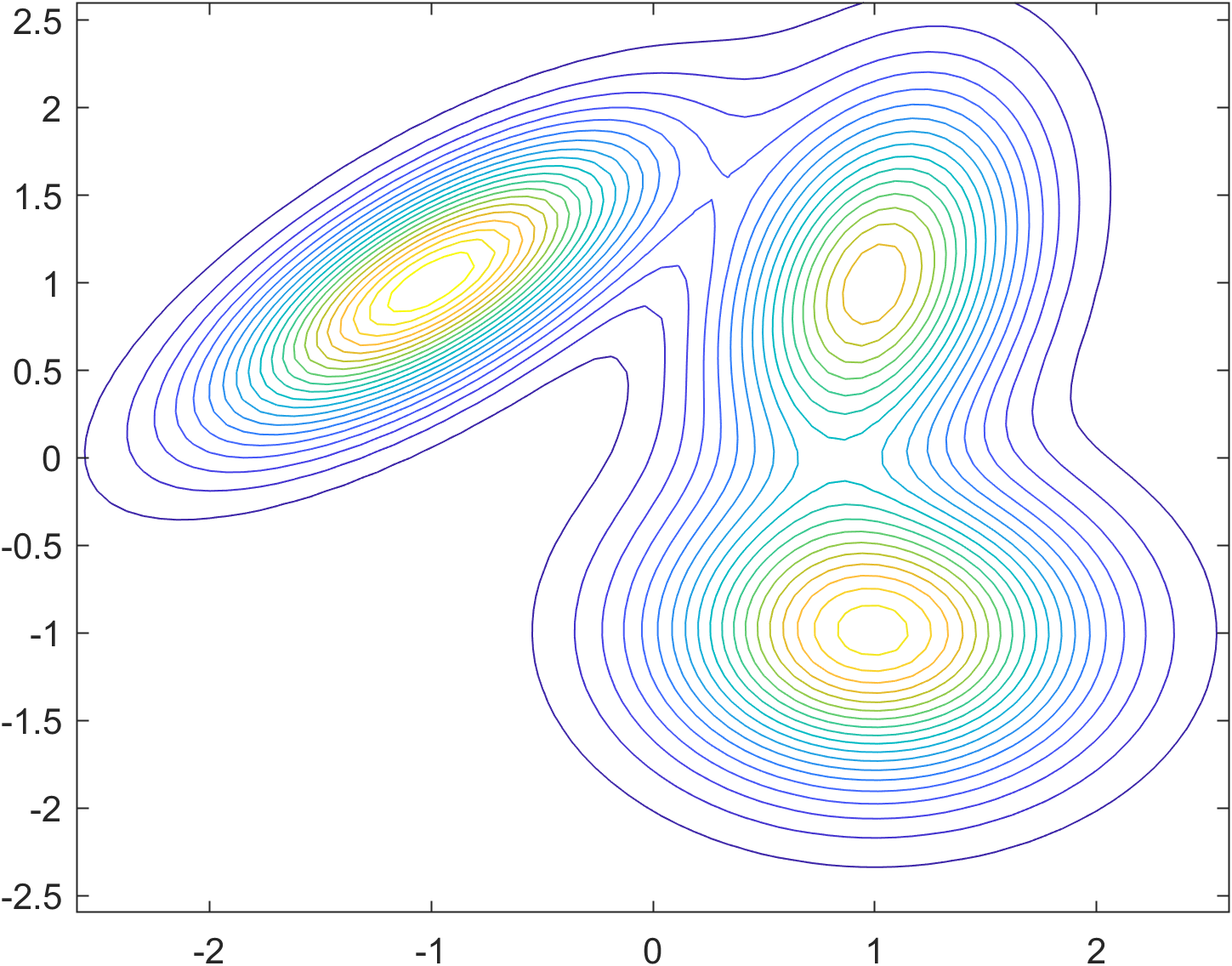}%
  }\\
  \subfloat[Error in theoretical versus empirical moments, as number of samples from the distribution ($p$) increases.]{%
    \begin{tikzpicture}[
      glabel/.style={node font=\footnotesize,align=flush right}
      ]
      \node (image)
      {\includegraphics[scale=0.75]{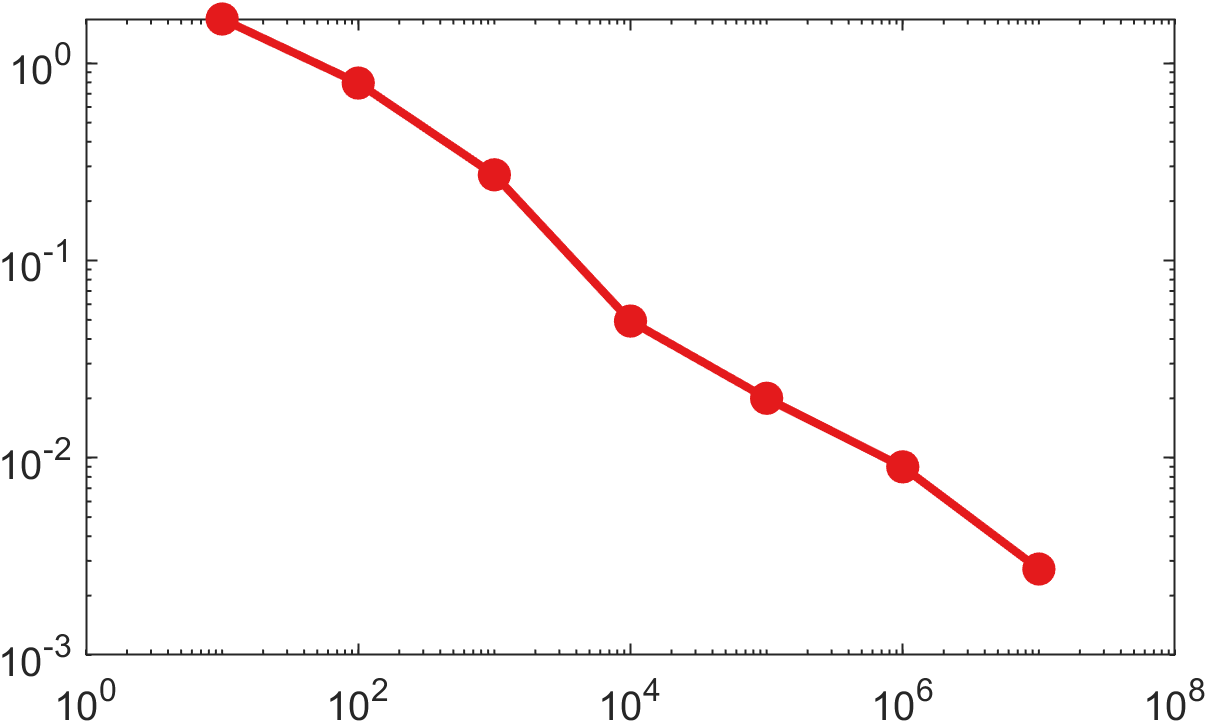}};
      \node[glabel,anchor=north] at (image.south) {number of realizations, $p$};
      \node[glabel,anchor=north east,text width=1.25in] at ($(image.north east) -0.4*(1,1)$)
      {Model versus estimated moments\\[1mm]
        $\nrm**{\M[3] - \Mest[3] }$};
    \end{tikzpicture}%
  }
  \caption{Error in GMM moments for
    example GMM with $n=2$, $m=3$.}
  \label{fig:gmm3_moment_2d_validate}
\end{figure}
Here, we take
\begin{displaymath}
  X = \sum_{j=1}^3 \lambda_j \mathcal{N}(\m{j},\C{j})
  \qtext{with}
\begin{aligned}
  \lambda_1 &= 0.4,
  &
  \m_1&=\begin{bmatrix} 1 \\ -1 \end{bmatrix},
  &
  \C_1&=\begin{bmatrix} 0.4 & 0 \\  0 & 0.3 \end{bmatrix},
  \\
  \lambda_2 &= 0.3,
  &  
  \m_2&=\begin{bmatrix} 1 \\ 1 \end{bmatrix},
  &
  \C_2&=\begin{bmatrix} 0.2 & 0.1 \\  0.1 & 0.5 \end{bmatrix},
  \\
  \lambda_3 &= 0.3,
  &
  \m_3&=\begin{bmatrix} -1 \\ 1 \end{bmatrix},
  &
  \C_3&=\begin{bmatrix} 0.4 & 0.25\\  0.25& 0.3 \end{bmatrix}  .
\end{aligned}
\end{displaymath}
For increasing values of $p$, we draw $p$ random samples
$\set{\miwc[\Vc{x}][p]}$
from the distribution and compute
the norm of the difference between the model moment,
\begin{displaymath}
  \M[3] = \sum_{j=1}^3 \sop[3]{\m_j} + 2 \sym\prn{ \m_j \otimes \C_j},
\end{displaymath}
and the empirical moment,
\begin{displaymath}
  \Mest[3] = \frac{1}{p} \sum_{i=1}^p \sop[3]{\Vc{x}{i}}.
\end{displaymath}
We observe that the empirical moment converges to the model moment at a rate of $1/\sqrt{p}$, which is expected.

\subsection{Demonstrating accuracy of debiased moments}
\label{sec:twodim-debias}

In \cref{fig:twodim_debias_validate}, we demonstrate
the debiased moment expression in \cref{thm:GMMdebiasedmoments}.
\begin{figure}
  \centering
  \subfloat[Probability distribution function (pdf)]{%
    \includegraphics[width=0.45\textwidth]{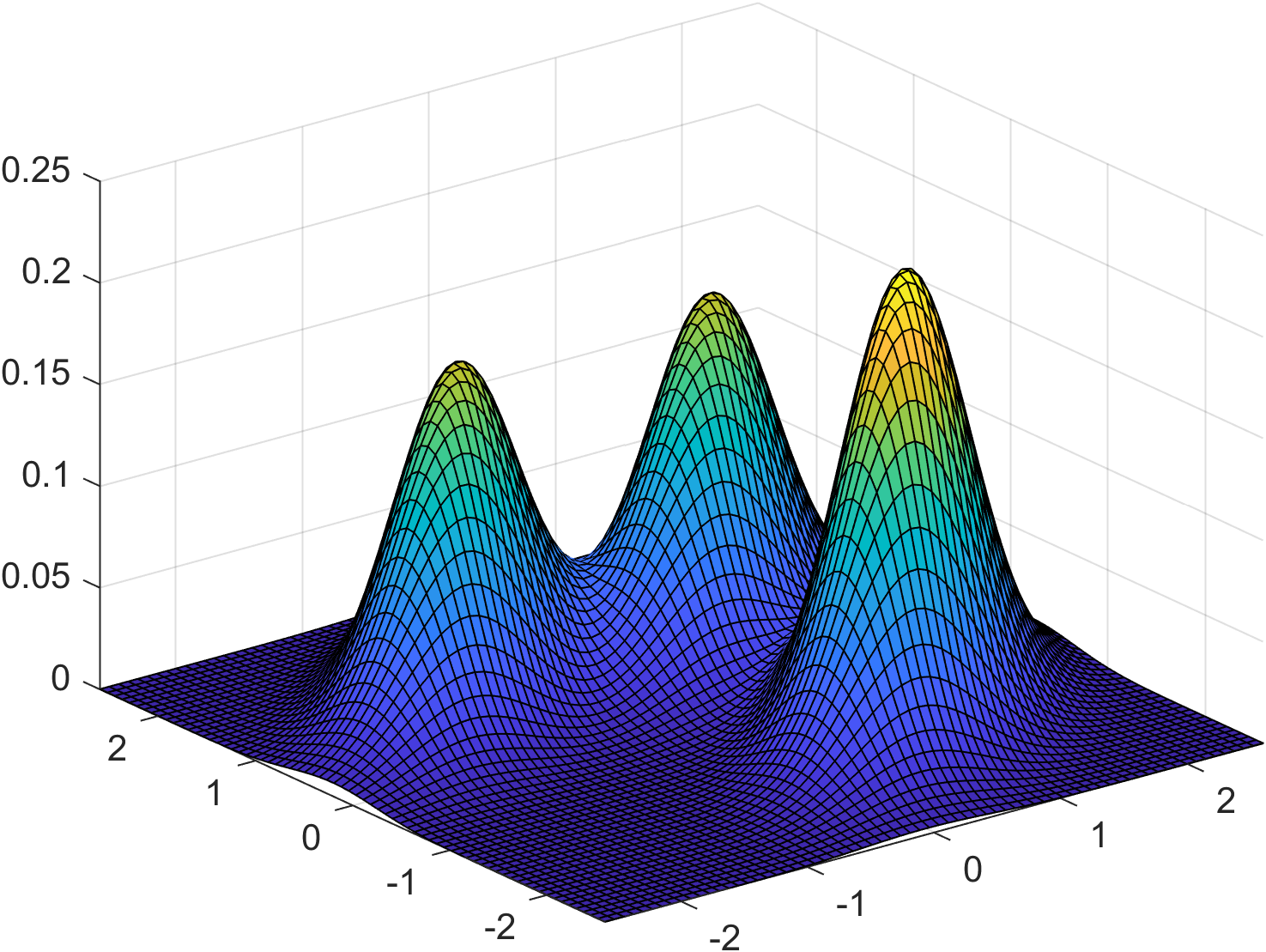}%
  }
  \;\;
  \subfloat[Contour lines of pdf]{%
    \includegraphics[width=0.45\textwidth]{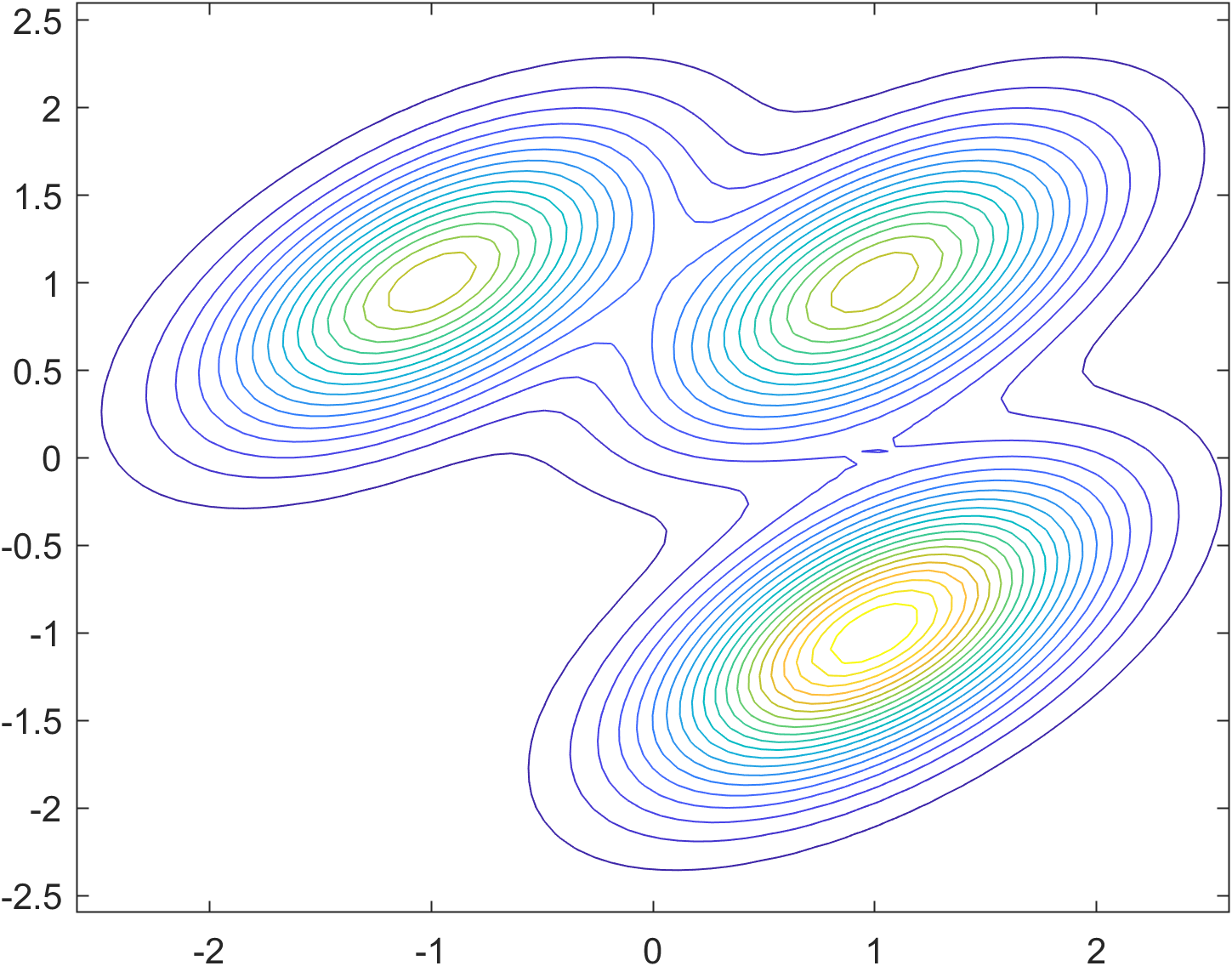}%
  }\\
  \subfloat[Error in theoretical versus debiased empirical moment, as number of realizations ($p$) increases.]{%
    \begin{tikzpicture}[
      glabel/.style={node font=\footnotesize,align=flush right}
      ]
      \node (image)
      {\includegraphics[scale=0.75]{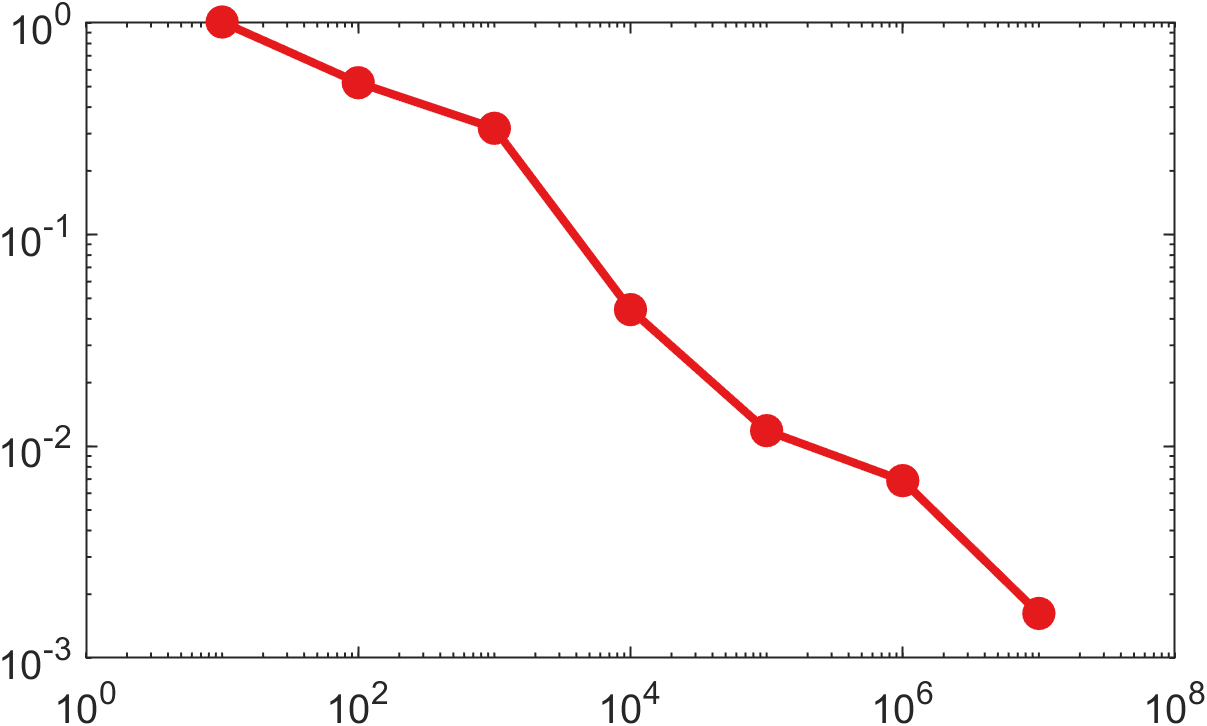}};
      \node[glabel,anchor=north] at (image.south) {number of realizations, $p$};
      \node[glabel,anchor=north east,text width=1.5in] at ($(image.north east) -0.4*(1,1)$)
      {Theoretical versus estimated debiased moments using known covariance\\[1mm]
        $\nrm**{\T[3] - \Test[3] }$};
    \end{tikzpicture}%
  }
  \caption{Error in debiased moments for
    example GMM with $n=2$, $m=3$, and a common covariance matrix.}
  \label{fig:twodim_debias_validate}
\end{figure}
For this example, we have $n=2$ and $m=3$ and a fixed and \emph{known} covariance:
\begin{displaymath}
  X = \sum_{j=1}^3 \lambda_j \mathcal{N}(\m{j},\C)
  \qtext{with}
  \C =
  \begin{bmatrix}
    0.4 & 0.2 \\
    0.2 & 0.3
  \end{bmatrix}
\end{displaymath}
and
\begin{align*}
  \lambda_1 &= 0.4,
  &
  \m_1&=\begin{bmatrix} 1 \\ -1 \end{bmatrix},
  &
  \lambda_2 &= 0.3,
  &  
  \m_2&=\begin{bmatrix} 1 \\ 1 \end{bmatrix},
  &
  \lambda_3 &= 0.3,
  &
  \m_3&=\begin{bmatrix} -1 \\ 1 \end{bmatrix}.
\end{align*}
We can alternatively consider this as
\begin{displaymath}
  X = Y + Z
  \qtext{with}
  \Prob(Y=\m_j) = \lambda_j
  \qtext{and}
  Z \sim \mathcal{N}(\Vc{0},\C).
\end{displaymath}
The model moment tensor $\T[3]=\mathbb{E}(\sop[3]{Y})$ is given by
\begin{displaymath}
  \T[3] = \sum_{j=1}^3 \lambda_j \sop[3]{\m_j}.
\end{displaymath}
For $p$ random samples $\set{\miwc[\Vc{x}][p]}$,
the debiased empirical moment tensor is given by
\begin{displaymath}
  \Test[3] = \frac{1}{p} \sum_{i=1}^{p}
  \prn***{ \sop[3]{\Vc{x}{i}}
  -3 \sym\prn{ \Vc{x}{i} \otimes \C }}.
\end{displaymath}
We show the difference in the norm between $\T[3]$ and $\Test[3]$.  We observe that the empirical moment converges to the model moment at a rate of $1/\sqrt{p}$, which is the expected rate of convergence.

\subsection{Comparison to EM}
\label{sec:comparison-em}

In this section, we compare the proposed method of moments (MoM) with expectation maximization (EM). Our purpose here is to demonstrate that MoM can get a somewhat better
solution than EM in scenarios where EM struggles, i.e., a high number of components,
high noise, and limited samples.
We stress that we are not arguing that MoM has any particular benefit as compared to EM
but rather that it is an intriguing alternative.
Previously, MoM was completely uncompetitive because its cost was exponential in the
order of the moment, i.e., $\mathcal{O}(n^d)$. Now, it's a viable alternative
that may prove useful for some applications.

Our setup is as follows: $n=100$ dimensions, $m=20$ components, and $p=8000$ samples.
We indicate the \emph{true} parameters, i.e., those used to generate the distribution,
with an asterisk.
The problem is randomly generated, with the following conditions:
\begin{displaymath}
  \nrm{\m{j}^*} = 1 \qtext{for all} j \in [m]
  \qtext{and}
  \ang{\m{i}^*,\m{j}^*} = 0.5 \qtext{for all} i,j\in[m].
\end{displaymath}
The covariances $\C{j}^*$ are diagonal with random entries selected uniformly
from the range $[0,2\sigma^2]$ for a given value of $\sigma^2$,
which varies between the experiments.
The proportions are selected randomly such that
\begin{displaymath}
  \min_j \lambda_j^* \geq 0.01
  \qtext{and}
  \sum_{j=1}^m \lambda_j^* = 1.
\end{displaymath}
This setup ensures that the Gaussians are difficult to distinguish,
especially for higher levels of noise, as determined by $\sigma^2$.
We generate $p=8000$ samples from the distribution.

\foreach \grp/\sgm in {1/0.05,2/0.1,3/0.2}
{
\pgfplotstableread[col sep=comma]{"./diagcov_experiments-grp-\grp/results.csv"}{\tdatatable}
\pgfplotstabletranspose[colnames from=colnames]\datatable{\tdatatable}
\newcommand{\logpdftitle}{EM objective (higher is better): log-likelihood}
\newcommand{\momthreetitle}{MoM3 objective (lower is better):
  $\nrm*{ \Tn{M}^{(3)} - \Tn[\hat]{M}^{(3)}}^2$}
\newcommand{\momfourtitle}{MoM4 objective (lower is better): $\nrm*{ \Tn{M}^{(4)} - \Tn[\hat]{M}^{(4)}}^2$}
\newcommand{\runtimetitle}{Runtime (seconds)}
\newcommand{\lambdatitle}{Difference from true weights:
  $\nrm{\Vc{\lambda}-\Vc{\lambda}^*}_1$}
\newcommand{\mutitle}{Difference from true means: $\displaystyle\avg_j
  \frac{\nrm{\Vc{\mu}{j}{-}\Vc{\mu}{j}^*}_2}{\nrm{\Vc{\mu}{j}^*}_2}$}
\newcommand{\sigmatitle}{Difference from true covariances: $\displaystyle\avg_j
  \frac{\nrm{\Mx{\Sigma}{j}{-}\Mx{\Sigma}{j}^*}_F}{\nrm{\Mx{\Sigma}{j}^*}_F}$}
\newcommand{\cosinetitle}{Cosine angular distance from true means: $\displaystyle\avg_j
  \frac{\ang{\Vc{\mu}{j},\Vc{\mu}{j}^*}}{\nrm{\Vc{\mu}{j}}_2\nrm{\Vc{\mu}{j}^*}_2}$}

\pgfplotsset{
  every axis/.append style={
    tick label style={font=\footnotesize},
    xmin=0.5,xmax=3.5,
    xtick=\empty,
    width=1.25in,
    height=1.75in,
  }
}
\tikzset{
  every mark/.append style={mark size=4pt,fill opacity=0.2},
  myplots/.style={only marks},
  em/.style={myplots,fill=Set1-A!50,color=Set1-A},
  mom3/.style={myplots,fill=Set1-B!50,color=Set1-B},
  mom4/.style={myplots,fill=Set1-C!50,color=Set1-C},
}

\begin{figure}
  \centering

  \begin{tikzpicture}[fill opacity=0.2,
    every node/.style={right,text=black,fill opacity=1,font=\small}]
    \node (EM) {EM};
    \draw[em] (EM.west) ++(-5pt,0) circle(4pt);
    \node[right=of EM] (MoM3) {MoM3};
    \draw[mom3] (MoM3.west) ++(-5pt,0) circle(4pt);
    \node[right=of MoM3] (MoM4) {MoM4};
    \draw[mom4] (MoM4.west) ++(-5pt,0) circle(4pt);
    \draw
    ($ (current bounding box.south west)+(-0.3,-0.1)$) rectangle 
    ($ (current bounding box.north east)+(0.3,0.1) $);    
  \end{tikzpicture}\\
  
  \foreach \foo/\title/\sep/\style in {%
    logpdf/\logpdftitle//,
    mom3func/\momthreetitle//{ymin=0},
    mom4func/\momfourtitle//{ymin=0},
    runtime/\runtimetitle/{\\}/,
    weight_err/\lambdatitle//{ymin=0},%
    mu_relerr_avg/\mutitle//{ymin=0},%
    sigma_relerr_avg/\sigmatitle//{ymin=0,yticklabel style={/pgf/number format/fixed}},%
    avg_mu_cosine/\cosinetitle//{ymax=1}%
  }
  {\subfloat[\title]{%
      \begin{tikzpicture}
        \path (-0.8,-0.2) rectangle +(3,3);
        \begin{axis}[\style]
          \addplot[em] table[x expr=1,y=em_\foo]{\datatable};    
          \addplot[mom3] table[x expr=2,y=mom3_\foo]{\datatable};    
          \addplot[mom4] table[x expr=3,y=mom4_\foo]{\datatable};
        \end{axis}
      \end{tikzpicture}%
    }\sep
  }
  \caption{Comparison of EM and proposed Method of Moments (MoM). Results for noise level: $\sigma^2=\sgm$.}
  \label{fig:results-\grp}
\end{figure}
}

We do three experiments corresponding to $\sigma^2= \set{0.05,0.1,0.2}$
with results in \cref{fig:results-1,fig:results-2,fig:results-3};
the problem is harder to solve for higher noise, i.e., higher values of $\sigma^2$.
We use the EM implementation \texttt{fitgmdist} in MATLAB
and compare against our implementation
of the method of moments (MoM)
using $\omega=0.5$ for the augmentation, and using
both third (MoM3) and fourth (MoM4) moments.

For each experiment, we run each method ten times with ten different
random starting points.
We report eight different metrics for each run, discussed below.
We plot the result of each run as a partially transparent colored circle so that overlap
can be more easily observed, e.g., a darker circle is indicative of more runs overlapping.
\begin{itemize}
\item \textbf{EM Objective:} Log-likelihood of the final result. This is the quantity maximized by EM. Larger values are better.  In \cref{fig:results-1,fig:results-2},
  both MoM3 and MoM4 achieve higher log-likelihoods than EM, even though they
  are optimizing a different cost function. The alternative
  of MoM is intriguing in this respect. 
\item \textbf{MoM3 \& MoM4 Objectives:} $\nrm{\M[3]-\Mest[3]}^2$ and $\nrm{\M[4]-\Mest[4]}^2$, without augmentation. Lower is better. These are not \emph{exactly} the loss functions used in the optimization because those use the augmented function value, but it gives an idea of
  the convergence. The MoM methods achieve lower values than EM in this metric for all three examples, but it is notable that EM is also achieving small values on these metrics.
  We do observe that MoM seems to converge to the same minimum value in most runs across all three scenarios, perhaps indicating that the optimization landscape is more favorable than
  that of log-likelihood.
\item \textbf{Runtime:} Runtime in seconds, for runs on a dedicated laptop. Faster is better.
  MoM is only a bit slower than EM in \cref{fig:results-1,fig:results-2} and faster
  in \cref{fig:results-3}. These differences are likely attributable to differences
  in stopping conditions, different methods of selecting the initial guess, and different
  code optimizations. In general, these each have the same expense per iteration.
\end{itemize}
The last four metrics  can only
be computed when the generating parameters are known.
Further, these depend on matching the computed and true solutions.
To do this, we define the cost of matching $(i,j)$ to be $\nrm{\m{i}-\m{j}^*}_2$ for every $i,j\in[m]$.
Then we use the MATLAB \texttt{matchpairs} command to find the minimum-cost matching.
The remaining metrics are based on this matching.
\begin{itemize}
\item \textbf{Proportion Error:} $\nrm{\Vc{\lambda}-\Vc{\lambda}^*}_1$. Lower is better.
  This measures the correctness in determining the probability of each component.
  MoM always outperforms EM, with MoM4 generally outperforming MoM3.
\item \textbf{Average Relative Mean Error:} $\avg_j \prn{ \nrm{\m{j}-\m{j}^*}_2/\nrm{\m{j}^*}_2}$. Lower is better.
  The measures the accuracy in identifying the means.
  MoM always outperforms EM, with MoM4 generally outperforming MoM3.
  The overall accuracy degrades for higher values of $\sigma^2$.
\item \textbf{Average Relative Covariance Error:} $\avg_j \prn{ \nrm{\C{j}-\C{j}^*}_F/\nrm{\C{j}^*}_F}$. Lower is better.
  All three methods struggle to obtain high accuracy, and we would likely need more samples to obtain better accuracy with either approach.
\item \textbf{Cosine Angle:} Average cosine of the angle between the true and computed means:
  $\avg_j \prn{ \ang{\m{j},\m{j}^*}/\prn{\nrm{\m{j}}_2\nrm{\m{j}^*}_2}}$. Higher is better, with one being the optimum. This is an alternative to the distance metric in measuring the difference between the true and computed means. MoM clearly outperforms EM on this metric as well.
\end{itemize}

 \textbf{Reproducibility.}
 We provide code for these experiments at \url{https://gitlab.com/tgkolda/gaussian_mixture_experiments}.
The exact values for the true parameters and all
the samples used in these experiments are available as Comma Separated Values (CSV) files, one line
per entry.  We also provide  a log file, and the summary of
the results used to create the images above.

\section{Conclusions}
\label{sec:conclusions}

There are two basic approaches for parameter estimation:
methods that maximize the likelihood, 
and the method of moments.
The method of moments is generally considered impractical
for multivariate problems because the size of the moments
grows exponentially, i.e., the $d$th moment for an $n$-dimensional
random variable is of size $n^d$
and computations with them would be equally expensive.

In this work, we develop new expressions for the
moments of Gaussians and moments of mixtures of Gaussians.
To the best of our understanding, this is an entirely
novel approach, revealing more of the underlying algebraic and combinatorial 
structure, which can be computationally exploited.

From these results, we show that the method of moments is
tractable for Gaussian distributions and GMMs because we need not
explicitly form the moments.
{Instead, we can compute the distance between the empirical and model moments implicitly
and use that to fit the parameters of a Gaussian distribution or GMM.}
In this setting, the per-iteration complexity of the method of moments is the
same as that for expectation maximization, keeping in mind that they
are optimizing different functions.
There are, of course,
reasonable questions about the sensitivity of the method of moments.
Nevertheless, this work adds another ``tool'' to the toolbox for fitting these models,
which may be useful in some scenarios. 

There are still many open questions remaining.
On the computational side, much more investigation is
needed into the robustness and reliability of the method
and its comparison to EM on a range of problems.
We have also deferred the full algorithm for general covariances
to future work since the details are quite lengthy (though the main formulas are present in \cref{thm:gmm_tensor_norm,thm:gmmdotp}).
We may also wish to consider practical issues such as centering and
scaling the data before applying the method.
On the theoretical side, much past work has focused on investigating
the number of samples required to identify a GMM, and it may be
that the formulations for Gaussian distributions presented here open some
new pathways for refinement of those results.

At the intersection of computation and theory,
there is the question of how to choose the
value for the augmentation parameter which implicitly
weighs the combined moments. It would be especially
helpful if there were theory to guide the choice.
Further, in our limited studies, the method of moments seems more robust than EM
to the choice of starting point.
Specifically, we refer to its ability to minimize its objective
function and identify the model parameters. For this reason,
it may also be interesting to investigate if there exists some
fundamental difference in the optimization landscape for the
method of moments in comparison to that of maximizing expectation.

\appendix

\section{Supporting Lemmas and Technical Proofs}
\label{sec:proof_techniques}

\subsection{Completing proof of \texorpdfstring{\cref{prop:Psi}}{Proposition~\ref{prop:Psi}}}
\label{sec:proof-prop-Psi}

We give below in \cref{prop:cum_formula} the calculation of the
cumulants needed in the proof of \cref{prop:Psi}.
Before we get to that result, we first establish another technical lemma.

\begin{lemma}\label{lem:cumulants_1dgaussprod}
  Let $\alpha, \beta$ be independent standard random Gaussian variables, and $d, u, v \in \R$. Then
  \begin{equation}\label{eq:1dcumgenfun}
    \log\Prn{\E\Prn{e^{t (d \alpha \beta + u \alpha + v\beta)}}} =  t^2\frac{u^2 + v^2 + 2t d u v}{2-2t^2 d^2} - \frac12\log\prn{1-t^2 d^2},
  \end{equation}
  and
  \begin{multline}\label{eq:1dcumgenfun_expansion}
    \frac{d^m}{dt^m}\Prn{\log\Prn{\E\Prn{e^{t (d \alpha \beta + u \alpha + v\beta)}}}}_{t=0} \\
    =
    \begin{cases}
      (m-1)!d^m+ m! d^{m-2}\frac{u^2 + v^2}{2} &\text{if } m \text{ is even and } m\ge 2,\\
      m! d^{m-2}u v &\text{if } m \text{ is odd and } m\ge 3,\\
      0& \text{otherwise}.
    \end{cases}
  \end{multline}
\end{lemma}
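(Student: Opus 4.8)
The plan is to obtain the closed form \eqref{eq:1dcumgenfun} by computing the moment generating function through iterated Gaussian integration, and then to extract the derivatives \eqref{eq:1dcumgenfun_expansion} from the power-series expansion of that closed form. First I would condition on $\alpha$ and integrate out $\beta$: since $\beta$ is standard Gaussian with $\E(e^{s\beta}) = e^{s^2/2}$, taking $s = t(d\alpha + v)$ gives
\[
\E\Prn{e^{t(d \alpha \beta + u \alpha + v\beta)} \mid \alpha} = e^{tu\alpha + \frac{1}{2}t^2(d\alpha + v)^2}.
\]
Expanding the exponent rearranges it into the quadratic $\tfrac{1}{2}t^2 d^2 \alpha^2 + (tu + t^2 dv)\alpha + \tfrac{1}{2}t^2 v^2$ in $\alpha$. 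Next I would integrate over $\alpha \sim \N(0,1)$ using the standard Gaussian identity $\E(e^{a\alpha^2 + b\alpha}) = (1-2a)^{-1/2}\,e^{b^2/(2(1-2a))}$, valid for $a < \tfrac12$, here with $a = \tfrac{1}{2}t^2 d^2$ and $b = tu + t^2 dv$; this needs $t^2 d^2 < 1$, which holds in a neighborhood of $t=0$ so the cumulants are well defined.

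Taking logarithms then yields
\[
\log\Prn{\E\Prn{e^{t(d \alpha \beta + u \alpha + v\beta)}}} = \tfrac{1}{2}t^2 v^2 - \tfrac{1}{2}\log(1 - t^2 d^2) + \frac{(tu + t^2 dv)^2}{2(1 - t^2 d^2)}.
\]
Placing the first and third terms over the common denominator $2(1-t^2 d^2)$ and simplifying the numerator — where the $t^2 d^2 v^2$ contributions cancel — collapses them to $t^2(u^2 + v^2 + 2tduv)/(2 - 2t^2 d^2)$, which is exactly \eqref{eq:1dcumgenfun}.

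For the derivatives, I would split this closed form as $A(t) + B(t)$ with $B(t) = -\tfrac{1}{2}\log(1 - t^2 d^2)$ and expand each in powers of $t$. Using $-\tfrac{1}{2}\log(1-x) = \tfrac{1}{2}\sum_{k\ge 1} x^k/k$ with $x = t^2 d^2$ gives $B(t) = \sum_{k\ge 1}\frac{d^{2k}}{2k}t^{2k}$, so $B^{(2k)}(0) = (2k-1)!\,d^{2k}$ while all odd derivatives vanish. Writing $A(t) = \tfrac{u^2+v^2}{2}\cdot\frac{t^2}{1 - t^2 d^2} + duv\cdot\frac{t^3}{1 - t^2 d^2}$ and using the geometric series $\frac{1}{1 - t^2 d^2} = \sum_{j\ge 0} d^{2j}t^{2j}$, the first piece contributes $\tfrac{u^2+v^2}{2}d^{m-2}$ to the coefficient of $t^m$ for even $m \ge 2$, and the second contributes $uv\,d^{m-2}$ for odd $m \ge 3$. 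Multiplying each Taylor coefficient by $m!$ and adding the $B$-contribution recovers the three cases of \eqref{eq:1dcumgenfun_expansion}; since both $A$ and $B$ start at order $t^2$, the $m=1$ derivative (the ``otherwise'' case) is zero.

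This is a sequence of elementary computations, so there is no deep obstacle; the only points that demand care are the bookkeeping when combining the two fractions over a common denominator in the log-MGF (to see the $d^2 v^2$ cancellation), and the even/odd index separation when reading off Taylor coefficients. I would also record explicitly that all the manipulations are valid as an identity of analytic functions on $\lvert t\rvert < 1/\lvert d\rvert$ (and for all $t$ when $d = 0$), which suffices since the cumulants are merely derivatives at $t = 0$.
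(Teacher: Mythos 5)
Your proof is correct and takes essentially the same route as the paper's: you condition on $\alpha$ to integrate out $\beta$, then integrate over $\alpha$ (invoking the standard identity $\E\prn{e^{a\alpha^2+b\alpha}}=(1-2a)^{-1/2}e^{b^2/(2(1-2a))}$ where the paper completes the square in the explicit integral), and you extract \eqref{eq:1dcumgenfun_expansion} from the Taylor expansions of $-\tfrac12\log(1-t^2d^2)$ and the geometric series, exactly as the paper does. The differences are purely presentational; if anything, your coefficient bookkeeping and your remark about validity on $|t|<1/|d|$ are more explicit than the paper's one-line treatment of the expansion step.
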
	

\begin{proof}
Conditioning on $\alpha$, and taking the expectation in terms of $\beta$, we obtain
	
\begin{small}
\begin{align*}
\E\Prn{e^{t (d \alpha \beta + u \alpha + v\beta)}\middle|\alpha} &= \frac{1}{\sqrt{2 \pi}}\int_{-\infty}^{\infty} \exp\prn***{-\frac{x^2}{2} + t(d \alpha x + u \alpha + vx)}\,\mathrm{d} x\\
&= \frac{1}{\sqrt{2 \pi}}\int_{-\infty}^{\infty} \exp\prn***{-\frac12(x - t d\alpha - tv)^2}\exp\prn***{\frac{t^2}2 \prn{d \alpha +v}^2 + t u \alpha}\,\mathrm{d}x\\
&=\exp\Prn{\frac{t^2}2 \prn{d \alpha +v}^2 + t u \alpha}.
\end{align*}\end{small}
Then \cref{eq:1dcumgenfun} follows from
\begin{align*}
	\E\Prn{e^{t (d \alpha \beta + u \alpha + v\beta)}} &= \frac{1}{\sqrt{2 \pi}}\int_{-\infty}^{\infty} \exp\Prn{-\frac{x^2}2 + \frac{t^2}2 \prn{d x +v}^2 + t u x} \,\mathrm{d}x\\
	 &= \frac{1}{\sqrt{2 \pi}}\int_{-\infty}^{\infty} \exp\Prn{-\frac{1-t^2 d^2}2\Prn{x - \frac{t^2 d v + tu}{1-t^2 d^2}}^2}\\
	 &\hspace{110pt} \times \exp\Prn{\frac{(t^2 d v + tu)^2}{2-2t^2 d^2} + \frac{t^2}2 v^2}\,\mathrm{d}x\\
	 &=\frac{\exp\Prn{\frac{(t^2 d v + tu)^2}{2-2t^2 d^2} + \frac{t^2}2 v^2 }}{\sqrt{1-t^2 d^2}}=\frac{\exp\Prn{t^2\frac{u^2 + v^2 + 2t d u v}{2-2t^2 d^2} }}{\sqrt{1-t^2 d^2}}.
\end{align*}
Finally, \cref{eq:1dcumgenfun_expansion} follows from the Taylor expansions $\frac{1}{1-t^2 d^2} = 1 - d^2 t^2 + d^4 t^4 - \cdots$ and \begin{equation*}
-\frac12\log(1-t^2 d^2) = \frac12t^2 d^2 + \frac14 t^4 d^4 + \frac16 t^6 d^6 + \cdots \qedhere
\end{equation*}
\end{proof}

\begin{proposition}\label{prop:cum_formula}
Suppose $X \sim \N( \m, \C )$ and $\tilde{X} \sim \N( \tm, \tC)$ are independent random variables, and let
\begin{displaymath}
  \kappa_k =
  \frac{d^k}{d t^k} \prn**{\log\prn**{\E\prn*{\exp\prn*{t \ang{X, \tilde{X}}}}}}_{t=0}
\end{displaymath}
We then have
\begin{equation}\label{eq:kappa_def_proof}
	\kappa_k =
	\begin{cases}
		(k-1)!\trace\prn*{\Mx{Z}^{\frac{k}2}} +
		\frac{k!}{2} \prn*{\m'\tC \Mx{Z}^{\frac{k-2}2} \m + \tm' \Mx{Z}^{\frac{k-2}2} \C \tm }
		&\text{if } k \text{ is even},\\
		k! \; \tm'\Mx{Z}^{\frac{k-1}2}\m
		&\text{if } k \text{ is odd},
	\end{cases}
\end{equation}
with $\Mx{Z} = \C \tC$.
\end{proposition}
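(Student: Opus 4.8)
The plan is to reduce each cumulant $\kappa_k$ of $\ang{X, \tilde X}$ to the scalar building blocks already computed in \cref{lem:cumulants_1dgaussprod}, by whitening the two Gaussians and diagonalizing the bilinear form that couples them.

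First I would factor the (possibly degenerate) covariances as $\C = \Csqrt\Csqrt^{\Tr}$ and $\tC = \tCsqrt\tCsqrt^{\Tr}$, which is possible since both are symmetric positive semidefinite, and write $X = \m + \Csqrt\Vc{g}$ and $\tilde X = \tm + \tCsqrt\Vc{h}$ with $\Vc{g}, \Vc{h}\sim\N(\Vc{0},\Id)$ independent. Expanding the inner product gives
\[
\ang{X, \tilde X} = \m'\tm + (\tCsqrt^{\Tr}\m)'\Vc{h} + (\Csqrt^{\Tr}\tm)'\Vc{g} + \Vc{g}'(\Csqrt^{\Tr}\tCsqrt)\Vc{h}.
\]
Taking a singular value decomposition $\Csqrt^{\Tr}\tCsqrt = \Mx{U}\Mx{S}\Mx{V}^{\Tr}$ with $\Mx{U},\Mx{V}$ orthogonal and $\Mx{S} = \diag(s_1,\dots,s_n)$, and passing to the rotated coordinates $\Mx{U}^{\Tr}\Vc{g}$ and $\Mx{V}^{\Tr}\Vc{h}$ (again i.i.d.\ standard Gaussians, whose components I call $\alpha_i,\beta_i$), this becomes a constant plus a sum of decoupled scalar terms,
\[
\ang{X, \tilde X} = \m'\tm + \sum_{i=1}^n ( s_i\alpha_i\beta_i + v_i\alpha_i + u_i\beta_i ),
\qquad \Vc{v} = \Mx{U}^{\Tr}\Csqrt^{\Tr}\tm,\quad \Vc{u} = \Mx{V}^{\Tr}\tCsqrt^{\Tr}\m,
\]
in which the $i$-th summand depends only on the independent pair $(\alpha_i,\beta_i)$.

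Since cumulants add over independent summands and an additive constant only shifts $\kappa_1$, applying \cref{lem:cumulants_1dgaussprod} termwise with $(d,u,v) = (s_i,v_i,u_i)$ and summing over $i$ yields $\kappa_1 = \m'\tm$, and for $k\ge 2$
\[
\kappa_k = (k{-}1)!\sum_i s_i^k + \frac{k!}{2}\sum_i s_i^{k-2}(v_i^2+u_i^2) \ \text{($k$ even)},\qquad \kappa_k = k!\sum_i s_i^{k-2}v_i u_i \ \text{($k$ odd)}.
\]
The last step is to re-express these symmetric sums through $\Mx{Z} = \C\tC$. Using $\Mx{U}\Mx{S}^2\Mx{U}^{\Tr} = \Csqrt^{\Tr}\tC\Csqrt$ and the cyclic invariance of the trace gives $\sum_i s_i^k = \trace((\Csqrt^{\Tr}\tC\Csqrt)^{k/2}) = \trace(\Mx{Z}^{k/2})$; substituting the definitions of $\Vc{v},\Vc{u}$ and repeatedly collapsing $\Csqrt\Csqrt^{\Tr} = \C$, $\tCsqrt\tCsqrt^{\Tr} = \tC$ via the identity $\Csqrt(\Csqrt^{\Tr}\tC\Csqrt)^{j}\Csqrt^{\Tr} = \Mx{Z}^{j}\C$ (and its $\C\leftrightarrow\tC$ analogue) turns the quadratic sums into $\sum_i s_i^{k-2}v_i^2 = \tm'\Mx{Z}^{(k-2)/2}\C\tm$, $\sum_i s_i^{k-2}u_i^2 = \m'\tC\Mx{Z}^{(k-2)/2}\m$, and $\sum_i s_i^{k-2}v_i u_i = \tm'\Mx{Z}^{(k-1)/2}\m$. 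These produce exactly \cref{eq:kappa_def_proof}, with the convention $\Mx{Z}^0 = \Id$ recovering the cases $k=1,2$.

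I expect the main obstacle to be precisely this translate-back bookkeeping: tracking the interleaving of $\Csqrt,\tCsqrt$ and their transposes so that the SVD power sums collapse to the advertised powers of $\Mx{Z}$ flanked by a single $\C$ or $\tC$, and checking that the even/odd parity split in \cref{lem:cumulants_1dgaussprod} maps onto the even/odd cases of the target formula. The probabilistic reduction is routine once the whitening and SVD are in place; the algebraic identity $\Csqrt(\Csqrt^{\Tr}\tC\Csqrt)^{j}\Csqrt^{\Tr} = \Mx{Z}^{j}\C$ is the crux that aligns the matrix exponents.
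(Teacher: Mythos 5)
Your proposal is correct and follows essentially the same route as the paper's proof: factor $\C = \Csqrt\Csqrt'$ and $\tC = \tCsqrt\tCsqrt'$, take the SVD $\Csqrt'\tCsqrt = \Mx{U}\Mx{S}\Mx{V}'$, decouple $\ang{X,\tilde X}$ into a constant plus independent scalar terms $s_i\alpha_i\beta_i + v_i\alpha_i + u_i\beta_i$ (with the same $\u,\v$ as in the paper), apply \cref{lem:cumulants_1dgaussprod} termwise using additivity of cumulants, and collapse the resulting power sums into traces and quadratic forms in $\Mx{Z}=\C\tC$. The translate-back identities you flag as the crux are exactly the ones the paper verifies in its final display, so no gap remains.
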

\begin{proof}%
  Let $\Csqrt$, $\tCsqrt$ be Cholesky factors (not necessarily lower triangular)
  such that $\C=\Csqrt\Csqrt'$ and $\tC=\tCsqrt\tCsqrt'$,
  and let $\Mx{U},\Mx{S},\Mx{V}$ be the full SVD factors such that $\Csqrt' \tCsqrt = \Mx{U} \Mx{S} \Mx{V}'$.
  Let $\RV, \tRV \in \Real^{n}$ be independent standard normal random vectors.
  Then using \citet[Theorem 3.1]{gut2009probability} we have that
  $$\Csqrt\Mx{U} \RV + \m \sim \N(\m, \Csqrt\Mx{U}\Mx{U}'\Csqrt') =  \N(\m, \C) \qtext{and} \tCsqrt \Mx{V} \tRV + \tm \sim  \N(\tm, \tC).$$
  \begin{align*}
    \E\Prn{\exp\Prn{t \ang{X, \tilde X}}} &=
    \E\Prn{\exp\Prn{t\prn{\Csqrt \Mx{U} \RV +\m}^{\Tr}
        \prn{\tCsqrt \Mx{V} \tRV + \tm}}}\\
    &=  \E\Prn{\exp\Prn{ t\prn{\RV'\Mx{U}'\Csqrt'\tCsqrt \Mx{V} \tRV" + \m'\tCsqrt"\Mx{V}\tRV" + \tm'\Csqrt"\Mx{U} \RV" + \m' \tm"}}}\\
    &=
    \exp\Prn{t \m' \tm"} 
    \E\Prn{\exp\Prn{t\prn{\RV'\Mx{S}\tRV" + \m'\tCsqrt"\Mx{V}\tRV" + \tm'\Csqrt"\Mx{U} \RV"}}}
  \end{align*}
  Letting $\u = \Mx{V}'\tCsqrt'\m$ and $\v = \Mx{U}'\Csqrt'\tm$, we have that
  \begin{displaymath}
    \RV'\Mx{S}\tRV" + \m'\tCsqrt"\Mx{V}\tRV" + \tm'\Csqrt"\Mx{U} \RV" = \sum_{\ell=1}^n s_{\ell\ell} \RV[\ell]\tRV[\ell] +  {v}_{\ell}  \RV[\ell] + {u}_{\ell} \tRV[\ell].          
  \end{displaymath}
  Then, since the random variables $\{\RV[\ell],\tRV[\ell]\}_{\ell \in [n]}$ are independent, applying \cref{lem:cumulants_1dgaussprod} we get
  \begin{align*}
    \kappa_k
    &= \frac{d^k}{d t^k} \Prn{\log \Prn{\E\Prn{\exp\Prn{t \ang{X, \tilde X}}}}}\\
    &= \frac{d^k}{d t^k} \Prn{t \m' \tm"
      + \sum_{\ell=1}^n \log\Prn{\E\Prn{\exp\Prn{t\prn{s_{\ell\ell} \RV[\ell]\tRV[\ell]
              +  {v}_{\ell}  \RV[\ell] + {u}_{\ell} \tRV[\ell]}}}}}\\
    &=\begin{cases}
      \displaystyle\sum_{\ell=1}^n (k-1)! s_{\ell\ell}^k+ k! s_{\ell\ell}^{k-2}\frac{{u}_\ell^2 + {v}_\ell^2}{2} &\text{if } k \text{ is even} \medskip\\
      \displaystyle\sum_{\ell=1}^n k! s_{\ell\ell}^{k-2}{u}_\ell {v}_\ell &\text{if } k \text{ is odd and } k\ge 3 \medskip\\
      \m' \tm"&\text{if } k=1
    \end{cases}
  \end{align*}
  Finally, if $k$ is even, it holds
  
  \begin{displaymath}
    \sum_{\ell=1}^n s_{\ell\ell}^k
    = \trace\Prn{\Mx{S}^k}
    = \trace\Prn{\Mx{U}\Mx{S}^k\Mx{U}'}
    = \trace\Prn{\Prn{\Csqrt' \tCsqrt"\tCsqrt'\Csqrt"}^{\frac{k}{2}}}
    = \trace\Prn{\Mx{Z}^{\frac{k}{2}}},          
  \end{displaymath}
  \begin{align*}
    \sum_{\ell=1}^n s_{\ell\ell}^{k-2} {u}_\ell^2 &= \u' \Mx{S}^{k-2}\u = \m\tCsqrt"\Mx{V} \Mx{S}^{k-2} \Mx{V}'\tCsqrt'\m = \m'\tCsqrt"\Prn{\tCsqrt' \Csqrt"\Csqrt'\tCsqrt"}^{\frac{k-2}{2}} \tCsqrt'\m,\\
    &= \m' \tCsqrt"\tCsqrt'\Prn{ \Csqrt"\Csqrt'\tCsqrt" \tCsqrt'}^{\frac{k-2}{2}}\m = \m' \tC \Mx{Z}^{\frac{k-2}2} \m,
  \end{align*}
  and, analogously, $\sum_{\ell=1}^n s_{\ell\ell}^{k-2} {v}_\ell^2 = \tm' \Mx{Z}^{\frac{k-2}2} \C \tm$.
  On the other hand, if $k\ge 3$ and is odd,
  \begin{align*}
    \sum_{\ell=1}^n  s_{\ell\ell}^{k-2} {u}_\ell {v}_\ell &= \tm'\Csqrt"\Mx{U} \Mx{S}^{k-2} \Mx{V}'\tCsqrt'\m = \tm' \Csqrt" \Prn{\Csqrt' \tCsqrt"\tCsqrt'\Csqrt"}^{\frac{k-3}{2}} \Csqrt'\tCsqrt"\tCsqrt'\m \\
    &= \tm' \Prn{\Csqrt" \Csqrt' \tCsqrt"\tCsqrt'}^{\frac{k-3}{2}}\Csqrt" \Csqrt'\tCsqrt"\tCsqrt'\m = \tm'\Mx{Z}^{\frac{k-1}2}\m.\qedhere
  \end{align*}
\end{proof}

\subsection{Completing proof of
  \texorpdfstring
  {\cref{thm:GMMdebiasedmoments}}
  {Theorem~\ref{thm:GMMdebiasedmoments}}
}
\label{sec:proof-thm:GMMdebiasedmoments}

We give below a technical lemma that is needed in \cref{thm:GMMdebiasedmoments}
for computing the constant terms in the sum.

\begin{lemma} \label{lem:Cdelta0qsum}
Suppose that $d,q$ are non-negative integers such that $d\ge 2q$. Then
\begin{displaymath}
\sum_{k=0}^{q} C_{d,k} (-1)^k C_{d-2k,q-k} = \delta_{0q}  
\end{displaymath}
\end{lemma}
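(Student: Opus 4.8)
The plan is to reduce the whole expression to a single alternating binomial sum by expanding everything in terms of factorials. First I would rewrite the coefficient $C_{d,k} = \binom{d}{2k}\dblfact{k}$ in closed form. Since $\dblfact{k} = \frac{(2k)!}{k!\,2^k}$, the factor $(2k)!$ cancels against the denominator of the binomial coefficient, leaving the clean expression $C_{d,k} = \frac{d!}{(d-2k)!\,k!\,2^k}$. Applying the same simplification to the second factor and using the identity $(d-2k)-2(q-k) = d-2q$, I get $C_{d-2k,\,q-k} = \frac{(d-2k)!}{(d-2q)!\,(q-k)!\,2^{q-k}}$.

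The key step is to multiply these two factorial expressions. The $(d-2k)!$ in the denominator of the first exactly cancels the $(d-2k)!$ in the numerator of the second, and the two powers of two combine to $2^{-q}$, yielding
\[
  C_{d,k}\,C_{d-2k,\,q-k} = \frac{d!}{(d-2q)!\,2^q}\cdot\frac{1}{k!\,(q-k)!}.
\]
Crucially, the leading factor no longer depends on the summation index $k$, so it pulls out of the sum. What remains is
\[
  \sum_{k=0}^q C_{d,k}\,(-1)^k\, C_{d-2k,\,q-k}
  = \frac{d!}{(d-2q)!\,2^q\,q!}\sum_{k=0}^q (-1)^k\binom{q}{k}.
\]

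Finally I would recognize the inner sum as the binomial expansion of $(1-1)^q$, which is $0$ for $q\ge 1$ and $1$ for $q=0$ (adopting $0^0=1$, consistent with the paper's stated conventions on $\binom{d}{0}$). Hence the entire expression equals $\delta_{0q}$, as claimed. The hypothesis $d\ge 2q$ is precisely what ensures every factorial appearing is that of a nonnegative integer, so all the manipulations are legitimate. There is no substantive obstacle beyond careful bookkeeping; the only point deserving a moment's attention is the degenerate case $q=0$, where the prefactor reduces to $1$ and the single $k=0$ term already supplies the value $1$, matching $\delta_{00}=1$.
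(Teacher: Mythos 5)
Your proposal is correct and follows essentially the same route as the paper's proof: expand both $C_{d,k}$ and $C_{d-2k,q-k}$ into factorials, cancel the $(d-2k)!$ terms, pull out the $k$-independent prefactor, and recognize the remaining alternating sum $\sum_{k=0}^q (-1)^k\binom{q}{k} = \delta_{0q}$. The only cosmetic difference is that the paper rewrites the prefactor as $\binom{d}{2q}\frac{(2q)!}{q!\,2^q}$ before applying the delta, which is the same quantity you keep in the form $\frac{d!}{(d-2q)!\,2^q\,q!}$.
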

\begin{proof}
We have
\begin{align*}
	\alignindent{50pt} \sum_{k=0}^{q} C_{d,k} (-1)^k C_{d-2k,q-k}\\
	&= \sum_{k=0}^q \binom{d}{2k} \dblfact{k} (-1)^{k}  \binom{d-2k}{2q-2k} \dblfact{q-k}[2q-2k] \\
	&= \sum_{k=0}^q \frac{d!}{(d-2k)!(2k)!}  \frac{(2k)!}{k! 2^k}(-1)^k\frac{(d-2k)!}{(d-2q)!(2q-2k)!}\frac{(2q-2k)!}{(q-k)! 2^{q-k}} \\
	&= \sum_{k=0}^q \frac{d!}{(d-2q)!k!(q-k)!2^q}(-1)^k
	= \binom{d}{2q} \frac{(2q)!}{q! 2^q} \sum_{k=0}^q \binom{q}{k}(-1)^k\\
	&= \binom{d}{2q} \frac{(2q)!}{q! 2^q} \delta_{0q} = \delta_{0q}. \qedhere\\
\end{align*}
\end{proof}

\subsection{Completing proof of
  \texorpdfstring
  {\cref{prop:debiaseddotp}}
  {Theorem~\ref{prop:debiaseddotp}}
}
\label{sec:proof-prop:debiaseddotp}

We have here an extension of \cref{prop:Psi} for symmetric positive semidefinite
but not necessarily positive definite matrices $\C$ and $\tC$,
which is needed in the proof of  \cref{prop:debiaseddotp}.

\begin{lemma}\label{cor:Psi}
	\cref{prop:Psi} also holds if $\C,\tC \in \R^{n\times n}$ are symmetric but not positive definite. That is, we have
	\begin{align*}%
		\PsiFn{\m}{\C}{\tm}{\tC} \alignindent{-30pt}= B_d\prn{\kappa_1, \dots, \kappa_d},\\
		&=\ang****{
			\sum_{k=0}^{\floor{d/2}} \!\! C_{d,k}  \sym\prn**{ \sop[d-2k]{\m} \otimes \sop[k]{\C}},
			\sum_{k=0}^{\floor{d/2}} \!\! C_{d,k}  \sym\prn**{ \sop[d-2k]{\tm} \otimes \sop[k]{\tC}}}.	
	\end{align*}
	with $\kappa_k,\,k\in[d]$, defined as in \cref{prop:Psi}.
\end{lemma}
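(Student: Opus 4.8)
The plan is to prove the identity by a polynomial-identity (Zariski-density) argument, reducing the general symmetric case to the positive definite case already established in \cref{prop:Psi}. The point is that, although the probabilistic interpretation (via Cholesky factors, as used in the proof of \cref{prop:Psi}) breaks down once $\C$ or $\tC$ fails to be positive semidefinite, both sides of the asserted equation remain well-defined \emph{polynomials} in the entries of $\m,\C,\tm,\tC$; so an identity valid on an open set of arguments must hold everywhere. For non-definite $\C,\tC$ the symbol $\PsiFn{\m}{\C}{\tm}{\tC}$ is understood as the tensor inner product on the right, so the genuine content is the equality between the Bell-polynomial expression and that inner product.

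First I would check that both expressions are polynomial in the coordinates of the four arguments. For the tensor inner product on the right, \cref{lemma:inner_tensor_powers} expands each term $\Ang{\sym(\sop[d-2k]{\m}\otimes\sop[k]{\C}),\sym(\sop[d-2k']{\tm}\otimes\sop[k']{\tC})}$ into a finite product of scalar inner products of the entries, so the whole double sum is a polynomial. For the Bell-polynomial expression, the essential observation — and the reason \cref{eq:kappa_def} is written in this form — is that each cumulant $\kappa_k$ involves only \emph{integer} powers of $\Mx{Z}=\C\tC$, namely $\Mx{Z}^{k/2}$ for even $k$ and $\Mx{Z}^{(k-1)/2}$ for odd $k$, together with traces and quadratic forms such as $\m'\tC\Mx{Z}^{(k-2)/2}\m$. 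No matrix square roots or other non-polynomial operations occur, so every $\kappa_k$, and hence $B_d(\kappa_1,\dots,\kappa_d)$, is a polynomial in the entries of $\m,\C,\tm,\tC$.

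Next I would invoke the positive definite case. By \cref{thm:gaussian_moments}, whenever $\C,\tC$ are positive definite the tensor $\sum_{k=0}^{\floor{d/2}} C_{d,k}\,\sym(\sop[d-2k]{\m}\otimes\sop[k]{\C})$ equals the $d$th moment $\M=\E(\sop{X})$ of $X\sim\N(\m,\C)$, and similarly for $\tm,\tC$. Hence on the open cone of positive definite pairs the right-hand side equals $\Ang{\M,\tM}$, which \cref{prop:Psi} identifies with $B_d(\kappa_1,\dots,\kappa_d)$. Therefore the two polynomials from the previous paragraph agree at every point with $\C,\tC$ positive definite and $\m,\tm$ arbitrary.

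Finally I would close by density: the set of symmetric positive definite matrices is a nonempty open subset of the space of symmetric matrices, and a polynomial vanishing on a nonempty open subset of a real Euclidean space vanishes identically. Applying this to the difference of the two polynomials, regarded as a polynomial over $\R^n\times(\text{symmetric matrices})\times\R^n\times(\text{symmetric matrices})$, yields the stated identity for all symmetric $\C,\tC$ and all $\m,\tm$. The only step needing genuine care is the polynomiality of the Bell-polynomial side; once it is clear that only integer powers of $\Mx{Z}$ appear, the density argument is routine, and I anticipate no real obstacle.
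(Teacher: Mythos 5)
Your proposal is correct and follows essentially the same route as the paper: both reduce to the positive definite case via \cref{thm:gaussian_moments} and \cref{prop:Psi}, observe that both sides are polynomials in the entries of the arguments, and conclude by the fact that a polynomial vanishing on the open set of positive definite matrices vanishes on all symmetric matrices. The only cosmetic difference is that the paper proves this last density fact explicitly (restricting to the line $\Mx{I}+(\Mx{A}-\Mx{I})t$ and using that a univariate polynomial vanishing on an interval is identically zero), whereas you cite it as a standard fact; your added care in checking polynomiality of the Bell-polynomial side, via the integer powers of $\Mx{Z}=\C\tC$, is a welcome detail the paper leaves implicit.
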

\begin{proof}
	\Cref{thm:gaussian_moments} and \cref{prop:Psi} imply that this holds for all symmetric positive definite matrices. On the other hand, $\PsiFn{\m}{\C}{\tm}{\tC}$
	is a multivariate polynomial of the entries of $\C$ and $\tC$, and the result follows from our claim that if two polynomials of symmetric matrices are equal for all positive definite matrices, then they must also be equal for all other symmetric matrices.
	
	We prove this claim by contradiction. If the claim is false, then the difference of the two polynomials, which we denote by $p$, vanishes for all positive definite matrices but there exists a symmetric matrix $\Mx{A}$ such that $p(\Mx{A})\neq 0$. Consider the univariate polynomial $g(t) = p(\Mx{I} + (\Mx{A}-\Mx{I})t)$. For all $t\in \Real$ such that $|t|<\frac{1}{\|\Mx{A}-\Mx{I}\|_2}$, we have that $\Mx{I} + (\Mx{A}-\Mx{I})t  \succeq \Mx{0}$, which implies $g(t) = 0$. But since this is an univariate polynomial that vanishes in an open set, it must be $0$ for all reals. This implies that $0 = g(1)= p(\Mx{A})$, contradiction!
\end{proof}

\subsection{Derivatives of inner products of GMMs}
\label{sec:inner_product_derivatives}

We consider the
derivation of gradients for $F_1^{(d)}(\theta)$ from \cref{thm:gmm_tensor_norm}.
For simplicity, we just consider a single pair of moments, representing one term in the summation.

\begin{proposition}\label{prop:Psi_derivatives}
	Define $\PsiFn{\m}{\C}{\tm}{\tC}$ as in \cref{prop:Psi}, then
	\begin{align}  
		\FD{\PsiFn{\m}{\C}{\tm}{\tC}}{\m}
		&= \sum_{k=1}^d \binom{d}{k} B_{d-k} \prn{\kappa_1, \dots, \kappa_{d-k}}
		\FD{\kappa_k}{\m}
		, \label{eq:Psi_derivative_mu}\\
		\FD{\PsiFn{\m}{\C}{\tm}{\tC}}{\C}
		&= \sum_{k=1}^d \binom{d}{k} B_{d-k} \prn{\kappa_1, \dots, \kappa_{d-k}}
		\FD{\kappa_k}{\C} . \label{eq:Psi_derivative_Sigma}
	\end{align}
	where
	\begin{equation}
		\label{eq:kappa_derivative_mu}
		\FD{\kappa_k}{\m} =
		\begin{cases}
			k!\, \tC \Mx{Z}^{\frac{k-2}2} \m  
			& \text{if $k$ even}, \\
			k!\, \prn{\Mx{Z}'}^{\frac{k-1}{2}} \tm
			& \text{if $k$ odd},
		\end{cases}  
	\end{equation}
	\begin{equation}\label{eq:kappa_derivative_Sigma}
		\FD{\kappa_k}{\C} =
		\begin{cases}
			\displaystyle \frac{k!}{2}\tC \Mx{Z}^{\frac{k-2}2}  + 
			\frac{k!}{2} \Prn{\Mx{V} + \Mx[\tilde]{V}}& \text{if $k$ even}, \medskip\\
			\displaystyle k! \sum_{\ell=0}^{\frac{k-3}{2}}\prn{\Mx{Z}'}^\ell
			\tm \m' \tC \Mx{Z}^{\frac{k-3}{2}-\ell} 
			& \text{if $k$ odd},
		\end{cases}  
	\end{equation}
    with $\Mx{Z} = \C \tC$, and
	\begin{displaymath}
		\Mx{V} = \sum_{\ell=0}^{\frac{k-4}{2}}\tC\Mx{Z}^\ell
		\m \m'
		\tC \Mx{Z}^{\frac{k-4}{2}-\ell} \qtext{and} \Mx[\tilde]{V}= \sum_{\ell=0}^{\frac{k-2}{2}} \prn{\Mx{Z}'}^\ell 
		\tm \tm'
		\Mx{Z}^{\frac{k-2}{2}-\ell} 
	\end{displaymath}

\end{proposition}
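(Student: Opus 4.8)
The plan is to reduce everything to the scalar calculus of the cumulants $\kappa_k$ by exploiting the Bell-polynomial representation of $\Psi^{(d)}$ supplied by \cref{prop:Psi}. Since $\Psi^{(d)}(\m,\C,\tm,\tC)=B_d(\kappa_1,\dots,\kappa_d)$ and each $\kappa_k$ depends on $\m$ and $\C$, the chain rule gives $\FD{\Psi^{(d)}}{\m}=\sum_{k=1}^d \frac{\partial B_d}{\partial x_k}(\kappa_1,\dots,\kappa_d)\,\FD{\kappa_k}{\m}$, and similarly for $\C$. Substituting the Bell-polynomial derivative \cref{eq:bell-derivative}, i.e.\ $\frac{\partial B_d}{\partial x_k}=\binom{d}{k}B_{d-k}(x_1,\dots,x_{d-k})$, immediately yields \cref{eq:Psi_derivative_mu,eq:Psi_derivative_Sigma}. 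The whole task therefore collapses to differentiating the explicit formulas for $\kappa_k$ in \cref{eq:kappa_def} with respect to $\m$ and to $\C$.

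The gradients in $\m$ are the easy half. For odd $k$, $\kappa_k=k!\,\tm'\Mx{Z}^{(k-1)/2}\m$ is linear in $\m$, so $\FD{\kappa_k}{\m}=k!\,(\Mx{Z}')^{(k-1)/2}\tm$. For even $k$ the only $\m$-dependent piece is the quadratic form $\tfrac{k!}{2}\m'\tC\Mx{Z}^{(k-2)/2}\m$; here I would first observe that the matrix $\tC\Mx{Z}^{(k-2)/2}$ is symmetric, being the palindromic product $\tC\C\tC\cdots\C\tC$, so the gradient of the quadratic form is $k!\,\tC\Mx{Z}^{(k-2)/2}\m$, which is exactly \cref{eq:kappa_derivative_mu}.

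The gradients in $\C$ are the crux, and where the bookkeeping resides. The trace contribution is handled by the identity $\FD{}{\C}\trace\big((\C\tC)^{k/2}\big)=\tfrac{k}{2}\,\tC\Mx{Z}^{(k-2)/2}$, obtained from cyclicity of the trace together with $\partial(\C\tC)/\partial C_{ij}=E_{ij}\tC$; multiplying by $(k-1)!$ produces the leading term $\tfrac{k!}{2}\tC\Mx{Z}^{(k-2)/2}$ of \cref{eq:kappa_derivative_Sigma}. For the bilinear and quadratic forms I would expand the matrix power $\Mx{Z}^{p}=(\C\tC)^{p}$ by the product rule over its $p$ occurrences of $\C$ (plus the extra standalone $\C$ appearing in $\tm'\Mx{Z}^{(k-2)/2}\C\tm$), using that differentiating $\mathbf{a}'(\cdots)\mathbf{b}$ in a single slot sandwiches $E_{ij}$ and hence contributes the rank-one matrix $\mathbf{a}(\cdots)\mathbf{b}'$ to the gradient. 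Collecting the resulting telescoping sums reproduces precisely $\Mx{V}$, $\Mx[\tilde]{V}$, and the odd-$k$ sum in \cref{eq:kappa_derivative_Sigma}.

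The main obstacle is organizational rather than conceptual: because $\Mx{Z}=\C\tC$ is not symmetric ($\Mx{Z}'=\tC\C$), one must track transposes carefully when shuttling factors across the rank-one updates, and one must re-index the product-rule sums and recognize the intermediate alternating products, such as $\C(\tC\C)^{j}\tC=\Mx{Z}^{j+1}$, that allow the sums to close into the stated forms. As a consistency check on the index ranges and on the $k!$ versus $\tfrac{k!}{2}$ factors, one can specialize all covariances to be diagonal, whereupon these expressions must collapse to the entrywise-product formulas derived in \cref{sec:implementation}.
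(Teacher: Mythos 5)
Your proposal is correct and takes essentially the same route as the paper's proof: the chain rule together with the Bell-polynomial derivative identity \cref{eq:bell-derivative} reduces everything to the gradients of the cumulants, which are then computed using the symmetry of the palindromic products $\tC\Mx{Z}^{r}$ and rank-one product-rule expansions of the matrix powers. The only cosmetic difference is that you derive the matrix-derivative identities from first principles via $E_{ij}$ insertions, whereas the paper cites them from \citet{PePe12} and packages the differentiation through $\Mx{Z}=\C\tC$ as an explicit chain-rule formula.
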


\begin{proof}
\cref{eq:Psi_derivative_mu,eq:Psi_derivative_Sigma} follow from the formula for the derivative of Bell polynomials \cref{eq:bell-derivative} and the chain rule. Regarding \cref{eq:kappa_derivative_mu}, we use \citet[Eqs. (69) and (81)]{PePe12},
$$\FD{\prn{\a'\x}}{\x} = \a \qtext{and} \FD{\prn{\x'\Mx{B} \x}}{\x} = (\Mx{B} + \Mx{B}')\x,$$
that $\C, \tC$ are symmetric, and for all non-negative integer $r$,
\begin{equation}\label{eq:tCZr_symmetric}
	(\tC\Mx{Z}^{r})^{\Tr}  =\prn{\tC \C}^{r} \tC= \underbrace{\tC \C \cdots \tC \C}_{r \text{ times}} \tC = \tC \underbrace{\C \tC \cdots \C  \tC}_{r \text{ times}} = \tC \Mx{Z}^{r}. 
\end{equation}
We obtain
\begin{displaymath}
	\FD{\kappa_k}{\m} =
	\begin{cases}
	 \FD{\prn*{	\frac{k!}{2}\m'\tC \Mx{Z}^{\frac{k-2}2} \m}}{\m} = k!\tC \Mx{Z}^{\frac{k-2}2}\m
		&\text{if } k \text{ is even},\\
	     \FD{\Prn{k! \tm'\Mx{Z}^{\frac{k-1}2}\m}}{\m} = k! \prn{\Mx{Z}'}^{\frac{k-1}2} \tm
		&\text{if } k \text{ is odd}.
	\end{cases}
\end{displaymath}
As for \cref{eq:kappa_derivative_Sigma}, we use \citet[Eqs. (91) and (121)]{PePe12}
\begin{align*}
	\FD{\prn*{\Vc{a}'\Mx{X}^n\Vc{b}}}{\Mx{X}}
	&=\sum_{i=0}^{n-1} (\Mx{X}^i)^{\Tr} \Vc{a}\Vc{b}'(\Mx{X}^{n-1-i})^{\Tr},
	\\
	\FD{\prn*{\trace(\Mx{X}^k)}}{\Mx{X}}
	&= k\prn{\Mx{X}^{k-1}}^{\Tr}.
\end{align*}
We also use \cref{eq:tCZr_symmetric} and the following formula, which is valid for any differentiable function $f$, and follows from the chain rule:
$$\FD{(f(\C, \Mx{Z}))}{\C} = \FD{\!f}{\Mx{Z}}\,(\C, \Mx{Z})\tC + \FD{\!f}{\C}\,(\C, \Mx{Z}).$$
Using these we obtain
\begin{align*}
	\FD{\trace\prn*{\Mx{Z}^{\frac{k}2}}}{\C} &= \frac{k}{2}\tC \Mx{Z}^{\frac{k-2}2}\\
	\FD{\prn*{\m'\tC \Mx{Z}^{\frac{k-2}2} \m}}{\C} &= \sum_{\ell=0}^{\frac{k-4}{2}}\prn{\Mx{Z}'}^\ell \tC
	\m \m' \prn{\Mx{Z}'}^{\frac{k-4}{2}-\ell} \tC
    = \sum_{\ell=0}^{\frac{k-4}{2}} \tC \Mx{Z}^\ell
	\m \m'
	\tC \Mx{Z}^{\frac{k-4}{2}-\ell} \\
	\FD{\prn*{\tm' \Mx{Z}^{\frac{k-2}2} \C \tm}}{\C} &=
	\sum_{\ell=0}^{\frac{k-4}{2}}\prn{\Mx{Z}'}^\ell \tm
	\tm' \C \prn{\Mx{Z}'}^{\frac{k-4}{2}-\ell} \tC + \prn{\Mx{Z}'}^{\frac{k-2}{2}} \tm \tm' \\
	&= \sum_{\ell=0}^{\frac{k-2}{2}} \prn{\Mx{Z}'}^\ell 
	\tm \tm'
	\Mx{Z}^{\frac{k-2}{2}-\ell} \\
	\FD{\prn*{\tm'\Mx{Z}^{\frac{k-1}2}\m}}{\C} &= \sum_{\ell=0}^{\frac{k-3}{2}}\prn{\Mx{Z}'}^\ell
	\tm \m' \prn{\Mx{Z}'}^{\frac{k-3}{2}-\ell} \tC = \sum_{\ell=0}^{\frac{k-3}{2}}\prn{\Mx{Z}'}^\ell
	\tm \m' \tC \Mx{Z}^{\frac{k-3}{2}-\ell}. \qedhere
\end{align*}
\end{proof}

\section*{Acknowledgments}

The work of J.~Kileel and J.M.~Pereira was partially supported by start-up grants provided to J.~Kileel by the College of Natural Sciences and the Oden Institute of Computational Engineering and Sciences at the University of Texas at Austin. The work of J.M.~Pereira was also partially supported by grants AFOSR MURI FA9550-19-1-0005 and NSF HDR-1934932. The work of T.~G.~Kolda was partially supported by a Distinguished Visiting Professorship in the 
Industrial Engineering and Management Sciences Department of Northwestern University. We would like to thank the Reddit user \verb|u/ForceBru| for feedback on the paper.

\bibliographystyle{abbrvnatmod}


\end{document}

%
%
%
%
